\documentclass[a4paper]{article}[12pts]

\usepackage[english]{babel}
\usepackage[utf8x]{inputenc}
\usepackage[T1]{fontenc}
\usepackage{enumitem}

\normalsize

\usepackage[a4paper,top=1in,bottom=1in,left=1in,right=1in,marginparwidth=1in]{geometry}

\usepackage{setspace}
\usepackage{amsmath}
\usepackage{amssymb} 
\usepackage{amsthm}
\usepackage{array}
\usepackage{amsfonts, mathtools}
\usepackage{algorithm,algpseudocode}
\usepackage{bm}
\usepackage{bbm}
\usepackage{comment}
\usepackage{graphicx}
\usepackage{multirow, booktabs}
\usepackage{makecell}
\usepackage{caption}
\usepackage{subcaption}
\usepackage[colorinlistoftodos]{todonotes}
\usepackage[colorlinks=true, allcolors=blue]{hyperref}
\usepackage{thmtools}
\usepackage{thm-restate}
\usepackage{hyperref}
\usepackage{cleveref}

\newcolumntype{P}[1]{>{\centering\arraybackslash}p{#1}}

\usepackage{siunitx} 

\newcommand{\bmu}{\boldsymbol{\mu}}
\newcommand{\E}{\mathbb{E}}

\newtheorem{lemma}{Lemma}
\newtheorem{example}{Example}
\newtheorem{assumption}{Assumption}
\newtheorem{theorem}{Theorem}

\newtheorem{corollary}{Corollary}
\newtheorem{definition}{Definition}
\newtheorem{remark}{Remark}

\newcommand{\bz}{\mathbf{z}}
\newcommand{\boe}{\boldsymbol{\eta}}

   \newtheoremstyle{TheoremNum}
        {\topsep}{\topsep}              
        {\itshape}                      
        {}                              
        {\bfseries}                     
        {.}                             
        { }                             
        {\thmname{#1}\thmnote{ \bfseries #3}}
    \theoremstyle{TheoremNum}

   \newtheoremstyle{TheoremNum}
        {\topsep}{\topsep}              
        {\itshape}                      
        {}                              
        {\bfseries}                     
        {.}                             
        { }                             
        {\thmname{#1}\thmnote{ \bfseries #3}}
    \theoremstyle{TheoremNum}

\usepackage{natbib}

\definecolor{rose}{rgb}{1.0, 0.33, 0.64}

\title{Users as Annotators: LLM Preference Learning from Comparison Mode}
\author{Zhongze Cai, \ \  Xiaocheng Li}
\date{\small
Imperial College Business School, Imperial College London
}

\begin{document}
\maketitle
\onehalfspacing

\begin{abstract}
Pairwise preference data have played an important role in the alignment of large language models (LLMs). Each sample of such data consists of a prompt, two different responses to the prompt, and a binary label indicating which of the two responses is better. The labels are usually annotated by professional human annotators. In this paper, we consider an alternative approach to collect pairwise preference data -- user annotation from comparison mode. With the increasingly wider adoption of LLMs among the population, users are contributing more and more of their preference labels through their daily interactions with the LLMs. The upside of such labels is that users are the best experts in judging the responses to their own queries/prompts, but the downside is the lack of quality control in these labels. In this paper, we consider a new idea of generating two responses from two different models or two different versions of the same model. The asymmetry allows us to make an inference of the user's data quality through our proposed user behavior model. We develop an expectation-maximization algorithm to estimate a latent quality factor of the user, and filter users' annotation data accordingly. The downstream task shows the effectiveness of our approach in both capturing the user behavior and data filtering for LLM alignment.
\end{abstract}

\def\thefootnote{}\relax\footnotetext{Corresponding to Zhongze Cai (\url{z.cai22@imperial.ac.uk}).}

\section{Introduction}

The development of large language models (LLMs) relies on human-labeled data to align model outputs with user preferences and human values. This alignment can take multiple forms. In supervised fine-tuning (SFT), annotators provide high-quality demonstrations of responses to user prompts. This enables the model to directly imitate expert behavior \citep{ouyang2022training, touvron2023llama}. Another more scalable approach is to collect preference data, where annotators compare pairs of model outputs and indicate which one they prefer. Such preference-based alignment methods include reinforcement learning from human feedback (RLHF) \citep{christiano2017deep, bai2022training} and direct preference optimization (DPO) \citep{rafailov2023direct}. Major LLM companies employ teams of full-time annotators to label preference data for alignment pipelines like RLHF and DPO. While effective, this approach is costly and inherently limited in scale due to the expense of expert annotators. 

In this paper, we consider an alternative approach -- to collect preference feedback directly from end users of the LLMs. For example, OpenAI’s ChatGPT features a \textit{comparison mode} (see Figure \ref{fig:demo_prefer_this}), in which users are occasionally shown two responses to the same query and asked to select their preferred one \citep{achiam2023gpt}. This approach leverages the vast user base to collect a huge amount of data. It also provides an accessible way for engaged users to express their preferences, with minimal additional effort beyond their normal interactions with the LLM agent.

\begin{figure}[ht!]
\centering
\includegraphics[width=0.7\linewidth]{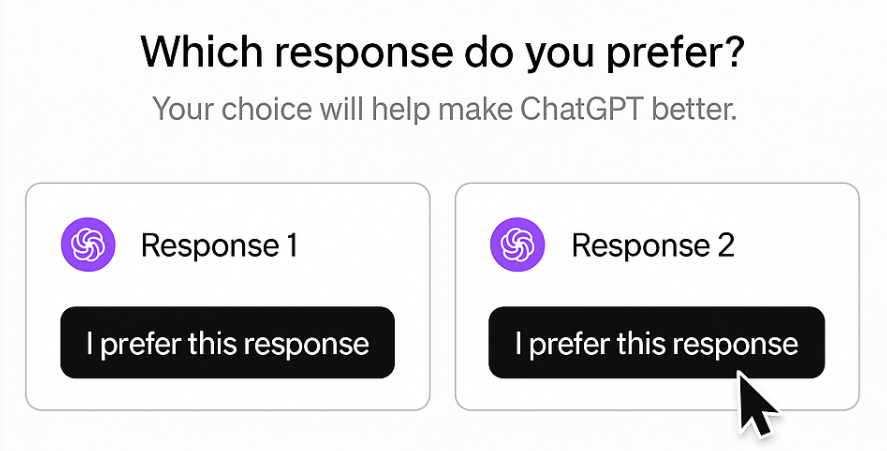}
\caption{ChatGPT's comparison mode. Two responses are generated, and the user clicks the preferred one.}
\label{fig:demo_prefer_this}
\end{figure}

However, user-collected feedback faces a critical limitation: a lack of quality control. Users are neither incentivized nor obligated to provide consistent judgments, which makes it difficult to separate high-quality feedback from noisy or careless inputs. Expert validation is impractical at scale, as it would require reviewing each unique prompt–response pair. Likewise, common quality-control techniques such as golden questions or self-consistency checks \citep{liu2025humans} are unsuitable in this setting, since platforms cannot control user queries or require repeated annotations. 


In this paper, we introduce a framework to assess the quality of user-annotated preferences and filter the alignment data accordingly. The pipeline is illustrated in Figure \ref{fig:workflow}.

\begin{itemize}
\item To our knowledge, we propose the first probabilistic model to capture users' behavior in annotating preference data in the comparison mode. 
\item We introduce a slight twist of the existing preference pipeline in that we have the two responses generated from two different LLMs. Based on this setup, we develop an EM algorithm that estimates the parameters of the user behavioral model and infers the user's attentiveness level as a posterior. 
\item Based on the inferred attentiveness level, we construct alignment datasets and test the idea of different LLMs. Our numerical experiments illustrate the effectiveness of both the EM algorithm and the pipeline in terms of improving the alignment performance. 
\end{itemize}

We will discuss some related works throughout the paper and defer more discussions to Appendix \ref{sec:related_lit}.

\begin{figure}[ht!]
\centering
\includegraphics[width=0.6\linewidth]{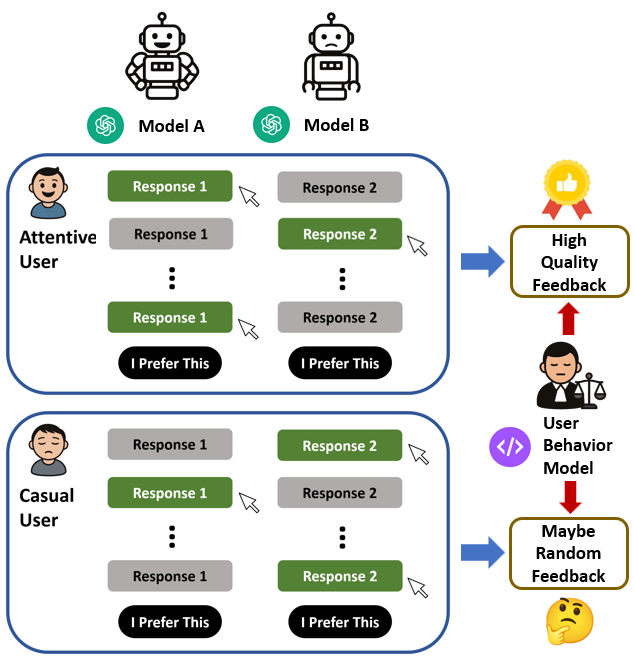}
\caption{The proposed pipeline estimates the quality of users’ preference labels in the comparison mode. For each prompt, two responses are generated from different LLMs. Each user’s preference history is tracked over time. When Model A is more powerful than Model B, \textit{attentive users} are expected to favor Model A more often, whereas \textit{casual users} might choose between the two models with equal probability. These preference patterns are then used to fit a user behavior model and infer the attentiveness level of each user. Then only the labels from the attentive users will be used for the downstream alignment tasks.}
\label{fig:workflow}
\end{figure}

\section{Reward Modeling and User Behavioral Model}

\subsection{Reward Modeling}\label{sec:RM_Modeling}

Reward modeling (or reward models) is one key component in the alignment/post-training of LLMs \citep{bai2022training,ouyang2022training}. The training of a reward model (RM) generally relies on a dataset of 
$$\mathcal{D} = \{(x_{i}, y_{i1}, y_{i2}, z_i)\}_{i=1}^n.$$
In some cases, there might be exceptions, such as rule-based or LLM-judge RMs, but our paper will mainly focus on the so-called neural RM trained on a dataset like $\mathcal{D}$. Here $x_i\in \mathcal{X}$ denotes a prompt, $y_{i1},y_{i2}\in\mathcal{Y}$ denote two candidate responses to $x_i$, and the label $z_i\in \{0,1\}$ denotes the human preference between the two responses ($z_i=1$ if $y_{i1}$ is better and $z_i=0$ if $y_{i2}$ is better). The RM $r:\mathcal{X}\times \mathcal{Y} \rightarrow \mathbb{R}$ is trained from $\mathcal{D}$ in a way that it assigns a score $r(x,y)$ for a prompt-response pair $(x,y)$. Ideally, a high score of $r(x,y)$ indicates $y$ is a good response to $x$, and then one can use $r(x,y)$ to align the LLM -- increasing the probability of generating a response $y$ with a high $r(x,y)$.

Thus, the signal origin of all the LLM alignments is the dataset $\mathcal{D}.$ In the canonical setup, the two responses can be generated as two random trials from the current to-be-aligned LLM 
\begin{equation}
y_{i1}, y_{i2}\overset{\text{i.i.d.}}\sim P(\cdot|x_i)
\label{eqn:y_12_same_model}
\end{equation}
where $P$ denotes the probability distribution specified by the LLM, and the two responses are two i.i.d. generations. Then $z_i$ is usually labeled by some \textit{human annotators} to indicate a preference between the two responses. 

\textbf{Users as annotators.} With the skyrocketing
usage of LLMs, one issue is that the prompt $x_i$ becomes more diverse, possibly covering topics that human annotators might not be familiar with. Meanwhile, users have much better domain knowledge to judge the qualities of the two responses, and they are in fact doing so in the comparison mode as Figure \ref{fig:demo_prefer_this}. However, one problem is that users are not incentivized to produce the correct label $z_i$; they may casually randomly label $z_i$ from $\{0,1\}$. Comparatively, for human annotators hired by the LLM company or the data company, one may use various quality-control or monitoring strategies to ensure that the annotators are fully committed when they label samples. In this paper, we try to address the problem with a new annotation system to enjoy the best of both worlds: good domain knowledge and high-quality annotations.

The key design of our framework (also as displayed in Figure \ref{fig:workflow}) is to have the two responses generated from two different models:
\begin{equation}
y_{i1}\sim P_{A}(\cdot|x_i), \ \ y_{i2}\sim P_{B}(\cdot|x_i)
\label{eqn:y_12_diff_model}
\end{equation}
where $P_{A}$ and $P_{B}$ denote two different LLMs. Compared to (\ref{eqn:y_12_same_model}), the asymmetry in the response generation (\ref{eqn:y_12_diff_model}) gives us a handle to filter out and remove low-quality annotations, and thus creates a path for harnessing user-generated annotations. The two models $P_{A}$ and $P_{B}$ may come from different versions of the same model family, or two different model post-training checkpoints.

In general, the RM literature considers a probabilistic preference model. In the context of this two-model system, let
$$p(x_i) \coloneqq \mathbb{P}(y_{i1}>y_{i2}|x_i) $$
where the two responses are generated following (\ref{eqn:y_12_diff_model}). The probability should be interpreted as the proportion of the whole human population that prefers $y_{i1}$ to $y_{i2}$. 
Accordingly, the label $z_i$ should be assigned by 
\begin{equation}
\mathbb{P}(z_{i}=1|x_i) = 1- \mathbb{P}(z_{i}=0|x_i) = p(x_i)
\label{eqn:z}
\end{equation}
We note $p(x_i)$ should be interpreted as the overall probability that the model $P_{A}$ generates a better response than the model $P_{B}$ given the prompt $x_i$. When the two models $P_{A}$ and $P_{B}$ are the same, as the canonical setup of RM, $p(x_i) = 1/2.$ But when they are different, $p(x_i) \neq 1/2$ most likely. This provides a handle for data filtering. In the next section, we introduce a statistical model to capture the user's annotation behavior.

\subsection{User Behavioral Model}

The goal of formulating a behavioral model is to distinguish the attentive users who carefully select the label from the casual users who randomly pick one. 
Consider a set of users indexed by $j=1,...,m.$ Each user $j$ annotates $n_j$ samples in the comparison mode: 
$$\mathcal{D}_j = \left\{\left(x_{i}^{(j)}, y_{i1}^{(j)}, y_{i2}^{(j)}, z_{i}^{(j)}\right)\right\}_{i=1}^{n_j}.$$
As mentioned earlier, one cannot assume the users are as committed as the full-time annotators and that they follow (\ref{eqn:z}) in assigning the labels $z_{i}^{(j)}$. To capture their behavior, we consider the following annotation model
\begin{equation}
\mathbb{P}\left(z_i^{(j)}=1\big|x_i^{(j)}\right)=1/2+\eta_j\cdot \left(p(x_i^{(j)})-1/2\right)
\label{eqn:p_user}
\end{equation}
where $\eta_j\in[0,1]$ is a user-specific parameter. The two extremes: when $\eta_j=0$, user $j$ assigns the label $z_i^{(j)}$ randomly by fair coin tossing; when $\eta_j=1$, user $j$ acts as a perfect annotator that follows (\ref{eqn:z}). In this light, $\eta_j$ can be interpreted as the \textit{attentiveness} level of the user; a larger value indicates that the user is more attentive and committed to contributing high-quality annotations. Unfortunately, it is unobservable to the LLM company and has to be inferred from the data. We assume $\eta_j$ follows an unknown distribution $\mathcal{P}_{\eta}$,
$$\eta_j \sim \mathcal{P}_{\eta}.$$
Then the task of obtaining high-quality data reduces to distinguishing attentive users from casual users, or to decide for some pre-specified level $\eta^*\in(0,1)$ whether
$$\eta_{j}\ge \eta^* \text{ \ or \ } \eta_{j}< \eta^*.$$
Now the asymmetry of the two models $P_A$ and $P_B$ plays a role. We can use the annotation data to infer $\eta_j$. Surprisingly, as we will see in the next section, such an inference doesn't involve the prompt and the pair of responses, although involving them through some more complicated approaches (built upon our behavioral model) may improve the inference performance.

\section{Attentiveness Inference via Expectation Maximization}

In this section, we present our approach of inferring $\eta_j$ and how we use it to filter out casual users. We first state a property of the user annotations. The result comes from
\begin{lemma}\label{lem:simple_prob}
Suppose user $j$'s prompt distribution follows $P_{\mathrm{prompt}}^{(j)}$, the following holds under the user behavioral model (\ref{eqn:p_user})
\begin{equation}\label{eq:prob_of_zij}
\mathbb{P}\left(z_i^{(j)}=1\right)=1/2+\eta_j\cdot \left(\mu_j-1/2\right),
\end{equation}
where $\mu_j \coloneqq \mathbb{E}[p(x_i^{(j)})]$ where the expectation is taken w.r.t. $x_i^{(j)}\sim P_{\mathrm{prompt}}^{(j)}$.
\end{lemma}
Intuitively, Lemma \ref{lem:simple_prob} says that the extent to which the preference of a user $j$ for model $P_A$ deviates from $\mu_j$ is determined by the attentiveness level $\eta_j$. The technical implication of Lemma \ref{lem:simple_prob} is that we can express the likelihood of observing a user's annotations in a very tractable format. Specifically, the log-likelihood of seeing user annotations $\{z_{i}^{(j)}\}_{i=1}^{n_j}$ for $j=1,...,m$ is
\begin{equation}\label{eq:likelihood_formula_L}
     L(\theta):= \sum_{j=1}^m \log \Bigg( \int\limits_{\eta_j\in[0,1]} \exp\Big(\sum_{i=1}^{n_j} l\left(z_{i}^{(j)}\big\vert\mu_j, \eta_j\right)\Big) \text{d} \mathcal{P}_\eta(\eta_j)\Bigg)
\end{equation}
where the parameter $\theta \coloneqq (P_\eta, \bmu)$ encapsulates all the unknown parameters that specify the attentiveness distribution $\mathcal{P}_\eta$ and $\bmu\coloneqq (\mu_1,\ldots, \mu_m)$. The attentiveness level $\eta_j$ in the above expression should be interpreted as \textit{latent} variables that are unobserved, so we need to perform a marginalization with respect to them. Here the likelihood of each observation is implied from Lemma \ref{lem:simple_prob}
\[
l\left(z_{i}^{(j)}\big\vert\mu_j, \eta_j\right) := z_{i}^{(j)}\cdot \log (1/2 + \eta_j\cdot (\mu_j - 1/2)) + (1-z_{i}^{(j)})\cdot \log (1/2 - \eta_j\cdot (\mu_j - 1/2)).
\]
Now the task becomes to estimate the unknown parameters $\theta$ from the annotation data and then infer $\eta_j$ for each user by calculating the posterior.

\subsection{Parameter Estimation with EM}\label{sec:para_estimate_EM}

The log-likelihood function \eqref{eq:likelihood_formula_L} involves an integration with respect to the latent variables $\eta_j$'s, and thus can be hard to optimize directly. Yet it takes a natural form for us to apply the expectation-maximization (EM) algorithm, which alternates between taking expectation with respect to the latent variables and optimizing over the parameters.


\begin{algorithm}[ht!]
    \caption{EM for User Behavior Model}
    \label{alg:EM_UBM}
    \begin{algorithmic}[1] 
    \Statex \textbf{Input}: User annotations $\{z_{i}^{(j)}\}_{1\leq i\leq n_j, 1\leq j\leq m}$, initial parameter estimations $\theta^{(1)} = (P_\eta^{(1)}, \bmu^{(1)})$\nonumber
    \For{$t=1, 2, \ldots$, while not converge,}
    \State\label{step:compute_posterior}   For each user $j$, compute the posterior of $\eta_j$ conditional on the estimated parameters $\theta^{(t)}$:
    $$
    \eta_j \mid \theta^{(t)} \propto \exp\left(\sum_{i=1}^{n_j}l\left(z_{i}^{(j)}\mid \mu_j^{(t)}, \eta_j\right)\right)P_\eta^{(t)}(\eta_j)
    $$
    \Statex\hspace{\algorithmicindent}  \textcolor{blue}{\%\% \textit{E-step}}
    \State\label{step:E-step} Compute the complete-data log-likelihood:
    $$
        Q(\theta\mid \theta^{(t)}) := \sum_{j=1}^m \mathbb{E}_{\eta_j}\Big[ \log P_\eta(\eta_j) \mid \theta^{(t)}\Big] + \sum_{j=1}^m \sum_{i=1}^{n_j} \mathbb{E}_{\eta_j}\Big[ l(z_{i}^{(j)}\mid \mu_j, \eta_j)\mid \theta^{(t)} \Big]
    $$
    where the expectation is taken with respect to the posterior of $\eta_j$ as above
    \Statex\hspace{\algorithmicindent}  \textcolor{blue}{\%\% \textit{M-step}}
    \State\label{step:M-step} Update parameters with regularization $\mathcal{R}$:
    $$
    \theta^{(t+1)} \gets \arg\max_{\theta\in \Theta} \Big(Q(\theta\mid \theta^{(t)}) + \mathcal{R}(\theta)\Big)
    $$
    \EndFor
\State \textbf{Output}: Final parameter estimate $\theta^{(T)}$
\end{algorithmic}
\end{algorithm}

Algorithm \ref{alg:EM_UBM} presents a standard EM implementation for estimating the parameters of the user behavioral model. In the algorithm, Step \ref{step:compute_posterior} computes the posterior distribution of the latent variables $\eta_j$ given the data and the current parameter estimate $\theta_t$. Step \ref{step:E-step} computes the expected complete-data log-likelihood $Q(\theta\mid \theta^{(t)})$ by marginalizing out $\eta_j$'s according to posterior. In certain cases (like Example 1 below), we may derive a closed-form expression for $\eta_j\mid \theta^{(t)}$ and thus for the calculation of the expectation. When the closed form is not available, one can utilize classic numerical methods, such as Markov Chain Monte Carlo (MCMC), to compute the expectation. 
The Step \ref{step:M-step} of Algorithm \ref{alg:EM_UBM} is the maximization step. It updates the parameter estimate by maximizing the expected complete-data log-likelihood $Q(\theta\mid \theta^{(t)})$ over the parameter space $\Theta$. Once converged, the algorithm returns the parameter at the last iteration as the output.

We note that a regularization term $\mathcal{R}(\theta)$ is included in the maximization step, which is particularly important for addressing the \textit{unidentifiability} issue. As an example, suppose the true parameter $\mu_j=0.6$, and user $j$ has full attentiveness $\eta_j=1$; while another parameterization sets $\mu_j=1$ and $\eta_j=0.2$. Then, according to \eqref{eq:prob_of_zij}, both parameterizations would imply the same distribution for each annotation $z_i^{(j)}$. In other words, the two parameterizations are statistically indistinguishable. The role of $\mathcal{R}(\theta)$ is to incorporate prior knowledge and 
guide the optimization procedure. For instance, if it is known a priori that the overall preference probability $\mu_j$'s are typically no greater than $0.7$ (i.e., model B is not significantly worse than model A), then we may set $\mathcal{R}(\theta) = \sum_{j=1}^m \max\{\log \mathbf{1}_{[0, 0.7]}(\mu_j), -M\}$ for some large $M$, where $\mathbf{1}_{A}(\cdot)$ is the indicator function on subset $A$. Including $\mathcal{R}(\theta)$ safely excludes the second parameterization in the above, where $\mu_j=1$. More generally, a natural choice of $\mathcal{R}$ is the log-prior distribution on $\theta$, in which case maximizing $Q(\theta \mid \theta^{(t)}) + \mathcal{R}(\theta)$ coincides exactly with a maximum a posteriori (MAP) EM update (see Section 1.3 of \citet{SIG-034}).


Now we provide two examples of $\mathcal{P}_{\eta}$, and both will be revisited in the later numerical experiments. Other choices for $\mathcal{P}_\eta$ are also possible such as mixtures of Betas or logistic-normal mixture distributions. These possibilities are discussed further in Appendix \ref{apx:extended_discussion_of_EM}.


\begin{example}\label{examp:1}
    \textbf{$\mathcal{P}_\eta$ as a two-point distribution}. Consider $\mathcal{P}_\eta$ as a two-point discrete distribution:
\[
\mathcal{P}_\eta = q_1 \delta_{\underline{\eta}} + q_2 \delta_{\bar{\eta}},
\]
here $\delta_{\cdot}$ denotes a Dirac delta point mass, $q_1 + q_2=1$, and $0\leq \underline{\eta} < \bar{\eta}\leq 1$. This distribution says that there are two types of users, the casual users with attentiveness level $\underline{\eta}$ and the attentive users with attentiveness level $\bar{\eta}$. The model parameters are $\theta = (q_1, \underline{\eta}, \bar{\eta}, \bmu)$, with $q_2$ always equal to $1-q_1$. 
\end{example}


\begin{example}\label{examp:2}
\textbf{$\mathcal{P}_\eta$ as a Beta distribution}. Another example is $\mathcal{P}_\eta$ denotes a Beta distribution:
\[
\mathcal{P}_\eta = \text{Beta}(\alpha, \beta) \sim \dfrac{\eta^{\alpha-1} (1-\eta)^{\beta-1}}{B(\alpha, \beta)},
\]
with $\alpha, \beta > 1$. In this setting, attentiveness is modeled as a continuous random variable: most users have moderate attentiveness, while only a small proportion are either fully attentive or fully casual. The model parameters are $\theta = (\alpha, \beta, \bmu)$. 
\end{example}


Throughout the remainder of the paper, we make the following assumption when analyzing and evaluating the algorithm.

\begin{assumption}\label{ass:mu}
We assume there is a known $\mu$ such that $\mu_1=\cdots = \mu_m =\mu$.  
\end{assumption}
The implication of this assumption is twofold. First, over the prompt distributions of different users, the average win rate of model A over model B is the same, so that $\mu_j \equiv \mu$ for $j=1,\ldots,m$. Second, the value of $\mu$ is known. We make the following remarks on this assumption. For the first point, a sufficient condition is that the prompt distributions of different users, $P_{\mathrm{prompt}}^{(j)}$, are identical. This condition is plausible when the selected prompts are drawn from semantically similar topics. In practice, when this condition does not hold, one can still apply the inference framework by either (i) clustering users or (ii) incorporating the prompts as contextual information. We defer further discussion to Appendix~\ref{subsec:uneq_mu}. For the second point, note that $\mu$ can be viewed as a universal quantity that does not depend on the attentiveness level of any individual user. Therefore, its estimation is not tied to the estimation of user-specific quantities such as $\mathcal{P}_{\eta}$ and $\eta_j$'s. In practice, one can estimate $\mu$ using a few tens of samples, where the prompts $x_i$'s are sampled from the prompt distribution and the labels $z_i$'s are annotated by engineers or language experts. In Appendix~\ref{appdix:pref_different_ds}, we report estimates of $\mu$ across (i) several widely used preference datasets, corresponding to different prompt distributions, and (ii) different pairs of LLMs. In this sense, we believe the assumption as mild, serving primarily for analytical convenience rather than imposing a practical restriction.

The technical benefits of Assumption \ref{ass:mu} are that (i) it creates more analytical structures in both the expectation and maximization steps in the algorithm, and (ii) it enables a cleaner theoretical analysis of the algorithm. In particular, Examples~\ref{examp:1} and~\ref{examp:2} admit simple analytical forms under Assumption~\ref{ass:mu}. We emphasize though that we don't exclude the possibility of having $\mu_j$'s being different and unknown. Both the user behavior model and Algorithm \ref{alg:EM_UBM} are compatible without Assumption \ref{ass:mu}.

\subsection{Global Convergence Guarantee}

Generally, the EM algorithm only guarantees a convergence to a stationary point or a local minimum, but not to the global minimum. Interestingly, the user behavioral model considered here exhibits a nice structure that admits only one local minimum. This enables the following theoretical guarantee of Algorithm \ref{alg:EM_UBM}.
\begin{theorem}[Informal]\label{thm:simple_two_mass_converge} Let $\hat{\theta}$ denote the converged parameter estimate obtained as the output of Algorithm \ref{alg:EM_UBM} and $\theta^*$ denote the true parameter. Under Assumption \ref{ass:mu} and some regularity condition on the continuity and boundedness of $\mathcal{P}_\eta$, we have for any $\epsilon>0$,
\[\lim_{m,n_j\rightarrow \infty}\mathbb{P}(\|\hat{\theta}-\theta^*\|\ge \epsilon)\rightarrow 0.
\]
\end{theorem}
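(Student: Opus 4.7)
The plan is to decompose consistency into three pieces: identifiability of $\theta$ from the observed annotation distribution under Assumption~\ref{ass:mu}, asymptotic consistency of the maximum likelihood estimator (MLE) of the objective $L(\theta)$, and a guarantee that the EM iterates converge to the MLE rather than a suboptimal stationary point. The first two pieces are standard once the model structure is extracted; the third is where the ``nice structure'' claimed for the user behavioral model must be used.

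For identifiability, conditional on $\eta_j$ the observations $z_1^{(j)},\ldots,z_{n_j}^{(j)}$ are i.i.d.\ $\mathrm{Bernoulli}(\mu_j)$ with $\mu_j := 1/2 + \eta_j(\mu-1/2)$ by Lemma~\ref{lem:simple_prob}. When $\mu$ is known and $\mu\neq 1/2$, the map $\eta\mapsto\mu_j$ is a bijection from $[0,1]$ onto the interval between $1/2$ and $\mu$, so the joint law of $(z_i^{(j)})_{i\le n_j}$ is a binomial mixture whose mixing measure is a pushforward of $\mathcal{P}_\eta$. Since binomial mixtures with distinct mixing measures on a bounded interval have distinct distributions (the moments of $\mu_j$ can be recovered from the $S_j = \sum_i z_i^{(j)}$ as $n_j$ grows), $\mathcal{P}_\eta$ is uniquely determined, and the regularizer $\mathcal{R}(\theta)$ excludes the $\mu$-scaling pathology discussed after Algorithm~\ref{alg:EM_UBM}. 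Hence $\theta^*$ is the unique maximizer of the population-level expected log-likelihood.

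Given identifiability, I would invoke a Wald-type argument for the regularized MLE. Assuming $\mathcal{P}_\eta$ lies in a compact parametric family (or a compact sieve) with densities bounded above and below, the per-user log-likelihood $\log\!\int \exp(\sum_i l(z_i^{(j)}\mid \mu,\eta_j))\,\mathrm{d}\mathcal{P}_\eta(\eta_j)$ is continuous and uniformly bounded in $\theta$. A standard uniform law of large numbers then gives $\sup_{\theta\in\Theta}|L(\theta)/m - \mathbb{E}_{\theta^*}[L(\theta)/m]|\to 0$ in probability as $m\to\infty$. Combined with the Kullback--Leibler inequality, which makes $\theta^*$ the unique maximizer of the limit, this yields $\tilde\theta_m\xrightarrow{p}\theta^*$ for the (regularized) MLE $\tilde\theta_m$. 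Letting $n_j\to\infty$ additionally concentrates each per-user likelihood contribution on $\mu_j$, which sharpens the identification and ensures the regularizer does not distort the limit.

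The remaining and, I expect, hardest step is to show that the EM output $\hat\theta$ actually equals (or converges to) $\tilde\theta_m$. Monotonicity of EM in the regularized likelihood guarantees that every limit point of $\{\theta^{(t)}\}$ is a stationary point of $L(\theta)+\mathcal{R}(\theta)$, so what is needed is uniqueness of this stationary point. I would try to establish this by exploiting two structural facts about the model. First, for both Example~1 and Example~2 the complete-data log-likelihood is concave in the natural parameters of $\mathcal{P}_\eta$ (the weight $q_1$ in the two-point case, $(\alpha,\beta)$ in the Beta case), making each M-step a strictly concave maximization with a unique solution. Second, by the per-user concentration described above, the posterior of $\eta_j$ used in the E-step concentrates near the true $\eta_j$ as $n_j\to\infty$, so the Q-function becomes asymptotically a strictly quasi-concave function of $\theta$. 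Together these should imply that, in the large-sample regime, the regularized objective admits only one stationary point, which must coincide with $\theta^*$ by the consistency argument. This would close the loop and give $\mathbb{P}(\|\hat\theta-\theta^*\|\ge\epsilon)\to 0$.
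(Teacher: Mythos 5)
Your overall architecture (identifiability of the population objective, uniform convergence of the empirical likelihood, and a bridge from the EM output to the consistent estimator) matches the skeleton of the paper's argument, and your first two steps are essentially sound. However, your third step contains a genuine gap, and it is precisely the step you flag as hardest. You propose to show that the EM iterates reach the global (regularized) MLE by arguing that the stationary point of $L(\theta)+\mathcal{R}(\theta)$ is unique, and you offer two ingredients: concavity of the complete-data log-likelihood in the natural parameters of $\mathcal{P}_\eta$, and posterior concentration of $\eta_j$ making the $Q$-function asymptotically strictly quasi-concave. Neither ingredient delivers the conclusion. Concavity of the M-step objective guarantees a unique maximizer \emph{within each M-step}, but says nothing about the number of fixed points of the EM map, i.e., the number of stationary points of the marginal likelihood $L(\theta)$; Gaussian and Bernoulli mixtures are the standard counterexamples, with concave M-steps and multiple local maxima of the observed-data likelihood. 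Likewise, quasi-concavity of $Q(\cdot\mid\theta^{(t)})$ for each fixed $\theta^{(t)}$ does not control the stationary-point structure of $L$, because the surrogate changes with the iterate.

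The paper closes this gap by a different decomposition that avoids ever comparing the EM output to the MLE. It works at the level of gradients: it defines a population log-likelihood $L^*(\theta)$, shows that as $n\to\infty$ the per-user posterior ratios collapse so that $\tfrac{1}{m}\nabla_\theta L^*(\theta)$ converges uniformly to $\nabla_\theta K(\theta)$, where $K$ is the KL divergence from $p(\cdot\mid\theta^*)$ to $p(\cdot\mid\theta)$, \emph{assumed strongly convex} on a compact $\Theta$ (Assumption~\ref{assump:regular_p}); a uniform law of large numbers (the Lipschitz Glivenko--Cantelli criterion applied both to the per-observation log-likelihood and to the score family $\{\nabla_\theta \log p(\cdot\mid\theta)\}$) then gives $\sup_{\theta}\|\tfrac{1}{m}\nabla_\theta L(\theta)-\nabla_\theta K(\theta)\|\to 0$ in probability. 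Strong convexity of $K$ forces $\|\nabla_\theta K(\theta)\|$ to be bounded away from zero outside any ball around $\theta^*$, so \emph{every} stationary point of $L$ lies in that ball with high probability; since Wu's theorem guarantees the EM limit is some stationary point, consistency follows without uniqueness of the MLE or of the stationary point ever being needed in the form you attempt. Note also that the uniqueness of the population optimum is not derived from binomial-mixture identifiability as in your first step, but is imposed directly as a regularity condition (strong convexity of $K$ plus identifiability of the density family); the discrete two-point case is handled separately by explicit gradient lower bounds. If you want to salvage your route, you would need to replace your third step with an argument of this gradient-concentration type, or else prove a genuine basin-of-attraction result for EM, which is substantially harder.
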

A more rigorous statement of the theorem is given in Appendix \ref{subsec:prove_theorem_two_mass}, together with its proof. We note that both Example \ref{examp:1} and Example \ref{examp:2} satisfy the regularity conditions, so the theoretical guarantees in Theorem \ref{thm:simple_two_mass_converge} hold for both examples. The result has two implications. First, it gives a peace of mind when applying the algorithm. Second, it also tells that the user behavioral model exhibits a nice analytical structure. While the result is stated in an asymptotic sense, in practice, the volume of LLM usage ensures a large $m$ and $n_j$'s generally.

In proving the theorem, the regularity conditions ensure that the parameter space $\Theta$ is bounded away from its boundaries, thereby preventing degenerate cases, and impose smoothness requirements on the density function of $\mathcal{P}_\eta$. Since the specification of $\mathcal{P}_\eta$ in Example \ref{examp:1} does not naturally come with a continuous density, proving the result for Example \ref{examp:1} requires a slightly different treatment. The overall strategy, however, is standard: first, we establish a uniform law of large numbers to show that the gradient of the log-likelihood function, $\nabla_\theta L(\theta)$, converges uniformly to its population mean. Next, we show that the population mean of the log-likelihood admits a unique stationary point. These two claims imply that the stationary points of $L(\theta)$ concentrate in a neighborhood of the true parameter $\theta^*$.

Assumption~\ref{ass:mu} provides notable convenience for the theoretical understanding of the algorithm. It avoids the additional nuance of specifying prior information on $\bmu$ in $\mathcal{R}(\theta)$, as well as the need to provide corresponding theoretical characterizations. When Assumption~\ref{ass:mu} is relaxed, the global convergence guarantee in Theorem~\ref{thm:simple_two_mass_converge} can still hold in some practical settings. Specifically, when the requirement of a ``known $\mu$'' is relaxed, we show in Theorem \ref{thm:cvg_unknown_mu} of Appendix~\ref{subsec:cvg_unknown_mu} that the algorithm can still be guaranteed to converge globally. When the requirement of equal $\mu_j$'s is relaxed, the theorem does not hold in general; in Appendix~\ref{subsec:uneq_mu}, we discuss practical ways to handle this case, under which results similar to Theorem~\ref{thm:simple_two_mass_converge} can be derived.

\subsection{Inference of User Attentiveness}

Note that Algorithm \ref{alg:EM_UBM} gives the parameter estimate $\hat{\theta}$ as its output. We need one additional step to infer the user's attentiveness
\[
\begin{aligned}
    &\eta_j \mid \hat{\theta}, \{z_i^{(j)}\}_{i=1}^{n_j} \propto \exp\left(\sum_{i=1}^{n_j}l(z_{i}^{(j)}\mid \mu, \eta_j)\right)\hat{P}_\eta(\eta_j).
\end{aligned}
\]
And then, we can apply a rule that classify user $j$ as attentive if 
\[
\mathbb{P}(\eta_j \ge \eta^* \mid \hat{\theta}, \{z_i^{(j)}\}_{i=1}^{n_j}) \ge 1-\alpha
\]
for some prespecified significance level $\alpha$($=0.05$, for example) and attentiveness level $\eta^*$, which should be determined by the context and the downstream post-training task. Then the post-training data can be obtained by filtering out the casual users
$\mathcal{D}_{\text{filtered}} = \cup_{j \text{ is attentive}} \ \mathcal{D}_j.$ In the next section, we perform LLM alignment tasks using such filtered datasets and compare the performance against the case of doing no filtering.

\section{Experiments}\label{sec:experiments}

\subsection{Synthetic EM Experiments}\label{subsec:simulation}

We first examine the performance of Algorithm \ref{alg:EM_UBM} through some simulation experiments.  Figure \ref{fig:cvg_of_EM_iteration} draws several different paths of Algorithm \ref{alg:EM_UBM} with different initial points. We can see that in both panels the estimated parameters converge to the true ones. Thus it validates the convergence result established in Theorem \ref{thm:simple_two_mass_converge}. We refer to Appendix \ref{subapx:more_simulations} for more details and more numerical experiments on the convergence, and Appendix \ref{subsec:uneq_mu} for an ablation experiment when Assumption \ref{ass:mu} is relaxed. 

\begin{figure}[htbp]
    \centering
    \begin{minipage}{0.45\columnwidth}
        \centering
        \includegraphics[width=0.7\textwidth]{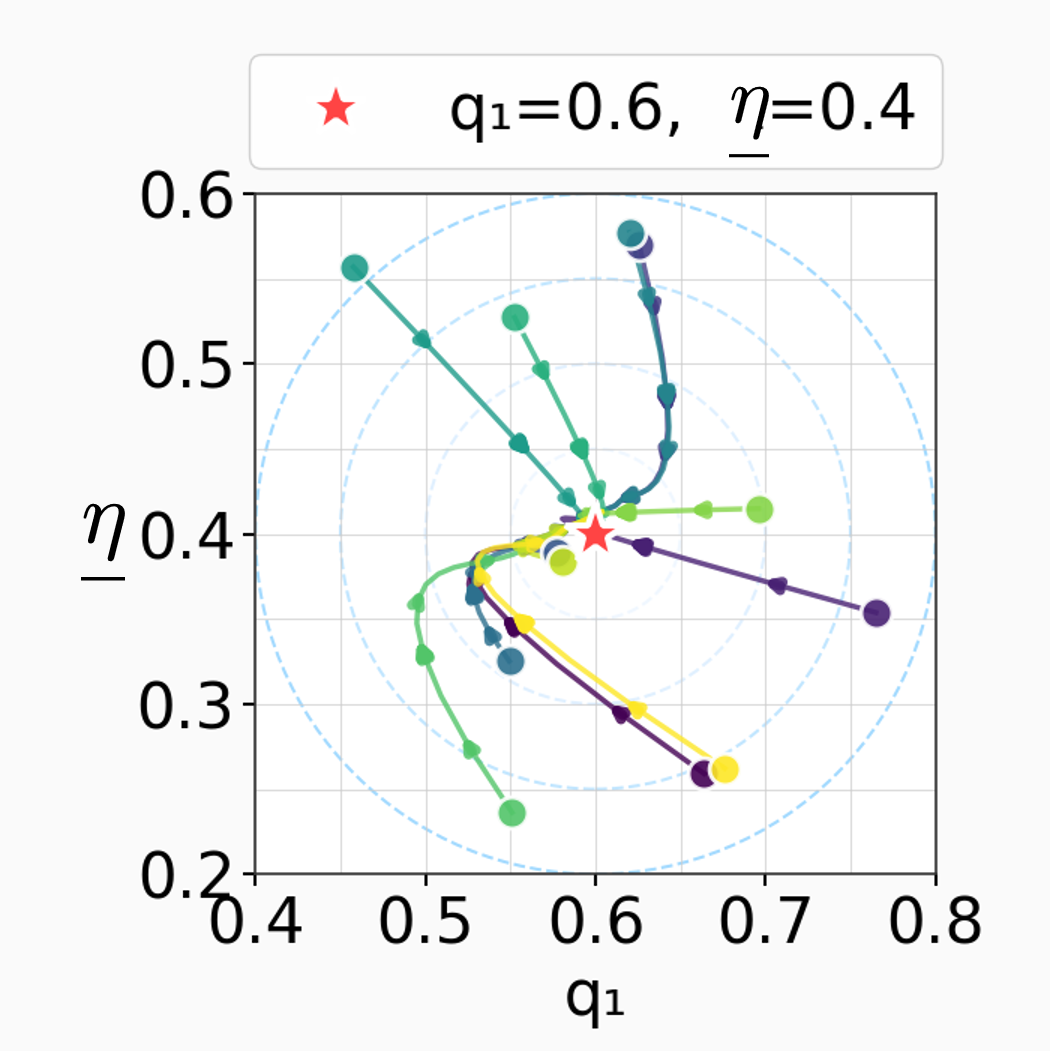}
    \end{minipage}
    \hfill
    \begin{minipage}{0.45\columnwidth}
        \centering
        \includegraphics[width=0.7\textwidth]{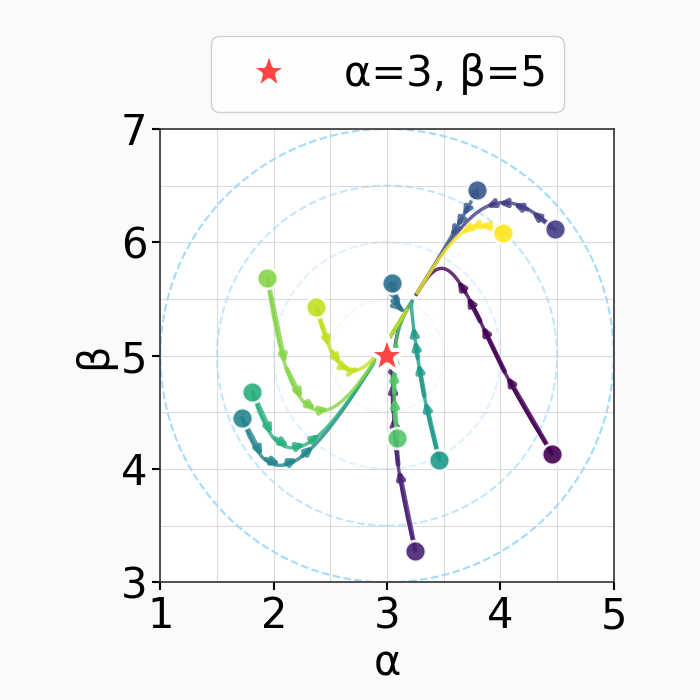}
    \end{minipage}
    \caption{The iterative updates of Algorithm \ref{alg:EM_UBM} under the model settings of Example 1 (left) and Example 2 (right). The true model for Example 1 is $\mathcal{P}_\eta = 0.6\cdot \delta_{0.4} + 0.4\cdot \delta_{0.98}$ and for Example 2 is $\mathcal{P}_\eta = \text{Beta}(3,5)$. The x- and y-axes represent the estimated parameter values, with the red star indicating the true parameters. The algorithm is repeated for multiple trials, each starting from different initialization points (marked by circles), and the trajectories of parameter updates (marked by directed arrows) are visualized. Despite different initializations, the trajectories always lead to the true parameters. 
    }
    \label{fig:cvg_of_EM_iteration}
\end{figure}

Next, we consider the case of model specification for $\mathcal{P}_{\eta}.$ When misspecification does occur, the estimates from Algorithm \ref{alg:EM_UBM} still capture a reasonable amount of information about the underlying distribution. Figure \ref{fig:model_misspecify} illustrates this result. For both panels, despite that the true distribution $\mathcal{P}_\eta$ (red lines) is not a member of the assumed model family, the estimated distribution $\hat{\mathcal{P}}_\eta$ (yellow lines) still provides a good approximation of the true distribution. In the left panel, for instance, although a discrete model cannot perfectly represent a continuous distribution, it successfully identifies the underlying clusters of attentive and casual users. This is already sufficient for our usage of data filtering. Further details on this experiment are provided in Appendix \ref{subapx:more_simulations}, where we also show that more complex model families can be used to achieve a better approximation.

\begin{figure}[ht!]
    \centering
    \begin{minipage}{0.48\columnwidth}
        \centering
        \includegraphics[width=0.7\textwidth]{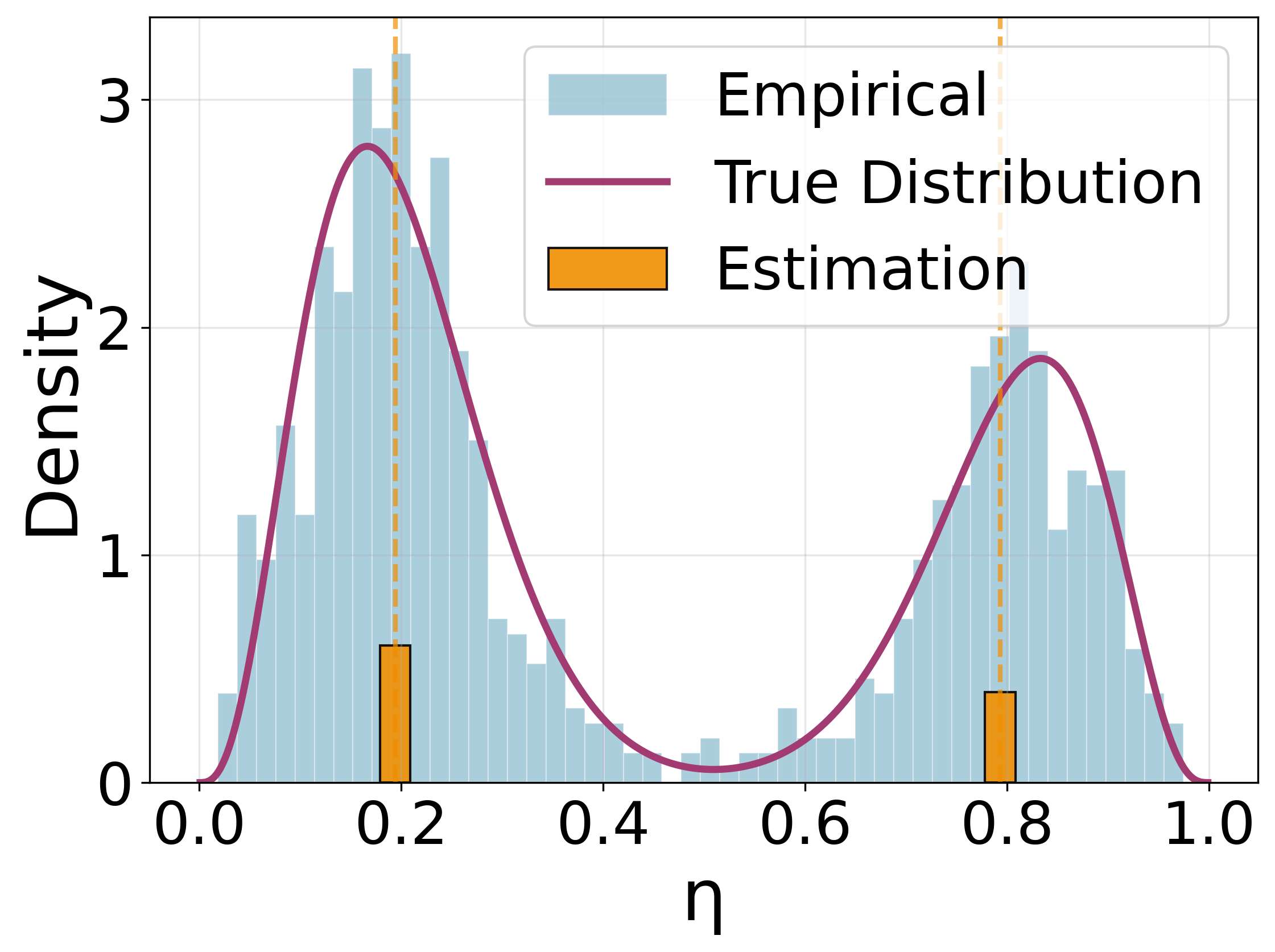}
    \end{minipage}
    \hfill
    \begin{minipage}{0.48\columnwidth}
        \centering
        \includegraphics[width=0.7\textwidth]{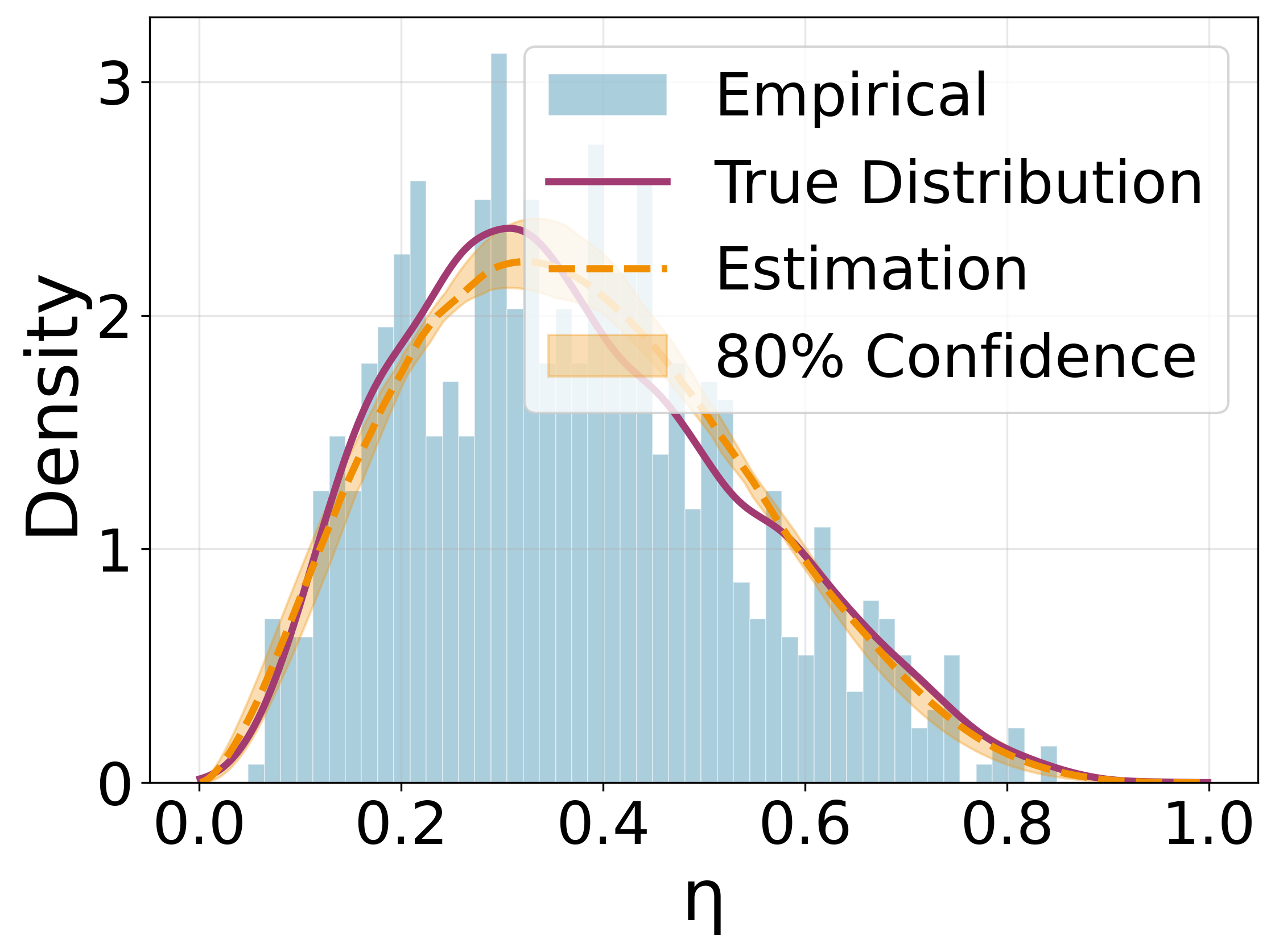}
    \end{minipage}
    \caption{The estimation of $\mathcal{P}_\eta$ under model misspecification. In the left panel, the ``True Distribution'' $\mathcal{P}_\eta$ is a mixture of two Beta distributions, while the ``Estimation'' is taken from the two-point discrete family. In the right panel, the ``True Distribution'' follows a logistic-normal distribution, while the ``Estimation'' is taken from the Beta family. In both cases, Algorithm \ref{alg:EM_UBM} approximates $\mathcal{P}_\eta$ reasonably well: in the left panel, it captures the bimodality of the true distribution, and in the right, it provides a close approximation to the true density.
    }
    \label{fig:model_misspecify}
\end{figure}

\subsection{Data Filtering and LLM Alignment}\label{subsec:LLM_DPO_Align}

Now we examine how the proposed user annotation pipeline can be used to improve the downstream performance of LLM alignment. Specifically, we perform direct preference optimization (DPO) using the dataset $\mathcal{D}_{\text{filtered}}$. The intuition behind DPO is to fine-tune an LLM on a preference dataset so that it assigns a higher probability to the preferred response and a lower probability to the alternative. The dataset has the same form $\mathcal{D} = \{(x_{i}, y_{i1}, y_{i2}, z_i)\}_{i=1}^n$ as introduced in Section \ref{sec:RM_Modeling}. DPO fine-tunes the model by minimizing the loss
$$
L_{\text{DPO}}(\theta) = - \sum_{i=1}^n 
\log \sigma \Big(
  \beta\, z'_i \Big(
    \log \frac{P_\theta(y_{i1}|x_i)}{P_{\rm ref}(y_{i1}|x_i)} - \log \frac{P_\theta(y_{i2}|x_i)}{P_{\rm ref}(y_{i2}|x_i)}
  \Big)
\Big).
$$
where $\sigma(\cdot)$ is the sigmoid function, $\beta$ is a temperature parameter, $z_i' = 2z_i - 1$, $P_\theta(\cdot)$ is the trainable LLM, and $P_{\text{ref}}(\cdot)$ is a fixed reference model (e.g., the checkpoint of $P_\theta$ before fine-tuning). If $y_{i1}$ is preferred to $y_{i2}$, then $z_i = 1$ and $z_i' = 1$, and the loss is minimized by increasing $P_\theta(y_{i1}| x_i)$ while decreasing $P_\theta(y_{i2} | x_i)$, and vice versa. More details on RLHF and DPO are provided in Appendix \ref{sec:review_RLHF_DPO}

For our experiments, the preference dataset is generated to mimic the user behavior model. The attentiveness levels $\{\eta_j\}_{j=1}^m$ of $m$ users are independently drawn from $\mathcal{P}_\eta$. Each user is presented with a randomly selected subset of prompt-response pairs $\{(x_i^{(j)}, y_{i1}^{(j)}, y_{i2}^{(j)})\}_{i=1}^{n_j}$, with $y_{i1}^{(j)}$ and $y_{i2}^{(j)}$ generated from different LLMs. Since we can't really hire users to annotate the data, we use a reward model $r(\cdot)$ to simulate the behavior of a fully attentive user. For each pair, the user makes an independent choice: with probability $\eta_j$, the user acts attentively, setting $z_i^{(j)}=1$ if $r(x_i^{(j)}, y_{i1}^{(j)}) \geq r(x_i^{(j)}, y_{i2}^{(j)})$ and $z_i^{(j)}=0$ otherwise; with probability $1-\eta_j$, the user acts casually, assigning $z_i^{(j)}$ to $0$ or $1$ at random with equal probability. 

In most of the experiments below, we set $m=400,n=50$ and use \texttt{Skywork-Reward-Gemma-2-27B} \citep{liu2024skywork} as the reward model $r(\cdot)$. The LLMs that we use to generate responses include \texttt{Qwen2.5-7B-Instruct}, \texttt{Qwen2.5-0.5B-Instruct} \citep{qwen2.5}, \texttt{Llama-3.1-Tulu-3-8B-SFT} \citep{lambert2024tulu3} and \texttt{Llama-3.2-1B} \citep{grattafiori2024llama}. The datasets where we select the prompts from include \texttt{UltraFeedback} \citep{cui2023ultrafeedback}, a large-scale preference dataset that collects human comparisons across multiple LLM-generated responses; and \texttt{HelpSteer3} \citep{wang2025helpsteer3}, a recent dataset designed to capture fine-grained user preference signals for controllable steering of LLM outputs. We use \texttt{Qwen2.5-7B} as the trainable LLM for DPO in the main text. Additional experiments on other datasets and LLMs, as well as all the implementation details, are provided in Appendix \ref{subapx:more_DPO}. Given the prompt dataset, responses generated by different LLMs induce distinct distributions of reward scores, as illustrated in Figure \ref{fig:diff_scores}. This indicates that the selected models have performance gaps among them, which serve as handles to filter out casual users.

\begin{figure}[h!]
\centering
\includegraphics[width=0.7\columnwidth]{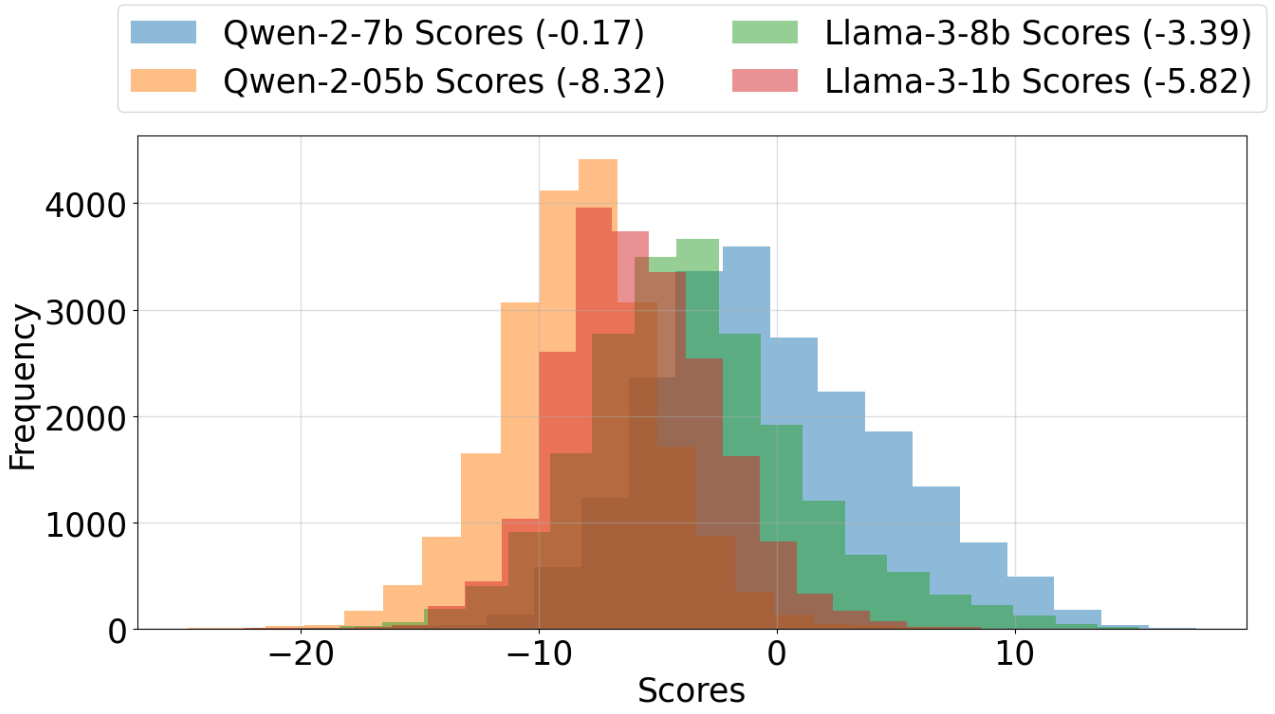}
\caption{The histogram of reward scores for different models, evaluated on the \texttt{UltraFeedback} dataset. The average score for each model is shown in brackets.}
\label{fig:diff_scores}
\end{figure}

The main results are presented in Table \ref{tab:big_table}, which demonstrates that filtering for attentive users consistently improves DPO performance. We use \texttt{Qwen2.5-7B-Instruct} and \texttt{Qwen2.5-0.5B-Instruct} as the LLMs for generating responses across different datasets. In the table, ``$\mathcal{P}_\eta$ Type = 1'' denotes $\mathcal{P}_\eta =0.2 \cdot \delta_{0.2} + 0.8\cdot \delta_{0.98}$, and ``$\mathcal{P}_\eta$ Type = 2'' denotes $\mathcal{P}_\eta =\text{Beta}(6,2)$. In the ``Model'' column, ``DPO'' represents training on the full dataset, whereas ``DPO + Filter'' represents training after filtering for users ranked top 80\% based on the inferred attentiveness. The results of the table show that filtering for attentive users consistently (using $\mathcal{D}_{\text{filtered}}$) yields improved performance compared to the unfiltered counterpart (using all the data). Despite the filtering resulting in less training data, the higher-quality subset improves average reward scores by over $10\%$ and increases the win rate over the baseline model by roughly $5\%$. 

\begin{table}[h!]
\centering
{
\resizebox{0.8\columnwidth}{!}{%
\begin{tabular}{P{2.2cm} |P{1.5cm}p{1.6cm}P{1.1cm}P{1.1cm}P{1.1cm}P{1.1cm}}
\toprule
\textbf{Dataset} & $\mathcal{P}_\eta$ \textbf{Type} & \textbf{Model} & \textbf{Score} & \textbf{Winrate} & \textbf{MMLU} & \textbf{GSM8K} \\
\midrule
\multirow{5}{*}{UltraFeedback} 
    & - & Base & -5.47 & 50\% & 70.4 & 87.0 \\
    \cmidrule(lr){2-7}
  & \multirow{2}{*}{1} & DPO & -3.59  &  63\% & 71.1  &  86.5 \\
  &           & DPO+Filter  &  -3.12 &  67\% &  72.0 &  89.0 \\
  \cmidrule(lr){2-7}
  & \multirow{2}{*}{2} & DPO & -4.42  &  58\% & 71.3  &  87.0 \\
  &          & DPO+Filter  & -3.75  &  64\% &  71.8 &  88.0 \\
\midrule
\multirow{5}{*}{HelpSteer3} 
    & - & Base & -8.39 & 50\% & 71.1 & 87.0\\
    \cmidrule(lr){2-7}
  & \multirow{2}{*}{1} & DPO & -6.56  & 67\%  &  70.8 &  86.5 \\
  &    & DPO+Filter  & -5.89  & 73\%  &  72.0 &  88.0 \\
  \cmidrule(lr){2-7}
  & \multirow{2}{*}{2} & DPO &  -6.91 & 67\%  &  71.0 & 87.0  \\
  &          & DPO+Filter  &  -6.83 &  68\% &  71.4 &  87.5 \\
\bottomrule
\end{tabular}%
}
} 
\caption{DPO performance on datasets with and without filtering. The ``Dataset'' column specifies the source dataset from which prompts are drawn. ``$\mathcal{P}_\eta$ Type'' indicates whether $\mathcal{P}_\eta$ follows a two-point distribution (Type = 1) or a Beta distribution (Type = 2). The ``Model'' column denotes the model to evaluate: ``Base'' refers to the baseline model before DPO training, ``DPO'' is the model trained on the unfiltered dataset, and ``DPO+Filter'' is the model trained after filtering users by the inferred attentiveness. Four evaluation metrics are reported: ``Score'' (average reward score on the test set), ``Winrate'' (win rate over the baseline model), and performance on two standard benchmarks, ``MMLU'' and ``GSM8K''.}
\label{tab:big_table}
\end{table}

In the previous experiment, we applied a filtering strategy that retained the top $80\%$ of users ranked by attentiveness. In practice, however, the optimal filtering $\eta^*$ threshold depends on the specific setting. Intuitively, if the threshold is set too low, low-quality data from casual users may contaminate the preference dataset. Meanwhile, if it is set too high, the remaining data may be insufficient to support effective DPO training. This naturally creates a trade-off. This trade-off is investigated and illustrated in Figure \ref{fig:DPO_difference_filtering}, where we explore how the optimal filtering threshold varies with the underlying distribution $\mathcal{P}_\eta$. The intuition is confirmed: when a larger fraction of users are casual (e.g., $\mathcal{P}_\eta = \text{Beta}(4,4)$), a stricter filtering strategy is preferable; whereas when most users are attentive (e.g., $\mathcal{P}_\eta = \text{Beta}(6,2)$), filtering out a small percentage suffices. Regardless of the shape of $\mathcal{P}_\eta$, there always exists a ``sweet spot'' corresponding to the optimal filtering strategy.

\begin{figure}[htbp]
    \centering
    \includegraphics[width=0.7\columnwidth]{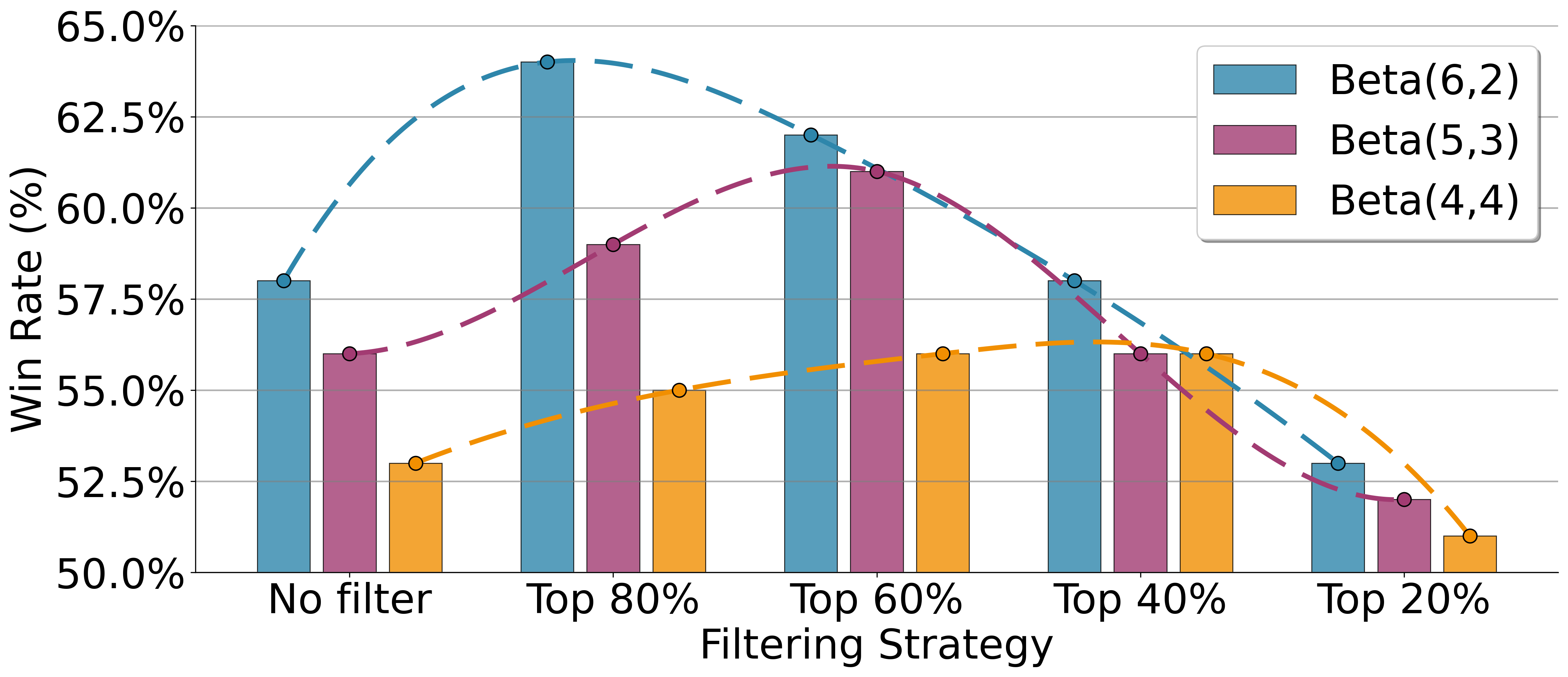}
    \caption{DPO performance under different filtering strategies. The x-axis represents filtering strategies based on the inferred attentiveness ranking, e.g., retaining the top $80\%$, $60\%$, etc. of users. The y-axis shows the win rate of the resulting DPO-trained LLM over the baseline model. The legend indicates different realizations of $\mathcal{P}_\eta$, each exhibiting its own ``sweet spot'' filtering strategy. For instance, when $\mathcal{P}_\eta = \text{Beta}(6,2)$, the optimal strategy is to retain $80\%$ of the users.
    }
    \label{fig:DPO_difference_filtering}
\end{figure}

Apart from the trade-off between data size and data quality that underlies the filtering strategy, there is another trade-off involving the average preference probability $\mu$. Our simulation study shows that larger values of $\mu$ make parameter estimation via Algorithm \ref{alg:EM_UBM} more reliable. In the context of DPO, this corresponds to a greater performance gap between the generation models A and B. However, prior work \citep{chen2024softmax} indicates that an excessively large gap between the preferred and less preferred samples can actually hinder DPO training. This creates a second trade-off, now with respect to the preference probability $\mu$. As shown in Table \ref{tab:performance_gap_and_table}, the best performance of the DPO-trained model (excluding the unattainable ``DPO+Oracle'' case) is achieved at a moderate level of $\mu = 0.74$, corresponding to the Qwen 7B \& Llama 8B setting. When $\mu$ is too large, the less preferred samples become overly weak and harm training; when $\mu$ is very close to $1/2$, attentive users and casual users become indistinguishable.

\begin{table}[ht!]
\centering
{
\resizebox{0.8\columnwidth}{!}{%
\begin{tabular}{P{2.1cm} p{1.1cm} p{1.6cm} P{1.1cm} P{1.1cm} P{1.1cm} P{1.1cm}}
\toprule
LLMs A \& B & $\mu$ & \textbf{Model} & \textbf{Score} & \textbf{Winrate} & \textbf{MMLU} & \textbf{GSM8K} \\
\midrule
- & - & Base & -5.47 & 50\% & 70.4 & 87.0 \\
\midrule
\multirow{3}{*}{\makecell{Qwen 7b \\ \& Qwen 0.5b} } &  \multirow{3}{*}{0.98} & DPO         & -4.42 & 58\% & 71.3 & 87.0 \\
& & DPO+Filter  & -3.75 & 64\% & 71.8 & 88.0 \\
& & DPO+Oracle  &  -3.70 & 67\% & 71.9 & 88.0\\
\midrule
\multirow{3}{*}{\makecell{Qwen 7b \\ \& Llama 8b}} & \multirow{3}{*}{0.74} & DPO    &  -4.29 & 60\% & 71.2 & 87.0\\
& & DPO+Filter  &  -3.71 & 64\% & 71.8 & 87.5\\
& & DPO+Oracle  & -3.65 & 67\% & 72.0 & 88.0\\
\midrule
\multirow{3}{*}{\makecell{Qwen 7b \\ \& Qwen 7b}} & \multirow{3}{*}{0.5} & DPO & -4.30 & 59\% & 71.1 & 87.5 \\
& & DPO+Filter  & -4.32 & 59\% & 71.2 & 87.5  \\
& & DPO+Oracle  &  -3.63 & 67\% & 71.8 & 88.0\\
\bottomrule
\end{tabular}%
}
\caption{DPO performance on datasets generated from different LLM pairs. The ``LLMs A \& B'' column lists the pairs of response-generating models, and the ``$\mu$'' column gives their preference probability. The ``Model'' column follows the notation of Table \ref{tab:big_table}, where ``DPO+Oracle'' refers to training with users filtered by their \textit{true} attentiveness. For smaller values of $\mu$, attentive users are harder to identify, as indicated by the widening gap between ``DPO+Filter'' and ``DPO+Oracle'', which leads to a reduced DPO performance.
}
\label{tab:performance_gap_and_table}
}
\end{table}

\section{Discussions and Conclusions}

The key component of our pipeline is the user behavioral model. In this paper, we propose a prototypical version of it which can be extended in several ways. First, it assumes that each user’s attentiveness $\eta_j$ remains constant across prompts. A natural generalization is to allow attentiveness to vary with the prompt and responses. For example, when the responses are long and detailed, users may pay less attention to carefully reading them, whereas responses with a clear right-or-wrong answer are more likely to keep users attentive. This motivates modeling attentiveness at the sample level as a function of both the user-specific features and the prompt-response ones. Such a formulation enables more fine-grained filtering strategies that improve from ``which users to filter out'' to ``which samples to filter out''.

In summary, we propose a new annotation framework in which LLM users act as annotators and provide preference feedback through the platform’s comparison mode. We develop a statistical model to capture user behavior when giving feedback, with particular emphasis on each user’s latent ``attentiveness''.  The proposed EM algorithm enjoys asymptotic global convergence, which makes the formulation of independent theoretical interest. The model proves especially useful for downstream DPO training through filtering out casual feedback and improving data quality. Moreover, it opens the door to user-incentive mechanisms, such as awarding credit points or badges for attentive feedback.

\bibliographystyle{plainnat} 
\bibliography{main.bib}

\appendix

\newpage 

\section{Related Work}
\label{sec:related_lit}
In this section, we review literature relevant to our work.

\subsection{Aligning LLMs with Human Preferences}
\label{sec:review_RLHF_DPO}
Post-training alignment of large language models (LLMs) typically involves two stages: supervised fine-tuning (SFT) with expert demonstrations, followed by preference-based optimization. The latter stage aligns models with human values by learning from pairwise comparisons of model outputs, and is dominated by two primary methods: reinforcement learning from human feedback (RLHF) \citep{christiano2017deep, bai2022training, ouyang2022training, ziegler2019fine} and, more recently, Direct Preference Optimization (DPO) \citep{rafailov2023direct}. For a comprehensive survey of these techniques, see \citet{kaufmann2024survey}.

The efficacy of both RLHF and DPO is critically dependent on the quality of the underlying preference data, which is often noisy and inconsistent. Recent studies confirm that such data imperfections significantly degrade alignment performance \citep{gao2024impact, wang2024secrets}. This motivates the development of more robust methods. To address this, recent extensions of DPO have introduced mechanisms for robustness and calibration, including robust DPO \citep{chowdhury2024provably}, new robust loss functions \citep{liang2024robust}, and distributionally robust variants. While these methods mitigate the effects of noise during the training process, our work is complementary: we focus on improving data quality before training through enhanced annotation protocols and data filtering.

\subsection{Quality Control in Preference Annotation}
Ensuring the quality of preference annotations is typically managed through two complementary activities: estimation and improvement. Reliability is commonly estimated using traditional metrics like inter-annotator agreement \citep{krippendorff1989content, krippendorff2004reliability, artstein2008inter, monarch2021human}, known-answer or ``gold standard'' tasks \citep{callison2010creating}, and comparative contracts among annotators \citep{miller2005eliciting, bacon2012predicting, cai2015optimum, dasgupta2013crowdsourced}. More recent approaches leverage AI-assisted methods for real-time monitoring and error detection \citep{li2024labelaid, pustejovsky2012natural, qian2021annotation, northcutt2021confident, klie2024efficient, ghosal2022cicero}. To standardize these efforts, benchmark frameworks like AQUA \citep{goswami2023aqua} provide protocols for evaluating and enhancing annotation quality. When annotations fall below quality thresholds, common strategies include retraining annotators \citep{bareket2021neural, klie2024analyzing, ghosal2022cicero} or filtering out low-quality data points \citep{bastan2020author}. For broader overviews, see the surveys by \citet{daniel2018quality} and \citet{klie2024analyzing}.

A fundamental challenge in this area is the inherent subjectivity of preference data: ground-truth labels are often nonexistent, and the direct impact of annotation quality on downstream task performance can be difficult to measure. Our work confronts these challenges directly. We introduce a novel framework that leverages ordinary LLM users as annotators and integrates a robust quality control mechanism into the annotation process itself. We then empirically validate our framework by demonstrating its positive impact on downstream DPO tasks.

\subsection{Annotation Mechanism Design}
This work contributes to the field of annotation mechanism design, a formal framework for incentivizing self-interested agents (e.g., annotators) to act in alignment with a principal's objectives, particularly under conditions of information asymmetry. In the context of LLM alignment, recent applications of this framework include online learning strategies for RLHF with strategic annotators \citep{hao2024online}, algorithmic contracts to enhance text generation \citep{saig2024incentivizing}, incentive-compatible auctions for aggregating model outputs \citep{duetting2024mechanism}, Bayesian persuasion for generative AI \citep{harris2023algorithmic}, and collective mechanisms for fine-tuning with multiple reward models \citep{sun2024mechanism}.

Our approach diverges from this existing body of work, which typically assumes a pool of contracted, professional annotators. We instead focus on collecting feedback from ordinary, anonymous users—a setting where traditional quality control measures like contracts, performance tests, or "gold standard" questions are infeasible. To address this challenge, we introduce a novel quality signaling mechanism. We deploy two models with a known performance gap to generate responses to the same prompt; a user's preference between these outputs serves as a direct signal of their annotation quality. While the concept of leveraging model performance gaps has been explored in other contexts, such as AI-based evaluation \citep{zheng2023judging, burns2023weak, gao2024aligning}, our work is the first to formalize and apply it as a mechanism for filtering and weighting preference data from unvetted users to advance LLM alignment.

\subsection{The Expectation-Maximization (EM) Algorithm}
To estimate the parameters of our user behavior model from observed data, we employ the Expectation-Maximization (EM) algorithm. The EM algorithm is an iterative method for finding maximum likelihood estimates in models with latent variables. It operates by alternating between an Expectation (E) step, which computes the expectation of the complete-data log-likelihood given the observed data and current parameter estimates, and a Maximization (M) step, which updates the parameters to maximize this expectation. The theoretical properties of the EM algorithm are well-established. Foundational work by \citet{wu1983convergence} established its key properties: monotonic convergence of the observed-data likelihood and, under mild regularity conditions, convergence to a stationary point. Subsequent research has analyzed its rate of convergence \citep{MENG1994413} and developed finite-sample guarantees \citep{balakrishnan2017statistical}.

\subsection{Crowdsourcing and Annotator reliability}
Our work is closely related to the classical literature on crowdsourcing, truth inference, and annotator reliability. \citet{DawidSkene} use EM to jointly infer latent true labels and annotator-specific error rates from redundant noisy annotations.
\citet{Liu_NIPS2012_cd00692c} develop Bayesian and variational inference methods for crowdsourcing with latent labels and worker reliabilities. In contrast, our comparison-mode LLM setting often lacks repeated labels for the same prompt-response pair, and identifiability comes from the induced asymmetry between two response-generating LLMs rather than from a shared worker-item label matrix.
\citet{karger2011budget} study budget-optimal crowdsourcing and low-rank or belief-propagation aggregation under controlled task-assignment graphs. 
our problem setting differs in that the platform cannot freely assign arbitrary user prompts to annotators or rely on designed redundancy.
\citet{zhang2016spectral} develop a spectral-plus-EM framework that provides provable algorithms for Dawid--Skene-type crowdsourcing models. In contrast, our theoretical analysis is based on a likelihood argument tailored to the proposed LLM comparison-mode behavioral model.
\citet{khetan2017learning} study learning from noisy singly labeled data by coupling worker-quality estimation with a supervised prediction model. Our setting is similarly low-redundancy, but its identification signal comes from the A/B LLM design rather than agreement with a learned classifier.

\section{Background on RLHF and DPO}
Reinforcement Learning from Human Feedback (RLHF) and Direct Preference Optimization (DPO) are two prominent methods for aligning large language models with human preferences. Both approaches leverage pairwise preference data, where humans indicate which of two model-generated responses they prefer for a given prompt. The foundation for modeling such preferences is often the Bradley-Terry model.

\subsection{The Bradley-Terry Model}
The Bradley-Terry model \citep{bradley1952rank} is a statistical model for predicting the outcome of a paired comparison. Given two items, $i$ and $j$, the model assumes that each item has an intrinsic, latent ``strength'' or ``score'', let's say $r_i$ and $r_j$. The probability that item $i$ is preferred over item $j$ is given by:
\[
\mathbb{P}(i>j) =\dfrac{e^{r_i}}{e^{r_i} + e^{r_j}} = \sigma(r_i - r_j),
\]
where $\sigma(x) = 1/(1+ e^{-x})$ is the sigmoid function. This model provides a simple and effective way to convert underlying scores into pairwise probabilities, which is the cornerstone of training preference-based models.

\subsection{Reinforcement Learning from Human Feedback}
RLHF is a multi-stage process designed to fine-tune a language model to better align with human expectations. It typically starts from a base language model, which we'll call the reference policy $P_{\text{ref}}$.

\begin{enumerate}
    \item \textbf{Reward Model Training:} A separate model, the reward model (RM), is trained to predict human preferences. The training data $\mathcal{D}$ takes the form introduced in Section \ref{sec:RM_Modeling},
    \[
    \mathcal{D} = \{(x_i, y_{i1}, y_{i2}, z_i)\}_{i=1}^n.
    \]
    The reward model $r_{\phi}(x,y)$ (parameterized by $\phi$) is trained to assign a higher score to the preferred response. Using the Bradley-Terry model, the loss function for the reward model is the negative log-likelihood of the human preferences:
    \[
    L_{\text{RM}}(\phi) = -\sum_{i=1}^n \log \sigma \Big( z_i'\cdot(r_\phi(x_i, y_{i1}) - r_\phi(x_i, y_{i2})) \Big),
    \]
    where $z_i' = 2 z_i - 1$.
    \item \textbf{RL Fine-Tuning:} The policy $P_\theta$ (initialized from $P_{\text{ref}}$) is fine-tuned using reinforcement learning. The learned reward model $r(x, y)$ serves as the reward signal. The objective is to learn a policy that produces high-reward outputs while not diverging too far from the original reference policy. This is framed as minimizing the following loss function:
    \begin{equation}\label{eq:cal_L_RLHF}
        \mathcal{L}_{\text{RLHF}}(\theta) = \mathbb{E}_{x \sim \mathcal{D}, y \sim P_\theta(y|x)} \left[ \beta \log \frac{P_\theta(y|x)}{P_{\text{ref}}(y|x)} - r(x, y) \right]
    \end{equation}
    where $\beta$ is a hyperparameter that controls the strength of the Kullback-Leibler (KL) divergence penalty. Minimizing this loss encourages the policy to generate responses that yield high rewards (by making the negative reward term smaller) while penalizing divergence from the reference policy. This optimization is typically performed using algorithms like Proximal Policy Optimization (PPO).    
\end{enumerate}

\subsection{Direct Preference Optimization}
DPO simplifies the RLHF pipeline by eliminating the need to explicitly train a reward model and then perform reinforcement learning. It derives a direct loss function for the policy based on the preference data. Notice the policy $P^*$ that minimizes \eqref{eq:cal_L_RLHF} enjoys an closed form:
\[
P^*(y|x) = \dfrac{1}{Z(x)} P_{\text{ref}}(y|x)\cdot\text{exp}\left(\dfrac{1}{\beta} r(x, y)\right),
\]
where $Z(x)$ is a partition function that normalizes the distribution. From this relationship, we can express the reward function in terms of the optimal policy $P^*$ and the reference policy $P_{\text{ref}}$:
\[
r(x,y) = \beta \log\left(\dfrac{P^*(y|x)}{P_{\text{ref}}(y|x)}\right) + \beta \log Z(x).
\]
This is a crucial step: it shows that the reward function is implicitly ``defined'' by the optimal and reference policies. Now, we can plug this into the reward model loss function and derive the loss function for DPO:
\[
L_{\text{DPO}}(\theta) \;=\; - \sum_{i=1}^n 
\log \sigma\Bigg(
  \beta\, z'_i \cdot \Big(
    \log \dfrac{P_\theta(y_{i1}|x_i)}{P_{\rm ref}(y_{i1}|x_i)}
    - 
    \log \dfrac{P_\theta(y_{i2}|x_i)}{P_{\rm ref}(y_{i2}|x_i)}
  \Big)
\Bigg).
\]

\section{Extended Discussions on Algorithm \ref{alg:EM_UBM}}\label{apx:extended_discussion_of_EM}

This section provides further discussions on the implementation of Algorithm \ref{alg:EM_UBM}, especially focusing on Examples \ref{examp:1} and \ref{examp:2} discussed in Section \ref{sec:para_estimate_EM}. 

\subsection{Two-point discrete distribution}\label{subapx:extend_two_point_discrete}
When $\mathcal{P}_\eta$ follows the two point discrete distribution described in Example 1
\[
\mathcal{P}_\eta = q_1 \delta_{\underline{\eta}} + q_2 \delta_{\bar{\eta}},
\]
the complete-data log-likelihood function $Q(\theta\mid \theta^{(t)})$ takes the following form:
\begin{equation}\label{eq:Q_two_pointmass}
\begin{aligned}
    Q(\theta\mid \theta^{(t)})= \sum_{j=1}^m \Big(\gamma_{j1}^{(t)} \log q_1 +  \gamma_{j2}^{(t)} \log q_2\Big)+ \sum_{j=1}^m \sum_{i=1}^{n_j} \Big(\gamma_{j1}^{(t)} l(z_{i}^{(j)}\mid \mu_j, \underline{\eta}) + \gamma_{j2}^{(t)} l(z_{i}^{(j)}\mid \mu_j, \bar{\eta})\Big)
\end{aligned}
\end{equation}
where the posterior probabilities of $\eta_j$'s are
\begin{equation}\label{eq:posterior_of_eta}
    \gamma_{j1}^{(t)} = \dfrac{q_1^{(t)} \exp\Big(\sum_{i=1}^{n_j} l(z_{i}^{(j)}\mid \mu_j^{(t)}, \underline{\eta}^{(t)})\Big)}{q_1^{(t)} \exp\Big(\sum_{i=1}^{n_j} l(z_{i}^{(j)}\mid \mu_j^{(t)}, \underline{\eta}^{(t)})\Big) + q_2^{(t)} \exp\Big(\sum_{i=1}^{n_j} l(z_{i}^{(j)}\mid \mu_j^{(t)}, \bar{\eta}^{(t)})\Big)},\quad \gamma_{j2}^{(t)} = 1- \gamma_{j1}^{(t)}
\end{equation}
Deriving the form of $Q(\theta\mid \theta^{(t)})$ in \eqref{eq:Q_two_pointmass} involves a reparameterization trick to resolve the issue of shifted supports: for different realizations of $\underline{\eta}$ and $\bar{\eta}$, the support of $\mathcal{P}_\eta$ changes. Specifically, we reparameterize the latent variable from the continuous attentiveness level $\eta_j$ to a binary latent variable $h_j$, where $h_j=1$ indicates that user $j$ is attentive and $h_j=0$ indicates that the user is casual. The posterior of $h_j$ given $\theta^{(t)}$ is
\[
\mathbb{P}(h_j = 0\mid \theta^{(t)}) = \gamma_{j1}^{(t)},\quad \mathbb{P}(h_j = 1\mid \theta^{(t)}) = \gamma_{j2}^{(t)},
\]
and the complete-data log-likelihood, as a function of $\theta$, is
\[
L_c(\theta) = \sum_{j=1}^m \Big(h_j\log q_2 + (1-h_j)\log q_1 \Big) + \sum_{j=1}^m \sum_{i=1}^{n_j} \Big( h_j \cdot l(z_{i}^{(j)}\mid \mu_j, \bar{\eta}) + (1-h_j) \cdot l(z_{i}^{(j)}\mid \mu_j, \underline{\eta})\Big).
\]
Take the posterior expectation of $L_c(\theta)$ gives $Q(\theta\mid \theta^{(t)}) = \mathbb{E}_{h_1,\ldots,h_m}[L_c(\theta)\mid \theta^{(t)}]$, corresponding to \eqref{eq:Q_two_pointmass}.

If Assumption \ref{ass:mu} holds, then
Step \ref{step:M-step} of Algorithm \ref{alg:EM_UBM} (the M-step) has a closed-form solution, given by:
\[
q_1^{(t+1)} = \dfrac{1}{m} \sum_{j=1}^m \gamma_{j1}^{(t)}
\]
\[
\underline{\eta}^{(t+1)} = \dfrac{\sum_{j=1}^m \left(2\sum_{i=1}^{n_j} z_{i}^{(j)} - n_j\right)\cdot \gamma_{j1}^{(t)}}{(2\mu-1)\cdot\sum_{j=1}^m n_j\cdot \gamma_{j1}^{(t)}},\,\, \text{clipped to } [0,1]
\]
\[
\bar{\eta}^{(t+1)} = \dfrac{\sum_{j=1}^m \left(2\sum_{i=1}^{n_j} z_{i}^{(j)} - n_j\right)\cdot \gamma_{j2}^{(t)}}{(2\mu-1)\cdot\sum_{j=1}^m n_j\cdot \gamma_{j2}^{(t)}},\,\, \text{clipped to } [0,1]
\]

\subsection{Beta distribution}\label{subapx:extend_beta}
When $\mathcal{P}_\eta$ follows the Beta distribution described in Example 2
\[
\mathcal{P}_\eta = \text{Beta}(\alpha, \beta),
\]
under Assumption \ref{ass:mu}, the $Q(\theta\mid \theta^{(t)})$ function takes the following form
\[
\begin{aligned}
    Q(\theta\mid \theta^{(t)}) \sim -m \log B(\alpha, \beta) + \sum_{j=1}^m \mathbb{E}_{\eta_j}\Big[ (\alpha-1)\log \eta_j + (\beta-1) \log (1-\eta_j)\mid \theta^{(t)}\Big]
\end{aligned}
\]
where ``$\sim$'' denotes equality after discarding terms constant in $\theta$. The expectations are taken with respect to the posterior distribution of $\eta_j$, given by
\[
\eta_j \mid \theta^{(t)} \propto \exp\Big(\sum_{i=1}^{n_j}l(z_{i}^{(j)}\mid \mu, \eta_j)\Big)\cdot \eta_j^{\alpha^{(t)}-1}(1-\eta_j)^{\beta^{(t)}-1}.
\]
Maximizing $Q(\theta \mid \theta^{(t)})$ with respect to $(\alpha, \beta)$ is equivalent to solving the following equation system:
\[
\begin{aligned}
\psi(\alpha) - \psi(\alpha+\beta) &= \frac{1}{m} \sum_{j=1}^{m}\mathbb{E}_{\eta_j}[\log \eta_j\mid \theta^{(t)}]\\
\psi(\beta) - \psi(\alpha+\beta) &= \frac{1}{m} \sum_{j=1}^m \mathbb{E}_{\eta_j}[\log (1-\eta_j)\mid \theta^{(t)}]\\
\end{aligned}
\]
where $\psi(x) := \frac{\Gamma'(x)}{\Gamma(x)}$. The expectation terms $\E_{\eta_j}[\cdot]$ are generally intractable in closed form but can be approximated using numerical integration or Monte Carlo sampling from the conditional posterior distribution of $\eta_j$. The resulting system of equations can then be solved efficiently using iterative methods such as Newton–Raphson or quasi-Newton updates.

\section{Implementation Details and Additional Experiments}
In this section, we provide implementation details for the experiments in Section \ref{sec:experiments}, along with some additional experiment results.

\subsection{Estimating $\mu$ on Different Datasets}\label{appdix:pref_different_ds}
We estimate the value of $\mu$ for various pairs of generation LLMs A and B across three widely used preference datasets, including \texttt{UltraFeedback} \citep{cui2023ultrafeedback}, \texttt{HelpSteer3} \citep{wang2025helpsteer3}, and \texttt{HH-RLHF} \citep{ganguli2022red}. The generation models considered include \texttt{Qwen2.5-7B-Instruct}, \texttt{Qwen2.5-0.5B-Instruct}, \texttt{Llama-3.1-Tulu-3-8B-SFT} and \texttt{Llama-3.2-1B}, abbreviated as ``Qwen 7b'', ``Qwen 0.5b'', ``Llama 8b'', and ``Llama 1b''. For each dataset, we estimate $\mu$ by (1) randomly sampling 20,000 prompts, (2) generating responses with models A and B respectively, (3) evaluating each prompt–response pair using the reward model, and (4) computing the proportion of samples for which model A’s score exceeds that of model B. Throughout the paper, we use \texttt{Skywork-Reward-Gemma-2-27B} as the reward model. The results are summarized in Table \ref{tab:mu_on_diff_dataset}. Across different datasets, the estimated values of $\mu$ remain largely consistent, particularly when the performance gap between the two models is substantial (e.g., Qwen 7B vs. Qwen 0.5B, Qwen 7B vs. Llama 1B, and Llama 8B vs. Qwen 0.5B), where $\mu$ consistently exceeds 0.9.

\begin{table}[t]
\centering
\setlength{\tabcolsep}{4pt}
\renewcommand{\arraystretch}{1.15}

\begin{minipage}{\columnwidth}
\centering
\caption*{UltraFeedback}
\vspace{-0.3cm}
\begin{tabular}{lcccc}
\toprule
\multicolumn{1}{c}{LLMs A \& B} & Qwen 7b & Llama 8b & Llama 1b & Qwen 0.5b \\
\midrule
Qwen 7b   & 0.5 &   0.74   &   0.90   &  0.98    \\
Llama 8b  &  0.26  & 0.5 &  0.75  &   0.92   \\
Llama 1b  &  0.10  &  0.25  &  0.5   &  0.79    \\
Qwen 0.5b & 0.02   &  0.18    &   0.21   & 0.5 \\
\bottomrule
\end{tabular}
\end{minipage}

\vspace{10pt}

\begin{minipage}{\columnwidth}
\centering
\caption*{HelpSteer3}
\vspace{-0.3cm}
\begin{tabular}{lcccc}
\toprule
\multicolumn{1}{c}{LLMs A \& B} & Qwen 7b & Llama 8b & Llama 1b & Qwen 0.5b \\
\midrule
Qwen 7b   & 0.5 &  0.72   &   0.93   &   0.98   \\
Llama 8b  &  0.28  & 0.5   &  0.80    &    0.96  \\
Llama 1b  & 0.07   &  0.20    &  0.5  & 0.79  \\
Qwen 0.5b &  0.02  &  0.04  &  0.21    &  0.5   \\
\bottomrule
\end{tabular}

\end{minipage}

\vspace{10pt}

\begin{minipage}{\columnwidth}
\centering
\caption*{HH}
\vspace{-0.3cm}
\begin{tabular}{lcccc}
\toprule
\multicolumn{1}{c}{LLMs A \& B} & Qwen 7b & Llama 8b & Llama 1b & Qwen 0.5b \\
\midrule
Qwen 7b   & 0.5 &  0.75    &  0.90    &  0.99    \\
Llama 8b  & 0.25   & 0.5   &   0.7   &   0.93   \\
Llama 1b  &  0.10  &  0.30    & 0.5   &   0.82   \\
Qwen 0.5b &   0.01 &  0.07    &   0.18   & 0.5   \\
\bottomrule
\end{tabular}
\end{minipage}
\vspace{6pt}
\caption{Pairwise comparisons of generation LLMs across three preference datasets. The rows correspond to model A, the columns to model B, and each entry reports the estimated preference probability $\mu$ that model A is preferred over B. Across all three datasets, the estimates are generally consistent, varying by no more than $\pm 0.05$.}
\label{tab:mu_on_diff_dataset}
\end{table}

\subsection{Additional Synthetic Experiments}\label{subapx:more_simulations}

In Section \ref{subsec:simulation}, we conduct synthetic experiments to show the asymptotic convergence of Algorithm \ref{alg:EM_UBM}. Unless stated otherwise, all the experiments set $\mu=0.8$, $m=400$, and $50\leq n_j\leq 100$ for $j=1,\ldots,m$. The implementation of Algorithm \ref{alg:EM_UBM} follows Assumption \ref{ass:mu} and sets $\mu$ to its true value.

A numerical analysis of the convergence is provided in Table \ref{tab:relative-uncertainty}. Each row corresponds to a specific combination of sample sizes $(m, n)$, where $n$ is the common value assigned to all $n_j, j=1,\ldots, m$. Each column corresponds to an estimated parameter $\hat{\theta}$. The $\Delta$ column gives the relative estimation error, defined as the maximum relative error $|\hat{\theta} - \theta| / |\theta|$ across all scalar parameters (e.g., $\Delta = \max\{|\hat{\alpha} - \alpha| / |\alpha|,|\hat{\beta} - \beta| / |\beta| \}$ in Example 2). With $m, n\to\infty$, the relative estimation error approaches $0$. When aiming to reduce estimation error, increasing $m$ and increasing $n$ have different impacts across settings: consider the row corresponding to $(m,n) = (200,50)$, for Example 1, doubling $m$ is more effective in reducing the error, whereas for Example 2, doubling $n$ yields greater improvement.

\begin{table}[htbp]
\centering
\setlength{\tabcolsep}{8pt}  
\renewcommand{\arraystretch}{1.2}  
\begin{tabular}{@{}c | cccc | ccc@{}}
\toprule
\multirow{2}{*}{\begin{tabular}{c}
Sample size \\ $(m,n)$
\end{tabular}} &
\multicolumn{4}{c|}{Two-point discrete} &
\multicolumn{3}{c}{Beta} \\
\cline{2-5}\cline{6-8}
 & $q_1$ & $\underline{\eta}$ & $\bar{\eta}$ & $\Delta\downarrow$ &
   $\alpha$ & $\beta$ & $\Delta\downarrow$ \\  
\midrule
$\theta^*$ &  0.6   &    0.4    &  0.98      &    $-$    &   3     &   5     &    $-$    \\
\midrule
(200, 50)   &    0.53       &    0.35      &    0.95    &   $12.0\%$     &  3.61      &   5.50     &     $20.3\%$   \\
\midrule
(400, 50) &    0.63   &       0.42   &        0.99    &    $5.1\%$  &    2.54    &   4.28     &    $15.1\%$    \\
(200, 100)&    0.61   &  0.39     &        0.99      &      $2.1\%$  &    2.41    &    4.05    &    $20.0\%$    \\
\midrule
(400, 100)&   0.60    &  0.40   &  0.97     &   $1.3\%$   &     2.89   &   4.73     &    $5.3\%$    \\
(800, 100)&   0.60 &   0.40 &    0.98 & $0.8\%$   &  2.85 &   4.77 &  $4.8\%$     \\
(800, 200)&   0.60 &  0.40      &       0.98     &   $0.5\%$  &   3.01     &  5.02      &     $0.4\%$   \\
\bottomrule
\end{tabular}
\vspace{0.2cm}
\caption{The relative estimation error of Algorithm \ref{alg:EM_UBM} under varying sample sizes. The true model parameters are denoted by $\theta^*$. In this experiment, all $n_j$ are fixed to a common value $n$. Each row reports the estimation results for a specific $(m,n)$ combination. Each column lists the estimated parameters, while the $\Delta$ column shows the relative estimation error. As $m,n \to \infty$, the relative error converges to zero.}
\label{tab:relative-uncertainty}
\end{table}

In Figure \ref{fig:model_misspecify}, we visualized the performance of Algorithm \ref{alg:EM_UBM} under model misspecification. The left panel assumes that the true $\mathcal{P}_\eta$ follows a mixture of two Beta distributions,
\[
\mathcal{P}_\eta = 0.6\cdot \text{Beta}(4, 16) + 0.4\cdot \text{Beta}(16, 4),
\]
and in the right panel the true $\mathcal{P}_\eta$ follows a logistic-normal distribution,
\[
\mathcal{P}_\eta = \sigma\Big(-0.6 + 0.8\cdot \mathcal{N}(0, 1)\Big)
\]
where $\sigma(\cdot)$ is the sigmoid function. When the sample size is large enough, we can implement Algorithm \ref{alg:EM_UBM} with a richer parametric family for $\mathcal{P}_\eta$. Consider $m = 4000$ and $n_j=500$ for $j=1,\ldots, m$, let the distribution family of the estimated $\mathcal{P}_\eta$ be a mixture of logistic-normal distributions:
\begin{equation}\label{eq:mix_logistic_normal}
    \mathcal{P}_\eta = \sigma\left(\sum_{k=1}^3 \pi_k\cdot \mathcal{N}(\mu_k, \sigma_k^2)\right),
\end{equation}
whose density is given by
\[
P_{\eta}(\eta) = \dfrac{1}{\eta (1-\eta)} \sum_{k=1}^3 \pi_k \dfrac{1}{\sqrt{2\pi}\sigma_k} \text{exp}\Bigg(-\dfrac{\Big(\log \frac{\eta}{1-\eta} - \mu_k\Big)^2}{2\sigma_k^2}\Bigg).
\]
The $Q(\theta\mid \theta^{(t)})$ function can be approximated in the following way:
\begin{equation}\label{eq:approximate_Q}
    \begin{aligned}
    Q(\theta \mid \theta^{(t)}) &\sim \sum_{j=1}^m \mathbb{E}_{\eta_j}\Bigg[ \log \Bigg(\sum_{k=1}^3 \pi_k \dfrac{1}{\sqrt{2\pi}\sigma_k} \text{exp}\Bigg(-\dfrac{\Big(\log \frac{\eta_j}{1-\eta_j} - \mu_k\Big)^2}{2\sigma_k^2}\Bigg)\Bigg)\,\Bigg|\, \theta^{(t)}\Bigg]\\
    &\approx \sum_{j=1}^m \log \Bigg(\sum_{k=1}^3 \pi_k \dfrac{1}{\sqrt{2\pi}\sigma_k} \text{exp}\Bigg(-\dfrac{\Big(\log \frac{\hat{\eta}_j}{1-\hat{\eta}_j} - \mu_k\Big)^2}{2\sigma_k^2}\Bigg)\Bigg)
\end{aligned}
\end{equation}
where the expectation with respect to $\eta_j\mid \theta^{(t)}$ is replaced by substituting an estimator $\hat{\eta}_j$, such as the posterior mean or maximum a posteriori(MAP) estimate. Maximizing an objective of the form \eqref{eq:approximate_Q} is equivalent to fitting a three-component Gaussian mixture to the transformed samples $\left\{\log \dfrac{\hat{\eta}_j}{1-\hat{\eta}_j}\right\}$, which can be efficiently solved using a standard EM algorithm (see Section 3.2 of \citet{SIG-034}). 

In Figure \ref{fig:appendix_d2}, we illustrate how the mixture of logistic-normal distribution family approximates the following true $\mathcal{P}_\eta$'s. From left to right, the true distributions $\mathcal{P}_\eta$ are: (1) $\mathcal{P}_\eta = 0.6\cdot \text{Beta}(4, 16) + 0.4\cdot \text{Beta}(16, 4)$, (2) $\mathcal{P}_\eta = \sigma\left(-0.6 + 0.8\cdot \mathcal{N}(0, 1)\right)$ , (3) $\mathcal{P}_\eta = 0.2\cdot \delta_{0.2} + 0.6\cdot \delta_{0.6} + 0.2\cdot \delta_{0.9}$, (4) $\mathcal{P}_\eta = 0.2\cdot \text{Beta}(8,32) + 0.6\cdot \text{Beta}(20,20) + 0.2\cdot \text{Beta}(32,8)$. In all these cases, the true $\mathcal{P}_\eta$ is approximated reasonably well. 

\begin{figure*}[t]
    \centering
    \begin{subfigure}[b]{0.24\columnwidth}
        \centering
        \includegraphics[width=\textwidth]{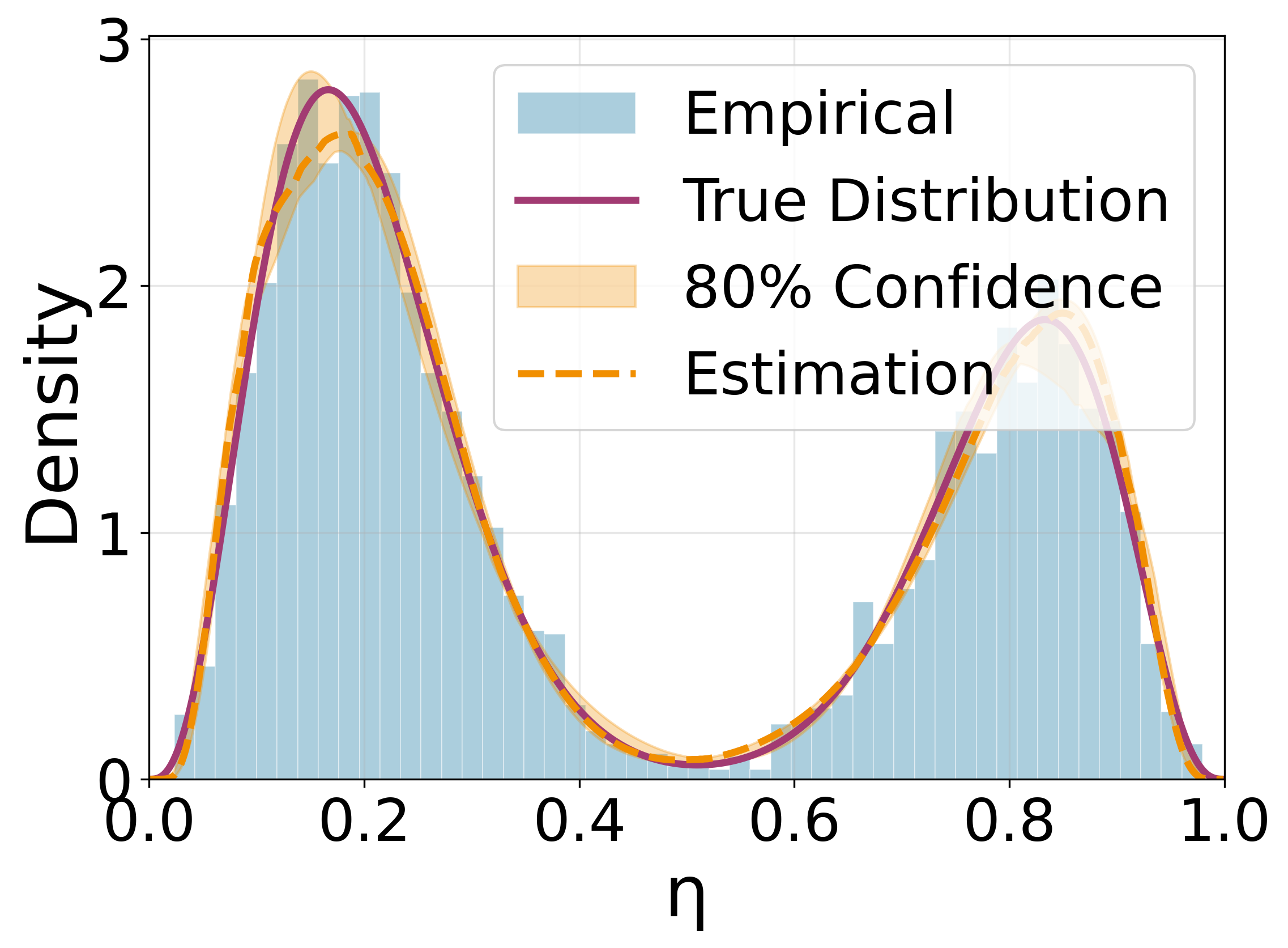}
    \end{subfigure}
    \hfill
    \begin{subfigure}[b]{0.24\columnwidth}
        \centering
        \includegraphics[width=\textwidth]{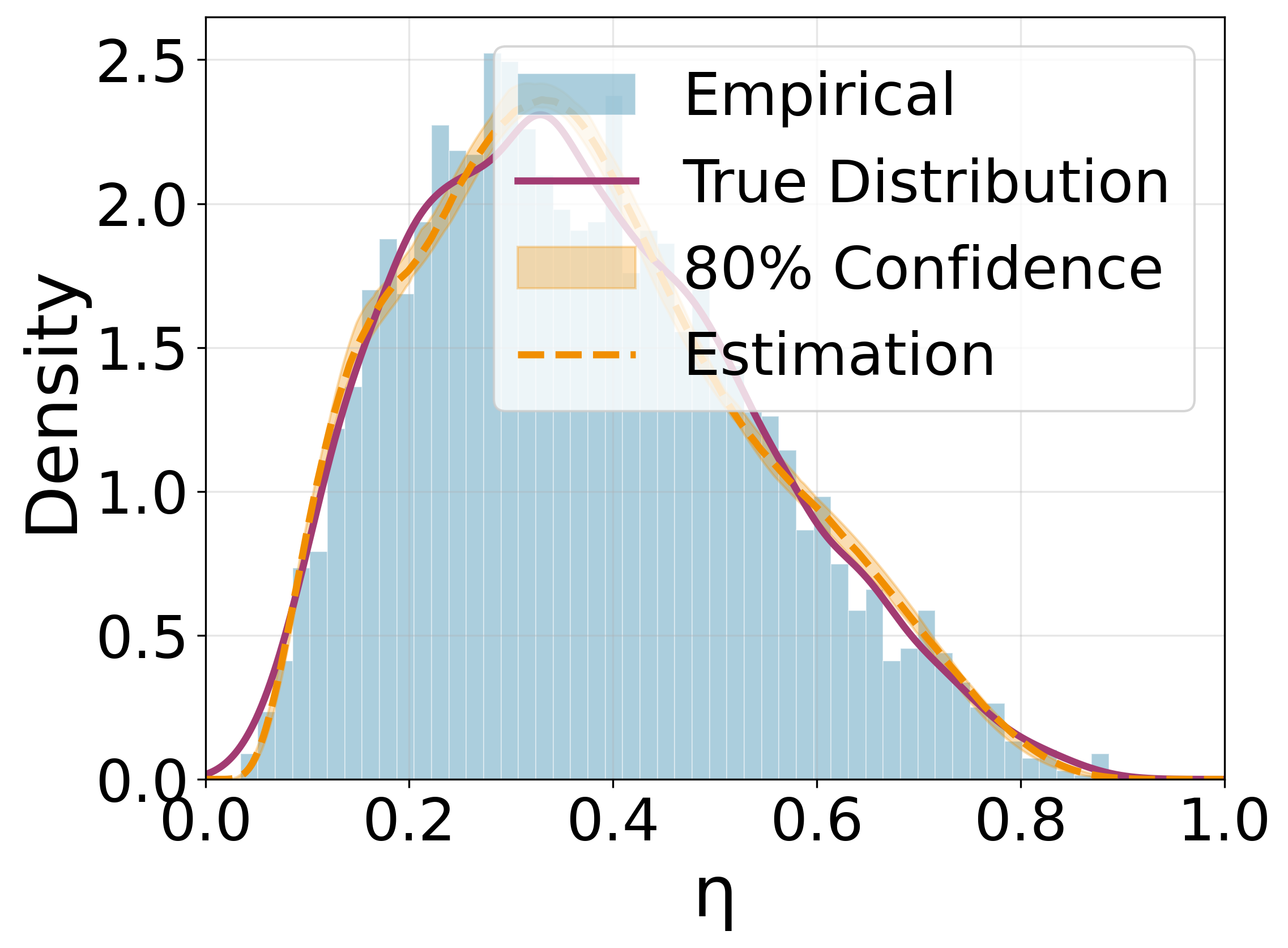}
    \end{subfigure}
    \hfill
    \begin{subfigure}[b]{0.24\columnwidth}
        \centering
        \includegraphics[width=\textwidth]{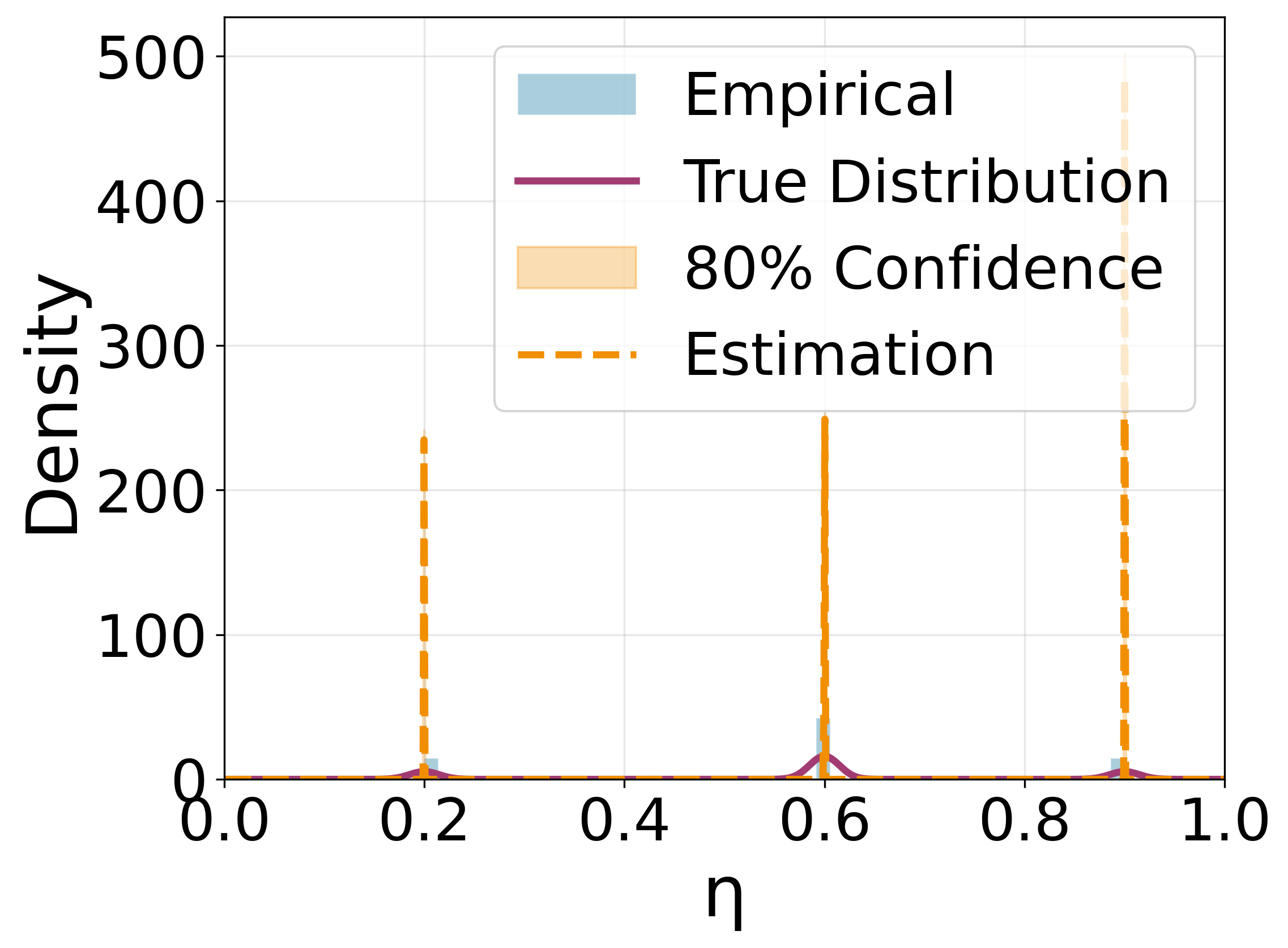}
    \end{subfigure}
    \hfill
    \begin{subfigure}[b]{0.24\columnwidth}
        \centering
        \includegraphics[width=\textwidth]{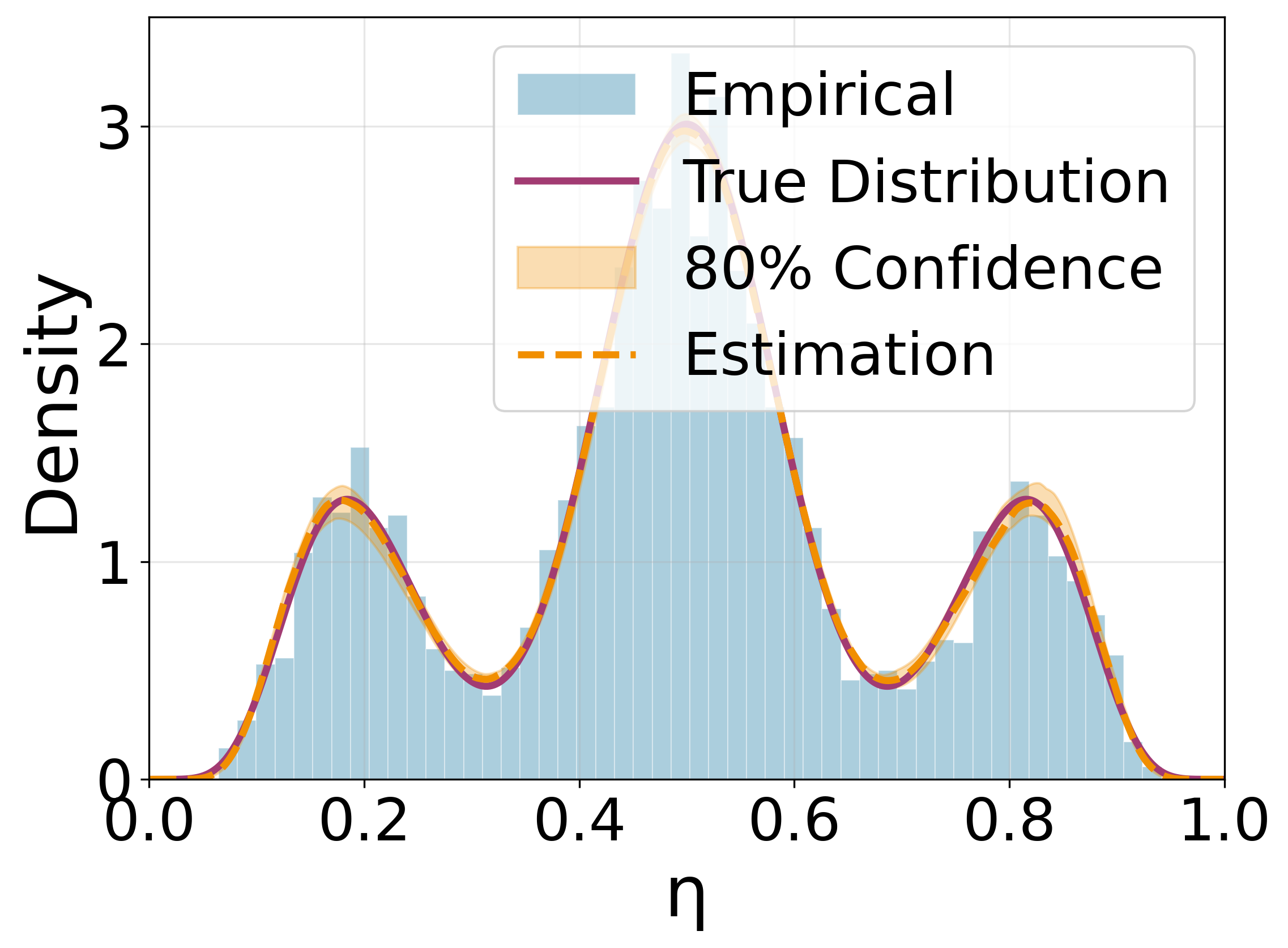}
    \end{subfigure}
    \caption{Approximating various true distributions $\mathcal{P}_\eta$ using a mixture of three logistic-normal distributions. The true distribution (the red line) is approximated by the estimated one (the orange line). In all cases, the mixture of logistic-normals provides a reasonable approximation to the true distribution.}
    \label{fig:appendix_d2}
\end{figure*}

\subsection{Relaxing Assumption \ref{ass:mu}: Unknown $\mu$}\label{sec:mu}
Throughout the paper, we have assumed that Assumption \ref{ass:mu} holds. This assumption indeed simplifies the analytical structure of Algorithm \ref{alg:EM_UBM}, as demonstrated in Appendix \ref{apx:extended_discussion_of_EM} and \ref{subapx:more_simulations}. It is also practically reasonable for data filtering in LLM alignment, as estimated in Appendix \ref{appdix:pref_different_ds}. Nevertheless, there may exist cases where this assumption does not hold. In this section, we investigate scenarios in which $\mu$ is not fixed a priori but must instead be treated as an unknown model parameter to be estimated within Algorithm \ref{alg:EM_UBM}.

We construct a synthetic experimental pipeline that consists of two stages: parameter estimation and data filtering. The experimental results are presented in Figure~\ref{fig:effect_of_mu}. Before discussing the figure, we first introduce the two stages of the pipeline.
\begin{itemize}
\item \textbf{Parameter estimation.} We consider two approaches. The first runs Algorithm \ref{alg:EM_UBM} with $\mu$ fixed at its true value, corresponding to the implementation adopted throughout this paper. The second specifies a fixed prior for $\mu$ via the regularization term $\mathcal{R}(\theta)$ of Algorithm \ref{alg:EM_UBM}. Specifically, we set the prior distribution of $\mu$ to $\text{Beta}(8,2)$ and define
\[
\mathcal{R}(\theta)=7\cdot \log \mu + \log (1-\mu).
\]
In Figure \ref{fig:effect_of_mu}, the two methods are termed ``Knowing $\mu$'' and ``Prior''.
\item \textbf{Data filtering.} Based on the estimated parameters, we identify and filter out the attentive users. For each user $j$, an estimated attentiveness level $\hat{\eta}_j$ is obtained via the maximum a posteriori (MAP) estimate of $\eta_j$. Two filtering strategies are considered. The first method selects the top $50\%$ of users ranked by $\hat{\eta}_j$, and the second method selects users whose estimated attentiveness exceeds the $0.5$ quantile of the true $\mathcal{P}_\eta$. In Figure \ref{fig:effect_of_mu}, these two strategies are denoted by ``(Ranking)'' and ``(Threshold)'', respectively.
\end{itemize}
Our experiment evaluates the overall performance of the synthetic pipeline by measuring its accuracy in recovering the truly attentive users, that is, the proportion of users whose true $\eta_j$ values lie above the $0.5$ quantile of $\mathcal{P}_\eta$ and are successfully filtered out. The evaluation is conducted under four different levels of $\mu$, with $\mathcal{P}_\eta$ fixed to $\text{Beta}(3,5)$. Figure \ref{fig:effect_of_mu} shows that the ability to identify attentive users is largely unaffected by whether $\mu$ is known exactly. For each fixed level of $\mu$, the three considered pipeline variants perform comparably. However, the performance is strongly influenced by the true value of $\mu$: as $\mu$ increases from $0.6$ to $0.9$, the accuracy of identifying attentive users consistently improves. This pattern is intuitive: when preferences between two responses are less distinct (corresponding to a smaller $\mu$), it becomes harder to discern whether a user’s choice reflects casual behavior or genuine ambiguity in the responses themselves.


\begin{figure}[ht!]
    \centering
    \includegraphics[scale=0.23]{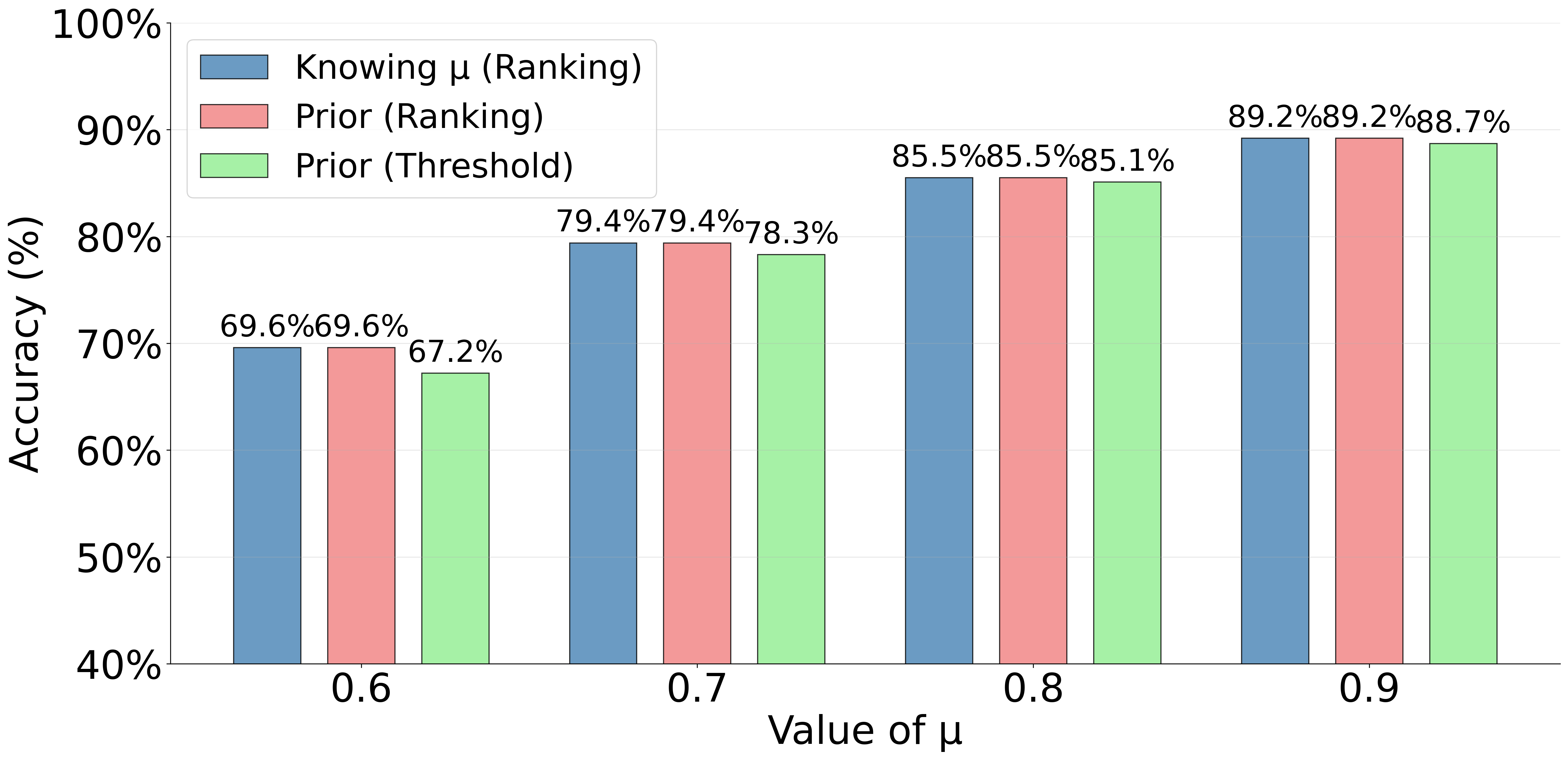}
    \caption{Accuracy in recovering the attentive users. Data are generated under $\mathcal{P}_\eta = \text{Beta}(3,5)$ with different values of $\mu$ as shown on the x-axis. In the legend, ``Knowing $\mu$'' means running Algorithm \ref{alg:EM_UBM} with $\mu$ set to its true value, ``Prior'' denotes running the algorithm with a fixed prior for $\mu$. Labels in brackets indicate the selection method: ``(Ranking)'' selects the top $50\%$ of users based on the posterior ranking of attentiveness, and ``(Threshold)'' selects users who exceed a certain threshold. The reported percentages show the proportion of attentive users correctly identified. Overall, the selection accuracy is only mildly affected by whether $\mu$ is fixed or estimated, but steadily increases with larger $\mu$.
    }\label{fig:effect_of_mu}

\end{figure}

\subsection{Relaxing Assumption \ref{ass:mu}: Unequal $\mu_j$}\label{subsec:uneq_mu}

Throughout most of the theoretical analysis and experiments, we assume that all users $j=1,\ldots,m$ share the same value of $\mu_j$, namely $\mu_1=\cdots=\mu_m=\mu$. This assumption is not overly restrictive when prompts are drawn from similar topics, that is, when users ask questions within the same general scope. Nevertheless, the proposed framework is not inherently limited to this setting, and in this section we discuss possible extensions that relax this assumption.

If $\mu_1,\ldots,\mu_m$ are allowed to vary freely without any constraints or correlation structure, then the system is generally unidentifiable. Consider the counterexample proposed in the main text. Suppose that the true parameters for user $j$ are $\mu_j=0.6$ and $\eta_j=1$, while an alternative parameterization sets $\mu_j=1$ and $\eta_j=0.2$. Both parameterizations imply the same distribution for each annotation $z_i^{(j)}$, making them statistically indistinguishable. Consequently, for the framework to be meaningful, additional constraints or assumptions are needed.

One simple approach is to cluster users into groups and assume a common value of $\mu_j$ within each group. This approach is applicable when users are drawn from different backgrounds and each user's conversations are concentrated around a fixed topic. One practical scenario is to select users who are PhD students in, for example, mathematics, chemistry, and biology, and collect conversation histories related to their respective research questions. In this setting, Algorithm~\ref{alg:EM_UBM} can be applied without any modification, and the global convergence guarantee carries over provided that the size of each group grows to infinity.

We present a synthetic experiment to demonstrate this idea. Let $m=5000$, and let $60\leq n_j\leq 100$ for $j=1,\ldots,m$. Each user $j$ belongs to Group A with probability $0.6$ and to Group B otherwise. For users in Group A, the model has $\mu_A=0.6$, and the user attentiveness is sampled from $\eta_j\sim \mathrm{Beta}(4,16)$. For users in Group B, the model has $\mu_B=0.9$, and the user attentiveness is sampled from $\eta_j\sim \mathrm{Beta}(16,4)$. In the generated data, $\mu_A$ and $\mu_B$ are unknown; only the group indicator (A or B) of each user is observed. Algorithm~\ref{alg:EM_UBM} assumes the mixture of logistic-normal  family $\mathcal{P}_\eta$ specified in~\eqref{eq:mix_logistic_normal}. The goal is to estimate: (i) $\mu_A$ and $\mu_B$, and (ii) the true distribution $\mathcal{P}\eta$. Figure~\ref{fig:unequal_mu_synthetic} below shows the trajectories of the estimated $\hat{\mu}_A$ and $\hat{\mu}_B$ over the EM iterations, together with the final estimate of $\mathcal{P}_\eta$. The EM algorithm moves $\hat{\mu}_A$ and $\hat{\mu}_B$ from their initialization $0.51$ towards their corresponding true values. The right panel shows that the estimated attentiveness distribution $\hat P_\eta$ captures the main bimodal structure of the true distribution $P_\eta$.

\begin{figure}[t]
    \centering

    \begin{minipage}{0.32\textwidth}
        \centering
        \includegraphics[width=\linewidth]{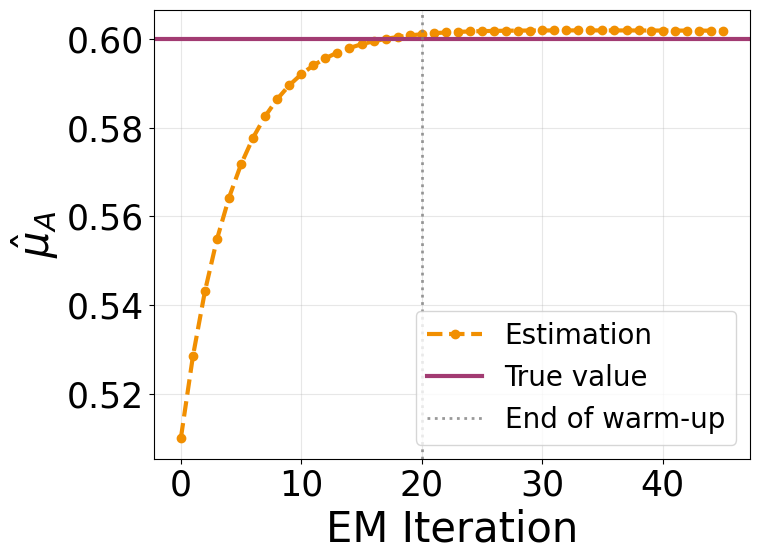}
        \vspace{-2mm}
        \caption*{\small (a) Trajectory of $\hat{\mu}_A$}
    \end{minipage}
    \hfill
    \begin{minipage}{0.32\textwidth}
        \centering
        \includegraphics[width=\linewidth]{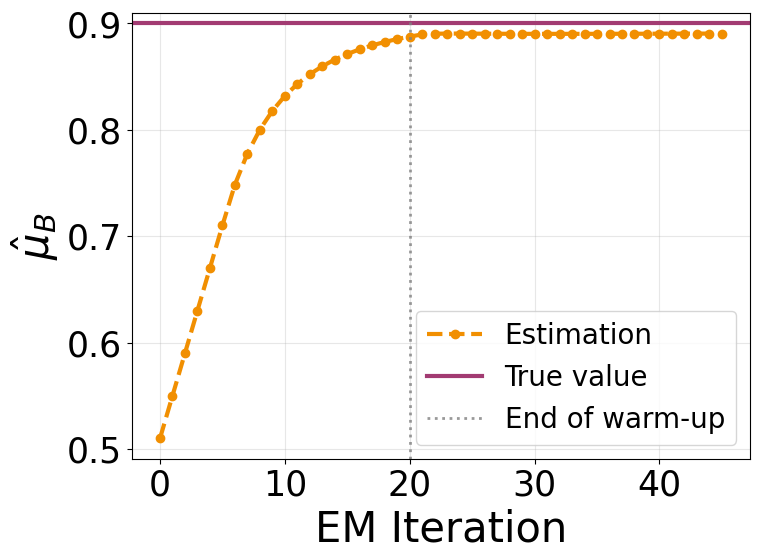}
        \vspace{-2mm}
        \caption*{\small (b) Trajectory of $\hat{\mu}_B$}
    \end{minipage}
    \hfill
    \begin{minipage}{0.32\textwidth}
        \centering
        \includegraphics[width=\linewidth]{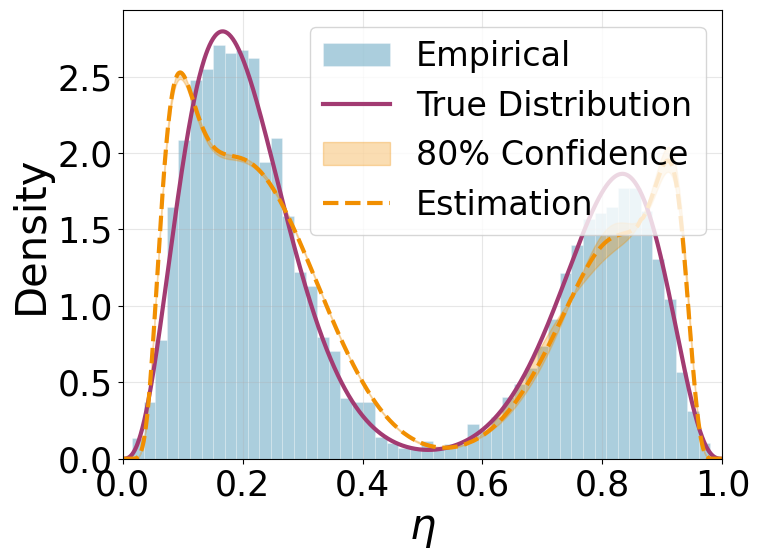}
        \vspace{-2mm}
        \caption*{\small (c) Estimation of $P_\eta$}
    \end{minipage}

    \caption{Synthetic experiment under unequal $\mu_j$'s. Users are divided into two groups with group-specific preference probabilities $\mu_A$ and $\mu_B$. The left and middle panels show the EM trajectories of the estimated $\hat{\mu}_A$ and $\hat{\mu}_B$, respectively. The right panel compares the final estimated attentiveness distribution $\hat{P}_\eta$ with the true distribution $P_\eta$.}
    \label{fig:unequal_mu_synthetic}
\end{figure}

Another option is to replace each $\mu_j$ with an estimate $\hat{\mu}_j$. A simple estimator can be constructed using a reward model $r(x,y)$. Specifically, for each user's feedback records $\mathcal{D}_j
=
\left\{
\left(x_i^{(j)},y_{i1}^{(j)},y_{i2}^{(j)},z_i^{(j)}\right)
\right\}_{i=1}^{n_j}$, we compute
\[
r_i^{(1)} = r(x_i^{(j)},y_{i1}^{(j)}),
\qquad
r_i^{(2)} = r(x_i^{(j)},y_{i2}^{(j)}),
\qquad
\hat{p}_i^{(j)} = \sigma\left(r_i^{(1)}-r_i^{(2)}\right),
\]
and define
\begin{equation}\label{eq:esti_hat_mu}
\hat{\mu}_j
=
\frac{1}{n_j}\sum_{i=1}^{n_j} \hat{p}_i^{(j)}.
\end{equation}
Algorithm~\ref{alg:EM_UBM} can then proceed by replacing each $\mu_j$ with its estimate $\hat{\mu}_j$. A potential concern with this approach is that, if a reward model $r(x,y)$ is already available, one may ask why user-labeled data are still needed, rather than using the reward model directly as the annotator. The reason is that the reward model only provides a noisy estimate $\hat{p}_i^{(j)}$ of the underlying true win probability $p(x_i^{(j)})$. Therefore, its induced annotations should not be treated as direct substitutes for user feedback in downstream training. However, when estimating $\mu_j$ through~\eqref{eq:esti_hat_mu}, the averaging operation can mitigate estimation errors, making $\hat{\mu}_j$ a reasonable estimator of $\mu_j$ under suitable calibration conditions.

A third method is to adopt a Bayesian approach. For each user $j$, we may specify a prior distribution $\pi_j$ over $\mu_j$ and run Algorithm~\ref{alg:EM_UBM} with the additional log-prior term
\[
\mathcal{R}(\theta) = C + \sum_{j=1}^m \log \pi_j(\mu_j).
\]
There are many possible ways to specify the prior distributions $\pi_j$. A desirable property is that the prior should reflect the uncertainty in the estimation of $\mu_j$. For example, suppose that users $j$ and $j'$ mainly ask two categories of questions, or prompts, denoted by A and B, respectively. If the LLMs or reward models are familiar with category A, then the estimate of $\mu_j$ may be accurate and have low uncertainty. In contrast, if they are less familiar with category B, then the estimate of $\mu_{j'}$ may be noisier. Such distinctions can be reflected through the choice of prior distributions. As an extreme example, for user $j$, one may use a point-mass prior
\[
\pi_j = \delta_{0.8},
\]
whereas for user $j'$, one may use a weakly informative prior supported on a wider interval, for example
\[
\pi_{j'}(\mu) \propto \mathbf{1}\{\mu\in[0.5,1]\}.
\]

\subsection{Additional Details on DPO Alignment}\label{subapx:more_DPO}
We present details for the DPO experiments in Section \ref{subsec:LLM_DPO_Align}. 

\paragraph{Models.} All the models are downloaded from the following sources:
\begin{itemize}
    \item \texttt{Skywork-Reward-Gemma-2-27B}: \url{https://huggingface.co/Skywork/Skywork-Reward-Gemma-2-27B-v0.2}
    \item \texttt{Qwen2.5-7B-Instruct}: \url{https://huggingface.co/Qwen/Qwen2.5-7B-Instruct}

    \item \texttt{Qwen2.5-7B}: \url{https://huggingface.co/Qwen/Qwen2.5-7B}
    
    \item \texttt{Qwen2.5-0.5B-Instruct}: \url{https://huggingface.co/Qwen/Qwen2.5-0.5B-Instruct} 
    
    \item \texttt{Llama-3.1-Tulu-3-8B-SFT}: \url{https://huggingface.co/allenai/Llama-3.1-Tulu-3-8B-SFT}

    \item \texttt{Llama-3.1-8B}: \url{https://huggingface.co/meta-llama/Llama-3.1-8B}

    \item \texttt{Llama-3.2-1B}: \url{https://huggingface.co/meta-llama/Llama-3.2-1B-Instruct}
    
\end{itemize}
Among the generative models, \texttt{Qwen2.5-7B} and \texttt{Llama-3.2-1B} are the base models for DPO training, while the remaining models are used to generate responses for constructing the training data.

\paragraph{Datasets.} The datasets are downloaded from the following sources:
\begin{itemize}
    \item \texttt{UltraFeedback}: \url{https://huggingface.co/HuggingFaceH4/ultrafeedback_binarized}

    \item \texttt{HelpSteer3}: \url{https://huggingface.co/nvidia/HelpSteer3}
\end{itemize}

\paragraph{Training.}
All experiments were conducted on two servers: one equipped with three NVIDIA A40 GPUs and the other with three NVIDIA L40 GPUs. For each DPO training task, we used approximately 20,000 samples, simulated from 400 synthetic users, each annotating between 30 and 50 samples. The training configuration is summarized as follows:
\begin{itemize}
\item Training epochs: 1
\item Effective batch size: 24
\item Learning rate: 5e-6
\item Learning rate schedule: cosine
\item Warm-up steps: 100
\item $\beta$ parameter for DPO loss: 0.1
\item LoRA rank ($r$): 16
\item LoRA $\alpha$: 32
\item LoRA dropout: 0.1
\end{itemize}

\paragraph{Evaluation.}
The evaluation metrics for the DPO-trained models include \textbf{MMLU}, \textbf{GSM8K}, the average \textbf{reward score} of the resulting models, and their \textbf{win rates} over the baseline models. These metrics together assess both factual knowledge, reasoning ability, and alignment quality of the DPO finetuned models. Specifically:
\begin{itemize}
\item \textbf{MMLU} \citep{hendrycks2020measuring}: Measures the model's performance on the massive multitask language understanding benchmark, which covers 57 subjects ranging from STEM and social sciences to humanities. It evaluates the model’s general knowledge and reasoning ability in a multiple-choice format.

\item \textbf{GSM8K} \citep{cobbe2021training}: Assesses the model’s arithmetic and logical reasoning on grade-school-level math problems. Each problem requires multi-step reasoning, and accuracy is measured by exact-match correctness of the final answer.

\item \textbf{Reward score}: The average scalar value assigned by the reward model to the model’s generated responses on a held-out set of test prompts. In our experiments, the test prompts are separated from the training data prior to DPO.

\item \textbf{Win rate}: The percentage of test prompts for which the trained model receives a higher reward score than its corresponding base model, when both are evaluated on the same test set.

\end{itemize}

\paragraph{Extended DPO experiments.}
Table \ref{tab:big_table_two_trials} presents an extended set of DPO experiments, expanding upon the results reported in Table \ref{tab:big_table} of the main text. When \texttt{Qwen2.5-7B} serves as the base model, we use \texttt{Qwen2.5-7B-Instruct} and \texttt{Qwen2.5-0.5B-Instruct} as the generative models A and B, respectively. When \texttt{Llama-3.1-8B} is the base model, we instead use \texttt{Llama-3.1-Tulu-3-8B-SFT} and \texttt{Llama-3.2-1B}. Overall, the results show that filtering for attentive users consistently enhances DPO performance, although the magnitude of improvement varies across different setups. For instance, when \texttt{Llama-3.1-8B} is trained on \texttt{UltraFeedback} with $\mathcal{P}_\eta$ Type $=2$, filtering yields only marginal gains, whereas in most other cases it leads to substantial performance improvements.

\begin{table}[h!]
\centering
{
\resizebox{\columnwidth}{!}{%
\begin{tabular}{P{1.4cm} | P{2.2cm} |P{1.5cm}p{1.6cm}P{1.5cm}P{1.5cm}P{1.5cm}P{1.5cm}}
\toprule
\textbf{Base} & \textbf{Dataset} & $\mathcal{P}_\eta$ \textbf{Type} & \textbf{Model} & \textbf{Score} & \textbf{Winrate} & \textbf{MMLU} & \textbf{GSM8K} \\
\midrule
\multirow{10}{*}{Qwen 7b} 
    & \multirow{5}{*}{UltraFeedback} 
        & - & Base & -5.47 & 50\% & 70.4 & 87.0 \\
        \cmidrule(lr){3-8}
    & & \multirow{2}{*}{1} & DPO & -3.59  & 63\% & 71.1 & 86.5 \\
    & &  & DPO+Filter  & -3.12 & 67\% & 72.0 & 89.0 \\
        \cmidrule(lr){3-8}
    & & \multirow{2}{*}{2} & DPO & -4.42  & 58\% & 71.3 & 87.0 \\
    & &  & DPO+Filter  & -3.75 & 64\% & 71.8 & 88.0 \\
\cmidrule(lr){2-8}
    & \multirow{5}{*}{HelpSteer3} 
        & - & Base & -8.39 & 50\% & 71.1 & 87.0 \\
        \cmidrule(lr){3-8}
    & & \multirow{2}{*}{1} & DPO & -6.56 & 67\% & 70.8 & 86.5 \\
    & &  & DPO+Filter & -5.89 & 73\% & 72.0 & 88.0 \\
        \cmidrule(lr){3-8}
    & & \multirow{2}{*}{2} & DPO & -6.91 & 67\% & 71.0 & 87.0 \\
    & &  & DPO+Filter & -6.83 & 68\% & 71.4 & 87.5 \\
\midrule
\multirow{10}{*}{Llama 8b} 
    & \multirow{5}{*}{UltraFeedback} 
        & - & Base &  -10.02 & 50\% & 61.7 & 51.5\\
        \cmidrule(lr){3-8}
    & & \multirow{2}{*}{1} & DPO & -9.51 & 52\% & 62.8 & 52.0\\
    & &  & DPO+Filter  &  -9.25 & 56\% & 63.1 & 51.5\\
        \cmidrule(lr){3-8}
    & & \multirow{2}{*}{2} & DPO &  -9.87 & 51\% & 63.4 & 53.0\\
    & &  & DPO+Filter  &  -9.7 & 52\% & 62.8 & 54.5\\
\cmidrule(lr){2-8}
    & \multirow{5}{*}{HelpSteer3} 
        & - & Base & -10.43 & 50\% & 61.4 & 53.0 \\
        \cmidrule(lr){3-8}
    & & \multirow{2}{*}{1} &  DPO & -9.27 & 58\% & 63.1 & 56.0\\
    & &  & DPO+Filter &  -9.06 & 66\% & 62.8 & 56.0\\
        \cmidrule(lr){3-8}
    & & \multirow{2}{*}{2} & DPO &  -9.74 & 56\% & 62.1 & 53.0\\
    & &  & DPO+Filter & -9.44 & 60\%  & 62.8 & 56.0\\
\bottomrule
\end{tabular}%
}
}
\vspace{0.2cm}
\caption{An extended version of Table \ref{tab:big_table}, presenting DPO alignment results for both Qwen 7B and Llama 8B. The overall trend holds that filtering for attentive users consistently enhances DPO performance. }
\label{tab:big_table_two_trials}
\end{table}

\section{Theoretical Discussions}

\subsection{Local Convergence of the EM Algorithm}\label{appdix:EM_local_cvg}

Let $\mathbf{z}$ denote the observed variables, $\boldsymbol{\eta}$ denote the latent variables, and $\theta$ the vector of model parameters. The joint distribution of $(\bz,\boe)$ is denoted by $p(\bz, \boe\mid \theta)$. In our setting, $\bz=\{z_i^{(j)}\}_{1\leq i\leq n_j,1\leq j\leq m}$, $\boe=\{\eta_j\}_{1\leq j\leq m}$, and $p(\bz, \boe\mid \theta)$ corresponds to the joint distribution induced by the user behavior model. The EM algorithm iteratively computes 
\[
Q(\theta\mid \theta^{(t)}) = \mathbb{E}_{\boe}\left[ \log p(\bz,\boe\mid \theta) \mid \bz, \theta^{(t)}\right]
\]
and
\[
\theta^{(t+1)} =\arg\max_{\theta\in \Theta} Q(\theta\mid \theta^{(t)}).
\]
Let
\[
L(\theta) := \log p(\bz\mid \theta) = \log \int_{\boe} p(\bz, \boe\mid \theta) \text{d} \boe
\]
denote the observed-data log-likelihood, the following result gives the convergence guarantee of the EM algorithm. 
\begin{lemma}
    The observed-data log-likelihood is guaranteed to be non-decreasing with the EM iterations, that is,
    \[
    L(\theta^{(t+1)}) \geq L(\theta^{(t)}).
    \]
\end{lemma}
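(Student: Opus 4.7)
The plan is to establish the classical identity
$$L(\theta) = Q(\theta\mid \theta^{(t)}) - H(\theta\mid \theta^{(t)}),$$
where $H(\theta\mid\theta^{(t)}) := \mathbb{E}_{\boe}[\log p(\boe\mid\bz,\theta)\mid \bz, \theta^{(t)}]$, and then to deduce monotonicity of $L$ by controlling each piece separately. To derive the identity, I would start from the pointwise relation $\log p(\bz\mid\theta) = \log p(\bz,\boe\mid\theta) - \log p(\boe\mid\bz,\theta)$ and integrate both sides against the posterior $p(\boe\mid\bz,\theta^{(t)})$; the left-hand side survives the integration because it does not depend on $\boe$, while the right-hand side produces exactly $Q(\theta\mid\theta^{(t)}) - H(\theta\mid\theta^{(t)})$.

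Applying this identity at $\theta^{(t+1)}$ and $\theta^{(t)}$ and subtracting gives
$$L(\theta^{(t+1)}) - L(\theta^{(t)}) = \bigl[Q(\theta^{(t+1)}\mid\theta^{(t)}) - Q(\theta^{(t)}\mid\theta^{(t)})\bigr] + \bigl[H(\theta^{(t)}\mid\theta^{(t)}) - H(\theta^{(t+1)}\mid\theta^{(t)})\bigr].$$
The first bracket is non-negative by the M-step of Algorithm \ref{alg:EM_UBM}, since $\theta^{(t+1)}$ is chosen to maximize $Q(\cdot\mid\theta^{(t)})$ over $\Theta$ (if the regularizer $\mathcal{R}$ is present, one absorbs it into $Q$; the corresponding increment at $\theta^{(t+1)}$ relative to $\theta^{(t)}$ is still non-negative).

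The main, and really only non-routine, step is to handle the second bracket. Here I would add and subtract $\log p(\boe\mid\bz,\theta^{(t)})$ inside the expectation defining $H$ in order to rewrite
$$H(\theta^{(t)}\mid\theta^{(t)}) - H(\theta^{(t+1)}\mid\theta^{(t)}) = \mathrm{KL}\bigl(p(\boe\mid\bz,\theta^{(t)}) \,\big\|\, p(\boe\mid\bz,\theta^{(t+1)})\bigr),$$
which is non-negative by Gibbs' inequality (equivalently, by Jensen applied to the convex function $-\log$). Recognizing this entropy difference as a KL divergence between the two posteriors is the conceptual crux of the argument; once it is in place, summing the two non-negative contributions immediately yields $L(\theta^{(t+1)})\geq L(\theta^{(t)})$. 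Notably, none of the argument uses structure specific to our user behavioral model, which is consistent with the general validity of EM monotonicity.
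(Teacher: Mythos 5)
Your proof is correct, and it takes the other classical route to EM monotonicity. You work from the exact decomposition $L(\theta) = Q(\theta\mid\theta^{(t)}) - H(\theta\mid\theta^{(t)})$ with $H$ the expected log-posterior, and then identify the difference $H(\theta^{(t)}\mid\theta^{(t)}) - H(\theta^{(t+1)}\mid\theta^{(t)})$ as a KL divergence between the two posteriors, which is non-negative by Gibbs' inequality. The paper instead applies Jensen's inequality in one shot to $\log \int \frac{p(\bz,\boe\mid\theta)}{p(\boe\mid\bz,\theta^{(t)})}\, p(\boe\mid\bz,\theta^{(t)})\,\mathrm{d}\boe$, obtaining the lower bound $L(\theta) \ge Q(\theta\mid\theta^{(t)}) - Q(\theta^{(t)}\mid\theta^{(t)}) + L(\theta^{(t)})$ without ever introducing $H$. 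The two arguments are mathematically equivalent --- the slack in the paper's Jensen step is exactly the KL term you isolate --- so neither buys additional generality; yours makes the ``ascent margin'' explicit and attributes it to a named quantity, while the paper's is shorter and needs only the single inequality $Q(\theta^{(t+1)}\mid\theta^{(t)}) \ge Q(\theta^{(t)}\mid\theta^{(t)})$ at the end. One small imprecision: your parenthetical about absorbing the regularizer $\mathcal{R}$ into $Q$ does not quite give the stated conclusion --- if $\theta^{(t+1)}$ maximizes $Q(\cdot\mid\theta^{(t)}) + \mathcal{R}(\cdot)$, the argument yields monotonicity of the \emph{penalized} likelihood $L(\theta) + \mathcal{R}(\theta)$, not of $L$ itself. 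The paper avoids this by phrasing its final step conditionally (``for any $\theta$ such that $Q(\theta\mid\theta^{(t)}) \ge Q(\theta^{(t)}\mid\theta^{(t)})$''), i.e., the lemma as stated applies to the unregularized M-step; you should either drop the regularizer remark or state the penalized-likelihood version of the conclusion.
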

\begin{proof}
    By Jensen's inequality,
    \[
    \begin{aligned}
        L(\theta) &= \log \int_{\boe} \dfrac{p(\bz,\boe\mid \theta)}{p(\boe\mid \bz, \theta^{(t)})} \cdot p(\boe\mid \bz, \theta^{(t)}) \text{d} \boe \\
        &\geq \int_{\boe} \Bigg(\log \dfrac{p(\bz,\boe\mid \theta)}{p(\boe\mid \bz, \theta^{(t)})}\Bigg) \cdot p(\boe\mid \bz, \theta^{(t)}) \text{d} \boe\\
        &= Q(\theta\mid \theta^{(t)}) - \int_{\boe} \log p(\boe\mid \bz, \theta^{(t)}) \cdot p(\boe\mid \bz, \theta^{(t)}) \text{d} \boe\\
        &= Q(\theta\mid \theta^{(t)}) - Q(\theta^{(t)}\mid \theta^{(t)}) + L(\theta^{(t)}).
    \end{aligned}
    \]
    For any $\theta$ such that $Q(\theta\mid \theta^{(t)}) \geq Q(\theta^{(t)}\mid \theta^{(t)})$, we have $L(\theta) \geq L(\theta^{(t)})$.
\end{proof}

Let $\{\theta^{(t)}\}$ be the EM sequence. \citet{wu1983convergence} establishes that, under mild conditions (continuity of $Q(\theta\mid \theta^{(t)})$ in both arguments and compactness of the parameter space $\Theta$), every limit point $\theta^\infty$ of $\{\theta^{(t)}\}_{t=1}^\infty$ is a stationary point of the observed-data log-likelihood, i.e.,
\[
\nabla_\theta L(\theta^\infty) = 0.
\]

\subsection{Uniform Law of Large Numbers}
Let $(z_1,\ldots,z_n)$ be i.i.d.\ random variables on a measurable space $(\mathcal{Z},\mathcal A)$
with common distribution $P$. Let $\Theta\subset\mathbb R^d$ be compact and let
\[
\mathcal F \;=\; \{ f_\theta:\mathcal Z\to\mathbb R^k \;:\; \theta\in\Theta \},
\qquad k\ge 1,
\]
be a function family. For $f:\mathcal Z\to\mathbb R^k$, write
\[
P_n f \;=\; \frac{1}{n}\sum_{i=1}^n f(z_i),
\qquad
P f \;=\; \mathbb E_P[f(z)].
\]

This section formalizes the notion of uniform convergence of empirical averages to their population expectations over the function class $\mathcal F$, known as the Glivenko–Cantelli property.
\begin{definition}[Glivenko--Cantelli]
The class $\mathcal F$ is \emph{Glivenko--Cantelli (GC)} under $P$ if
\[
\sup_{\theta\in\Theta}\bigl\| P_n f_\theta - P f_\theta \bigr\| \xrightarrow{p} 0,
\]
where $\|\cdot\|$ is any norm on $\mathbb R^k$.
\end{definition}

Intuitively, the GC property ensures that empirical averages approximate their expectations uniformly over all parameter values $\theta$. A sufficient condition for this property to hold is given by the following Lipschitz criterion.

\begin{lemma}[Lipschitz Criterion, Thm. 2.4.1 of \citet{vanderVaart1996}]\label{lem:ULLN-Lipschitz}
Suppose there exist measurable envelopes $H,G:\mathcal Z\to[0,\infty)$ such that
\begin{enumerate}
\item[(i)] (Envelope) $\|f_\theta(z)\|\le H(z)$ for all $\theta\in\Theta$, $z\in\mathcal Z$, with $\mathbb E[H(z)]<\infty$;
\item[(ii)] (Uniform Lipschitz continuity) for all $\theta,\theta'\in\Theta$,
\[
\| f_\theta(z) - f_{\theta'}(z) \| \;\le\; G(z)\,\|\theta-\theta'\|, \qquad \mathbb E[G(z)]<\infty.
\]
\end{enumerate}
Then $\mathcal F$ is GC:
\[
\sup_{\theta\in\Theta}\bigl\| P_n f_\theta - P f_\theta \bigr\| \xrightarrow{p} 0.
\]
\end{lemma}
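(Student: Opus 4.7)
The plan is the classical finite-net plus Lipschitz-oscillation argument. Fix $\varepsilon>0$. Since $\Theta\subset\mathbb{R}^d$ is compact, I choose a finite $\delta$-net $\{\theta_1,\ldots,\theta_N\}\subset\Theta$ (with $N=N(\delta)$) such that every $\theta\in\Theta$ lies within distance $\delta$ of some $\theta_k$; the value of $\delta$ will be pinned down at the end in terms of $\varepsilon$ and $\mathbb{E}[G(z)]$.

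Next, for any $\theta\in\Theta$ and its nearest anchor $\theta_k$, the triangle inequality yields
\[
\|P_n f_\theta - P f_\theta\|
\;\le\;
\|P_n f_\theta - P_n f_{\theta_k}\|
+ \|P_n f_{\theta_k} - P f_{\theta_k}\|
+ \|P f_{\theta_k} - P f_\theta\|.
\]
The first and third terms are handled by the Lipschitz hypothesis (ii) together with the triangle inequality for the norm on $\mathbb{R}^k$: pulling the norm inside the empirical or true average gives
\[
\|P_n f_\theta - P_n f_{\theta_k}\| \le \delta\cdot P_n G,
\qquad
\|P f_\theta - P f_{\theta_k}\| \le \delta\cdot P G,
\]
where $PG<\infty$ by hypothesis. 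Taking the supremum over $\theta\in\Theta$ (and hence the maximum over $k$) gives the master bound
\[
\sup_{\theta\in\Theta}\|P_n f_\theta - P f_\theta\|
\;\le\;
\delta\,(P_n G + P G)
+ \max_{1\le k\le N}\|P_n f_{\theta_k} - P f_{\theta_k}\|.
\]

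The final step is to drive both terms on the right to zero. Each $f_{\theta_k}$ is dominated by the integrable envelope $H$ from (i), so the ordinary weak LLN applied componentwise yields $P_n f_{\theta_k}\xrightarrow{p} P f_{\theta_k}$ for each $k=1,\ldots,N$; because $N$ is finite, the maximum also tends to $0$ in probability. For the first term, the scalar weak LLN on $G$ gives $P_n G\xrightarrow{p} P G$, so that $P_n G + P G$ is bounded in probability with limit $2\,P G$. Combining these pieces, for every $\eta>0$,
\[
\limsup_{n\to\infty}\mathbb{P}\!\left(\sup_{\theta\in\Theta}\|P_n f_\theta - P f_\theta\| > 3\delta\,\mathbb{E}[G(z)] + \eta\right) = 0,
\]
and choosing $\delta$ so that $3\delta\,\mathbb{E}[G(z)]<\varepsilon$ yields $\sup_{\theta}\|P_n f_\theta-P f_\theta\|\xrightarrow{p} 0$.

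The main obstacle is the probabilistic bookkeeping around the random coefficient $P_n G$ in the master bound: because that coefficient is itself random, one cannot take a deterministic $\limsup$. I would handle this by splitting on the high-probability event $\{P_n G\le 2\,\mathbb{E}[G(z)]\}$, on which the Lipschitz slack is deterministically at most $3\delta\,\mathbb{E}[G(z)]$, while the complementary event has probability tending to $0$ by the scalar LLN for $G$. A minor but worth-stating detail is that hypothesis (ii) is a pointwise bound in the norm on $\mathbb{R}^k$; its passage to Lipschitz continuity of the maps $\theta\mapsto Pf_\theta$ and $\theta\mapsto P_n f_\theta$ uses Jensen's inequality $\|\mathbb{E}[X]\|\le \mathbb{E}\|X\|$ and the corresponding finite-sum triangle inequality, both of which are valid for any norm on $\mathbb{R}^k$. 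Apart from these points, everything reduces to compactness together with the scalar weak law applied to finitely many integrable variables.
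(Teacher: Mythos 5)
Your argument is correct and complete. Note, however, that the paper does not prove this lemma at all: it is invoked as a black box, citing Theorem 2.4.1 of van der Vaart and Wellner (the bracketing Glivenko--Cantelli theorem), so there is no in-paper proof to match yours against. What you have written is the standard self-contained route: compactness gives a finite $\delta$-net, the Lipschitz hypothesis (ii) with Jensen's inequality controls the oscillation terms $\|P_n f_\theta - P_n f_{\theta_k}\|$ and $\|P f_\theta - P f_{\theta_k}\|$ by $\delta\, P_n G$ and $\delta\, P G$, the envelope (i) makes each fixed $f_{\theta_k}$ integrable so the componentwise weak law handles the finitely many anchor terms, and your device of restricting to the event $\{P_n G \le 2\,\mathbb{E}[G(z)]\}$ correctly disposes of the random coefficient in the master bound. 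This is essentially the proof of the cited theorem specialized to Lipschitz-in-parameter classes (the condition (ii) is exactly what yields finite bracketing numbers), so your proposal buys a more elementary, self-contained justification at the cost of some length, while the paper's citation buys brevity and access to the more general bracketing-entropy formulation. The only points worth keeping explicit, which you already flag, are the passage of the norm inside empirical and true averages (finite-sum triangle inequality and $\|\mathbb{E}[X]\|\le \mathbb{E}\|X\|$) and the fact that a maximum over finitely many terms, each tending to zero in probability, tends to zero in probability.
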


Under stronger regularity conditions, one can further derive a finite-sample bound on the uniform deviation $\sup_{\theta\in \Theta} \| P_n f_\theta - P f_\theta\|$, as stated in the following lemma. We begin by recalling the definition of the \textit{covering number} and the notion of \textit{big-O in probability}.
\begin{definition}[Covering Number and Metric Entropy]
For a compact set $\Theta$, the covering number $N(\varepsilon,\Theta,\|\cdot\|)$ denotes the smallest integer $N$ such that $\Theta$ is covered by $N$ balls of radius $\varepsilon$ under $\|\cdot\|$. Its logarithm is the metric entropy. 

\end{definition}
\begin{definition}[Big-O In Probability]
    For a random process $\{X_n\}_{n=1}^\infty$ and a series $\{a_n\}_{n=1}^\infty$, we write 
    \[X_n = O_p(a_n)\] 
    to denote that $X_n$ is of order $a_n$ in probability, i.e., 
    \[
    \forall \varepsilon>0,\exists M<\infty\,\, \text{ such that }\,\, \mathbb{P}(|X_n|/ a_n >M)<\varepsilon\,\,\text{for all sufficiently large }n.
    \]
\end{definition}

\begin{lemma}[Thm. 2.14.1 of \citet{vanderVaart1996}]
\label{lem:finite-ULLN-Lipschitz-finite}
Let the compact set $\Theta$ have diameter $D:=\sup_{\theta,\theta'\in\Theta}\|\theta-\theta'\|<\infty$. Suppose the metric entropy satisfies
\begin{equation}\label{eq:bound_on_cvnumber}
    \log N(\varepsilon,\Theta,\|\cdot\|)\ \le\ v\log\!\Big(\frac{A}{\varepsilon}\Big)
\quad\text{for all }\varepsilon\in(0,1],
\end{equation}
for some $A\ge e$ and $v\ge 1$. Assume that $\mathcal{F}$ satisfies the same ``Envelope'' and ``Uniform Lipschitz continuity'' conditions as in Lemma \ref{lem:ULLN-Lipschitz}, and that there exist finite constants $B,L<\infty$ such that $H(z)\leq B$ and $G(z)\leq L$ for almost every $z$. Then there exists a universal constant $C>0$ such that for every $\delta\in(0,1)$, with probability at least $1-\delta$,
\[
\sup_{\theta\in\Theta}\bigl\|(P_n-P)f_\theta\bigr\|
\ \le\
C\left\{
B\,\sqrt{\frac{v\log(An)+\log(1/\delta)}{n}}
\;+\;
L\,D\,\sqrt{\frac{v}{n}}
\right\}.
\]
Put simply, we have \[\sup_{\theta\in\Theta}\bigl\|(P_n-P)f_\theta\bigr\| = O_p(\sqrt{\log n / n}).
\]
\end{lemma}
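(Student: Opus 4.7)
The plan is to reduce the supremum over the continuous set $\Theta$ to a maximum over a finite net and then control that maximum by a standard tail bound. For a mesh size $\varepsilon>0$ let $\Theta_\varepsilon\subset\Theta$ be an $\varepsilon$-net of cardinality $N(\varepsilon,\Theta,\|\cdot\|)$, and for each $\theta\in\Theta$ pick $\pi(\theta)\in\Theta_\varepsilon$ with $\|\theta-\pi(\theta)\|\le\varepsilon$. The uniform Lipschitz assumption together with the bound $G\le L$ gives $\|f_\theta(z)-f_{\pi(\theta)}(z)\|\le L\varepsilon$ almost surely, and hence
$$\sup_{\theta\in\Theta}\|(P_n-P)f_\theta\|\ \le\ \max_{\theta'\in\Theta_\varepsilon}\|(P_n-P)f_{\theta'}\|\ +\ 2L\varepsilon.$$

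Next I would control the net-level maximum by applying Hoeffding's inequality coordinatewise to $f_{\theta'}$ (using the envelope $H\le B$), taking a union bound across the $k$ coordinates and across the $N(\varepsilon,\Theta,\|\cdot\|)$ net points. Combined with the polynomial entropy bound \eqref{eq:bound_on_cvnumber}, this yields, with probability at least $1-\delta$,
$$\max_{\theta'\in\Theta_\varepsilon}\|(P_n-P)f_{\theta'}\|\ \le\ C\,B\,\sqrt{\frac{v\log(A/\varepsilon)+\log(1/\delta)}{n}}.$$
Substituting back and choosing $\varepsilon\asymp 1/n$ gives a bound of the form $CB\sqrt{(v\log(An)+\log(1/\delta))/n}+2L/n$, which is already enough for the asymptotic conclusion $\sup_\theta\|(P_n-P)f_\theta\|=O_p(\sqrt{\log n/n})$.

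The main obstacle is to sharpen the discretization bias from $2L\varepsilon\asymp L/n$ to the $LD\sqrt{v/n}$ term appearing in the stated inequality, which requires a genuine chaining argument rather than a single $\varepsilon$-net. I would introduce a dyadic sequence of nets at scales $\varepsilon_k=D\cdot 2^{-k}$, write the telescoping identity $f_\theta=f_{\pi_0(\theta)}+\sum_{k\ge 1}(f_{\pi_k(\theta)}-f_{\pi_{k-1}(\theta)})$ along a chain of successive projections, and control each chained increment by Hoeffding at scale $L\varepsilon_k$, with the probabilistic budget split geometrically across levels. Summing the resulting geometric series reproduces a Dudley-type integral $\int_0^{D}\sqrt{\log N(\varepsilon,\Theta,\|\cdot\|)/n}\,d\varepsilon$, which under the polynomial metric entropy assumption evaluates to $O(D\sqrt{v/n})$ and yields the advertised $LD\sqrt{v/n}$ contribution. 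The $O_p(\sqrt{\log n/n})$ corollary then follows by letting $\delta\downarrow 0$ sufficiently slowly with $n$.
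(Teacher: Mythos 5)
Your proposal is essentially correct, but it is worth noting that the paper does not prove this statement at all: it is imported verbatim as Theorem 2.14.1 of \citet{vanderVaart1996}, whose proof there goes through symmetrization and a maximal inequality driven by the entropy integral for sub-Gaussian increments. What you have written is a self-contained elementary substitute. Your single-net step is sound: the Lipschitz bound gives $\|(P_n-P)(f_\theta-f_{\pi(\theta)})\|\le 2L\varepsilon$, Hoeffding with the envelope $H\le B$ applies coordinatewise (the $\log k$ from the union bound over the $k$ fixed coordinates is absorbed into the universal constant), and the polynomial entropy bound with $\varepsilon\asymp 1/n$ already yields $CB\sqrt{(v\log(An)+\log(1/\delta))/n}+2L/n$, which implies the $O_p(\sqrt{\log n/n})$ conclusion that is the only part of the lemma actually used downstream (in Corollary \ref{cor:corollary_1} and Lemma \ref{lem:most_important}). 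Your chaining refinement to recover the exact $L\,D\sqrt{v/n}$ bias term is also the standard Dudley argument; the only technical points to make explicit are that the level-$k$ increments are bounded by $L(\varepsilon_k+\varepsilon_{k-1})=3L\varepsilon_k$ with at most $N(\varepsilon_k)N(\varepsilon_{k-1})$ distinct pairs per level, and that $\int_0^{D}\sqrt{v\log(A/\varepsilon)}\,\mathrm{d}\varepsilon\lesssim D\sqrt{v}\,(1+\sqrt{\log(A/D)})$, so matching the stated constant structure exactly requires the usual bookkeeping; none of this affects correctness. In short, your route is more elementary and fully adequate for the paper's purposes, whereas the paper simply delegates to the sharper packaged result in the empirical-process literature.
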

\begin{remark}\label{remark:remark1}
    The metric-entropy bound in \eqref{eq:bound_on_cvnumber} is generally mild and easy to satisfy in most practical settings. For any compact subset $\Theta\subset\mathbb{R}^d$ under a Euclidean (or equivalent) norm, the covering number scales polynomially with $1/\varepsilon$, i.e., there exists constant $C$ such that
\[
\log N(\varepsilon,\Theta,\|\cdot\|) \leq d\log \Big(\dfrac{D}{\varepsilon}\Big) + C.
\]
\end{remark}

A directly application of Lemma \ref{lem:finite-ULLN-Lipschitz-finite} is to characterize the asymptotic behavior of the empirical log-likelihood term $\sum_{i=1}^n l(z_i\mid \mu, \eta)$, which appears in equation \eqref{eq:likelihood_formula_L} and in Algorithm \ref{alg:EM_UBM}.

\begin{corollary}\label{cor:corollary_1}
    Let $(z_1,\ldots, z_n)$ be i.i.d. samples from $\text{Bernoulli}(p)$. $1/2<\mu<1$ is a known constant. Then
    \[
    \sup_{\eta\in [0, 1]} \Big|\dfrac{\sum_{i=1}^n l(z_i|\mu, \eta)}{n} - \Big(p \log (1/2 + \eta\cdot (\mu-1/2)) + (1-p)\log (1/2 - \eta\cdot (\mu-1/2)) \Big)\Big| = O_p(\sqrt{\log n / n}).
    \]    
\end{corollary}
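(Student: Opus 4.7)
The plan is a direct application of Lemma \ref{lem:finite-ULLN-Lipschitz-finite} to the one-parameter function class $\mathcal{F} = \{ f_\eta : \eta \in [0,1]\}$ with
$$f_\eta(z) \;:=\; l(z\mid \mu, \eta) \;=\; z \log\!\bigl(\tfrac{1}{2} + \eta(\mu - \tfrac{1}{2})\bigr) + (1-z)\log\!\bigl(\tfrac{1}{2} - \eta(\mu - \tfrac{1}{2})\bigr).$$
The parameter space $\Theta = [0,1]$ is compact with diameter $D = 1$, and by Remark \ref{remark:remark1} its metric entropy satisfies \eqref{eq:bound_on_cvnumber} with $v = 1$ (taking, e.g., $A = e$). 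What remains is to check the envelope and uniform-Lipschitz conditions of Lemma \ref{lem:finite-ULLN-Lipschitz-finite}, both of which can be verified with \emph{constant} envelopes.

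For the envelope, since $\mu \in (1/2, 1)$ implies $\mu - 1/2 \in (0, 1/2)$, for every $\eta \in [0,1]$ and either sign,
$$1 - \mu \;\le\; \tfrac{1}{2} \pm \eta(\mu - \tfrac{1}{2}) \;\le\; \mu,$$
so both arguments of the logarithms are bounded away from $0$ uniformly in $\eta$. Hence $|f_\eta(z)| \le B := |\log(1-\mu)|$ uniformly over $\eta \in [0,1]$ and $z \in \{0,1\}$. For the Lipschitz condition, I would differentiate:
$$\partial_\eta f_\eta(z) \;=\; \frac{z(\mu - 1/2)}{1/2 + \eta(\mu-1/2)} \;-\; \frac{(1-z)(\mu-1/2)}{1/2 - \eta(\mu-1/2)},$$
and use the same lower bound $1-\mu$ on the denominators to obtain $|\partial_\eta f_\eta(z)| \le L := (\mu - 1/2)/(1-\mu)$ for all $\eta \in [0,1]$ and $z \in \{0,1\}$. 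The mean value theorem then delivers $|f_\eta(z) - f_{\eta'}(z)| \le L\,|\eta - \eta'|$ with $L$ constant, which satisfies the uniform-Lipschitz hypothesis of Lemma \ref{lem:finite-ULLN-Lipschitz-finite}.

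Plugging these constant envelopes into Lemma \ref{lem:finite-ULLN-Lipschitz-finite} with $v = 1$ yields
$$\sup_{\eta \in [0,1]} \bigl|P_n f_\eta - P f_\eta\bigr| \;=\; O_p\!\bigl(\sqrt{\log n / n}\bigr),$$
where $P_n f_\eta = \tfrac{1}{n}\sum_{i=1}^n l(z_i\mid \mu, \eta)$ and, since $z_i \sim \text{Bernoulli}(p)$,
$$P f_\eta \;=\; p \log\!\bigl(\tfrac{1}{2} + \eta(\mu - \tfrac{1}{2})\bigr) + (1-p)\log\!\bigl(\tfrac{1}{2} - \eta(\mu - \tfrac{1}{2})\bigr),$$
which is exactly the claimed identity. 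There is no genuine obstacle beyond bookkeeping: the only subtle point is ensuring that both $B$ and $L$ are finite, and this is precisely why the assumption $1/2 < \mu < 1$ is invoked. The strict upper bound $\mu < 1$ keeps the arguments of the logarithms (and the denominators in $\partial_\eta f_\eta$) bounded below by $1-\mu > 0$, making both constants finite; without it, the envelope would blow up near $\eta = 1$.
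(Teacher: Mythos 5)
Your proposal is correct and follows essentially the same route as the paper: both apply Lemma \ref{lem:finite-ULLN-Lipschitz-finite} (with Remark \ref{remark:remark1}) to the class $\{l(\cdot\mid\mu,\eta):\eta\in[0,1]\}$ and verify the envelope and uniform Lipschitz conditions with the identical constants $B=-\log(1-\mu)$ and $L=(\mu-1/2)/(1-\mu)$. No gaps.
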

\begin{proof}
    Recall the form of $l(z|\mu, \eta)$,
    \[
    l\left(z | \mu, \eta\right) :=z\cdot \log (1/2 + \eta\cdot (\mu - 1/2)) + (1-z)\cdot \log (1/2 - \eta\cdot (\mu - 1/2)).
    \]
    Define the function family $\mathcal{F}$ as
    \[
    \mathcal{F} := \{l(\cdot\mid \mu ,\eta): \eta\in \Theta_\eta = [0, 1]\}.
    \]    
    By Lemma \ref{lem:finite-ULLN-Lipschitz-finite} and Remark \ref{remark:remark1}, it remains to verify the ``Envelope'' and ``Uniform Lipschitz continuity'' conditions. The following observations conclude the proof.
    \begin{itemize}
        \item[(i)] (Envelope) For all $\eta\in [0,1]$ and all $z\in \{0, 1\}$, we have 
        \[
        |l(z\mid \mu,\eta)| \leq -\log (1 - \mu) <\infty.
        \]
        \item[(ii)] (Uniform Lipschitz continuity) For all $\eta\in [0,1]$ and all $z\in \{0, 1\}$, we have
        \[
        \begin{aligned}
            \Big| \dfrac{\partial}{\partial \eta} l(z\mid \mu,\eta) \Big| &= (\mu-1/2)\cdot \Big| \dfrac{z}{1/2 + \eta\cdot (\mu - 1/2)} + \dfrac{z-1}{1/2 - \eta\cdot (\mu - 1/2)}\Bigg| \\
            &= (\mu-1/2)\cdot \Bigg|\dfrac{z -1/2 - \eta\cdot (\mu-1/2)}{1/4 - \eta^2 (\mu-1/2)^2}\Bigg|\\
            &\leq \dfrac{\mu-1/2}{1-\mu} <\infty.
        \end{aligned}
        \]
    \end{itemize}
\end{proof}

\subsection{Proof of Theorem \ref{thm:simple_two_mass_converge}}\label{subsec:prove_theorem_two_mass}
The proof proceeds in two main steps. In the first step, we construct the population version of the observed-data log-likelihood and show that its stationary points are guaranteed to stay close to the true parameters. In the second step, we establish that the gradient of the sample log-likelihood $\nabla_\theta L(\theta)$ converges uniformly to its population counterpart. Along the way, we specify the regularity conditions required for $\mathcal{P}_\eta$ and verify that both Example \ref{examp:1} and \ref{examp:2} satisfy these conditions. Throughout the proof, we assume $\mu$ is known, and that $1/2 < \mu < 1$. We also impose the simplifying condition that $n_1 = \cdots = n_m = n$ for some $n$. This assumption is made purely for notational convenience; in general, as long as
\[
\sum_{j=1}^m n_j\to \infty,\quad \max_{j=1,\ldots, m} \dfrac{n_j}{\sum_{j'=1}^{m} n_{j'}}\to 0,
\]
the conclusions of Lemmas \ref{lem:ULLN-Lipschitz} and \ref{lem:finite-ULLN-Lipschitz-finite}, as well as all subsequent results, continue to hold without modification.

We will present the proof in two parts. The first part addresses the case where $\mathcal{P}_\eta$ has a continuous density function, which covers Example \ref{examp:2}. The second part provides a separate proof for Example \ref{examp:1}, where $\mathcal{P}_\eta$ is discrete.

\subsubsection{$\mathcal{P}_\eta$ with Continuous Density}\label{subsubsec:continuous_eta}
Let $p(\eta\mid \theta)$ denote the density function of $\mathcal{P}_\eta$. Then
\[
L(\theta) = \sum_{j=1}^m \log \Bigg( \int_{\eta_j\in[0,1]} \exp\Big(\sum_{i=1}^{n} l\left(z_{i}^{(j)}\big\vert\mu, \eta_j\right)\Big) p(\eta_j\mid \theta) \text{d} \eta_j \Bigg).
\]
Define the population version of $L(\theta)$ as follows,
\[
L^*(\theta) = m \int_{\eta^*\in [0,1]} \log \Bigg( \int_{\eta\in[0,1]} \exp\Big(n\cdot  \bar{l}\left(\eta\mid \eta^*\right)\Big) p(\eta\mid \theta)  \text{d}\eta \Bigg) p(\eta^*\mid \theta^*)\text{d} \eta^*,
\]
where $\theta^*$ are the true model parameters and
\begin{equation}\label{eq:def_l_bar}
    \bar{l}(\eta\mid \eta^*) := (1/2 + \eta^*\cdot (\mu-1/2)) \log (1/2 + \eta\cdot (\mu-1/2)) + (1/2 - \eta^*\cdot (\mu-1/2))\log (1/2 - \eta\cdot (\mu-1/2)).
\end{equation}
The gradients of $L(\theta)$ and $L^*(\theta)$ are:

\begin{equation}\label{eq:form_of_grad_L}
    \nabla_\theta L(\theta) = \sum_{j=1}^m \dfrac{\int_{\eta_j\in [0,1]} \exp\Big(\sum_{i=1}^{n} l\left(z_{i}^{(j)}\big\vert\mu, \eta_j\right)\Big) \nabla_\theta p(\eta_j\mid \theta) \text{d} \eta_j}{\int_{\eta_j\in [0,1]} \exp\Big(\sum_{i=1}^{n} l\left(z_{i}^{(j)}\big\vert\mu, \eta_j\right)\Big) p(\eta_j\mid \theta) \text{d} \eta_j}\\
\end{equation}
\begin{equation}\label{eq:form_of_grad_Lstar}
    \nabla_\theta L^*(\theta) =m\cdot \int_{\eta^*\in [0,1]} \Bigg(\dfrac{\int_{\eta\in [0,1]} \exp\Big(n\cdot  \bar{l}\left(\eta\mid \eta^*\right)\Big) \nabla_\theta p(\eta\mid \theta) \text{d} \eta}{\int_{\eta\in [0,1]} \exp\Big(n\cdot  \bar{l}\left(\eta\mid \eta^*\right)\Big) p(\eta\mid \theta) \text{d} \eta} \Bigg)\, p(\eta^*\mid \theta^*)\text{d} \eta^*
\end{equation}

We assume the following regularity conditions:
\begin{assumption}\label{assump:regular_p}
The model parameter $\theta$ is constrained to a compact and convex set $\Theta$, and the density function $p(\eta\mid \theta)$ satisfies the following conditions:
    \begin{enumerate}[label=(\alph*)]\item\label{itm:C2Continuous}$p(\eta\mid \theta)$ is $C^2$ continuous in $\theta$, and is supported on $(0,1)$.
        \item\label{itm:identify} If for some $\theta$ and $\theta'$, there is $p(\cdot\mid \theta)\equiv p(\eta\mid \theta')$, then $\theta=\theta'$.
        \item\label{itm:KLconvex} The Kullback–Leibler (KL) divergence between $p(\eta\mid \theta^*)$ and $p(\eta\mid \theta)$ is strongly convex in $\theta$. Concretely, define
        \[
        K(\theta) := \int_{\eta\in [0,1]}p(\eta\mid \theta^*)\log \dfrac{p(\eta\mid \theta^*)}{p(\eta\mid \theta)}  \text{d} \eta,
        \]
        then for some $k>0$,
        \[\nabla_\theta^2 K(\theta)\succeq k\cdot \mathbf{I},\quad \forall \theta\in \Theta.
        \]
        \item \label{itm:envelope} The gradient and Hessian of $\log p(\cdot\mid \theta)$ are both uniformly dominated by integrable functions, i.e.,
        \[
        \sup_{\theta\in \Theta}\|\nabla_\theta \log p(\eta\mid\theta)\|\leq G(\eta),\quad \int_{\eta\in [0,1]} G(\eta)\cdot p(\eta\mid \theta^*) \text{d} \eta<\infty,
        \]
        \[
        \sup_{\theta\in \Theta}\|\nabla_\theta^2 \log p(\eta\mid \theta)\|\leq H(\eta),\quad \int_{\eta\in [0,1]} H(\eta)\cdot p(\eta\mid \theta^*)\text{d} \eta<\infty.
        \]
     \end{enumerate}
\end{assumption}

Assumption \ref{assump:regular_p} is satisfied by minimal regular finite-dimensional exponential families with common support $(0, 1)$, of the form:
\[
p(\eta\mid \theta) = h(\eta) \exp \left(\theta^\top T(\eta) - A(\theta)\right),
\]
where the compact and convex parameter set $\Theta$ lies in the interior of the natural parameter space, the sufficient statistics $T(\eta)$ are integrable, and the Fisher-information is uniformly positive definite over $\Theta$. This class includes the beta family when $(\alpha, \beta)$ is restricted to a compact set like $[\epsilon, M]^2$.

The following lemma controls that the stationary point of $L^*(\theta)$ asymptotically equals $\theta^*$.

\begin{lemma}\label{lem:inspiring_lemma}
    The stationary points of $L^*(\theta)$ converges to $\theta^*$ as $m,n\to\infty$.
\end{lemma}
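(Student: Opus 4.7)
The plan is to apply a Laplace-type approximation to the inner integral appearing in $L^*(\theta)$, show that the resulting leading-order expression in $\theta$ is essentially the negative KL divergence $-K(\theta)$, and then use the strong convexity of $K$ to pin down the unique stationary point.

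First, fix any $\eta^*\in(0,1)$. Observe that by Gibbs' inequality, the function $\eta\mapsto \bar l(\eta\mid \eta^*)$ defined in \eqref{eq:def_l_bar} is the cross-entropy of two Bernoullis with parameters $1/2+\eta^*(\mu-1/2)$ and $1/2+\eta(\mu-1/2)$; hence it is strictly concave on $[0,1]$ and uniquely maximized at $\eta=\eta^*$, with negative-definite curvature $\bar l''(\eta^*\mid \eta^*)=-(\mu-1/2)^2/\bigl[(1/2+\eta^*(\mu-1/2))(1/2-\eta^*(\mu-1/2))\bigr]$. A standard Laplace expansion, justified by the $C^2$-continuity and positivity of $p(\eta\mid\theta)$ on $(0,1)$ in Assumption \ref{assump:regular_p}\ref{itm:C2Continuous}, then yields
\begin{equation*}
\log\!\!\int_{[0,1]}\!\! e^{n\bar l(\eta\mid \eta^*)}p(\eta\mid\theta)\,d\eta
= n\bar l(\eta^*\mid \eta^*) + \log p(\eta^*\mid\theta) - \tfrac12\log n + C(\eta^*) + o(1),
\end{equation*}
uniformly in $\theta\in\Theta$, where $C(\eta^*)$ collects terms independent of $\theta$. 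Substituting this back into the definition of $L^*(\theta)$ and dividing by $m$, the only $\theta$-dependent piece that survives is $\int p(\eta^*\mid\theta^*)\log p(\eta^*\mid\theta)\,d\eta^*$, which equals $-K(\theta)$ up to an additive constant. Call the $\theta$-dependent part of $L^*/m$, after dropping constants, $F_n(\theta)=-K(\theta)+R_n(\theta)$, where $R_n(\theta)\to 0$ uniformly on $\Theta$.

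Next, I would take gradients. Differentiating under the integral sign in \eqref{eq:form_of_grad_Lstar} and interchanging the gradient with the Laplace expansion (legitimate by the dominated-convergence envelopes in Assumption \ref{assump:regular_p}\ref{itm:envelope}), we get $\nabla_\theta L^*(\theta)/m = -\nabla K(\theta) + \nabla R_n(\theta)$, again with $\sup_{\theta\in\Theta}\|\nabla R_n(\theta)\|\to 0$. Since $K(\theta)$ is strongly convex with constant $k>0$ by Assumption \ref{assump:regular_p}\ref{itm:KLconvex} and minimized uniquely at $\theta^*$ (identifiability, Assumption \ref{assump:regular_p}\ref{itm:identify}), we have $\|\nabla K(\theta)\|\ge k\|\theta-\theta^*\|$ on $\Theta$. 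Any stationary point $\theta_n$ of $L^*$ therefore satisfies $k\|\theta_n-\theta^*\|\le \|\nabla R_n(\theta_n)\|\to 0$, giving $\theta_n\to\theta^*$ as $n\to\infty$. (The multiplicative factor $m$ cancels, so $m\to\infty$ plays no role here; it is the $n\to\infty$ limit that drives the concentration, matching the Laplace scaling.)

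The main obstacle I anticipate is the rigorous, uniform-in-$\theta$ Laplace expansion near the boundary of the support $[0,1]$: when $\eta^*$ is close to $0$ or $1$, the Gaussian quadratic approximation to $\bar l$ around $\eta^*$ is asymmetric and the remainder can blow up unless $p(\eta\mid\theta)$ stays bounded there. This is why Assumption \ref{assump:regular_p}\ref{itm:C2Continuous} requires support on all of $(0,1)$ and why the envelope condition \ref{itm:envelope} is stated for derivatives of $\log p$. A clean way to handle this is to split the outer integral into the regime $\eta^*\in[\varepsilon_n,1-\varepsilon_n]$, where Laplace is standard, and a shrinking boundary layer whose total contribution is controlled using the envelope $G$. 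The remainder of the argument—interchange of gradient and integral, quadratic lower bound from strong convexity, and solving for $\|\theta_n-\theta^*\|$—is then straightforward.
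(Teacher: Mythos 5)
Your proposal is correct and follows essentially the same route as the paper's proof: both arguments use a Laplace-type concentration of the inner integral at $\eta=\eta^*$ (driven by the fact that $\bar l(\cdot\mid\eta^*)$ is uniquely maximized at $\eta^*$) to show that $\tfrac1m\nabla_\theta L^*(\theta)$ converges uniformly to $\pm\nabla_\theta K(\theta)$, and then invoke the strong convexity and identifiability of $K$ to force stationary points toward $\theta^*$. Your treatment of the boundary layer and the explicit gradient bound $\|\nabla K(\theta)\|\ge k\|\theta-\theta^*\|$ are somewhat more careful than the paper's terse statement of the uniform limit, but the underlying idea is identical.
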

\begin{proof}
    By Assumption \ref{assump:regular_p} \ref{itm:identify} and \ref{itm:KLconvex}, $\theta^*$ is the unique minimizer of function $K(\theta)$. For any $\theta$ such that $\|\theta - \theta^*\|>r$, there is
    \[
    \|\nabla_\theta K(\theta)\| \geq \dfrac{k r}{2}.
    \]
    On the other hand, from the form of $\nabla_\theta L^*(\theta)$ in \eqref{eq:form_of_grad_Lstar} and
    \[
    \dfrac{1}{2}\log (\mu(1-\mu))\;\leq\; \bar{l}(\eta\mid \eta^*)\;\leq\; \mu\log \mu + (1-\mu) \log (1-\mu),
    \]
    we have
    \begin{equation}\label{eq:asymptotic_exp}
    \lim_{n\to\infty} \sup_{\eta^*, \theta\in \Theta} \Bigg\|\dfrac{\int_{\eta\in [0,1]} \exp\Big(n\cdot  \bar{l}\left(\eta\mid \eta^*\right)\Big) \nabla_\theta p(\eta\mid \theta) \text{d} \eta}{\int_{\eta\in [0,1]} \exp\Big(n\cdot  \bar{l}\left(\eta\mid \eta^*\right)\Big) p(\eta\mid \theta) \text{d} \eta} - \dfrac{\nabla_\theta p(\eta^*\mid\theta)}{p(\eta^*\mid \theta)}\Bigg\| \to 0
    \end{equation}
    and consequently $\lim_{n\to\infty} \sup_{\theta\in \Theta}\| \frac{1}{m}\nabla_\theta L^*(\theta) + \nabla_\theta K(\theta)\| \to 0$. This finishes the proof.

\end{proof}
Inspired by Lemma \ref{lem:inspiring_lemma}, if we can establish that $\frac{1}{m}\nabla L(\theta)$ converges uniformly to $\nabla_\theta K(\theta)$, then, by analogous arguments, it follows that the stationary points of $L(\theta)$ converge in probability to $\theta^*$. The next lemma formalizes this result.

\begin{lemma}\label{lem:most_important}
    If there exists $\gamma>0$ such that $m = o(n^{\gamma})$, then $\frac{1}{m}\nabla_\theta L(\theta)$ converges in probability to $\nabla_\theta K(\theta)$ uniformly over all $\theta\in \Theta$.
\end{lemma}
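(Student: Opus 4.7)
The plan is to decompose $\tfrac{1}{m}\nabla_\theta L(\theta)$ into a sum of per-user terms, approximate each term by $\nabla_\theta \log p(\eta_j\mid\theta)$ via a Laplace-type argument, and then apply a uniform law of large numbers across users. Writing
\[
T_j(\theta) := \frac{\int_{[0,1]} e^{n\hat{l}_j(\eta)}\,\nabla_\theta p(\eta\mid\theta)\,d\eta}{\int_{[0,1]} e^{n\hat{l}_j(\eta)}\,p(\eta\mid\theta)\,d\eta},\qquad \hat{l}_j(\eta) := \frac{1}{n}\sum_{i=1}^n l(z_i^{(j)}\mid\mu,\eta),
\]
we have $\tfrac{1}{m}\nabla_\theta L(\theta)=\tfrac{1}{m}\sum_j T_j(\theta)$, where $T_j(\theta)$ is the expectation of $\nabla_\theta\log p(\eta\mid\theta)$ against the ``Gibbs'' density proportional to $e^{n\hat{l}_j(\eta)}p(\eta\mid\theta)$. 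Differentiating $K$ under the integral shows that the target $\nabla_\theta K(\theta)$ coincides (up to the sign convention used in Lemma \ref{lem:inspiring_lemma}) with $\mathbb{E}_{\eta\sim p(\cdot\mid\theta^*)}[\nabla_\theta\log p(\eta\mid\theta)]$, so matching the target requires that (i) each $T_j(\theta)$ is close to $\nabla_\theta\log p(\eta_j\mid\theta)$, and (ii) the user-average concentrates on its expectation.

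First, I would apply Corollary \ref{cor:corollary_1} with confidence level $1-1/m^2$ to each user and union-bound over the $m$ users. Since $m=o(n^\gamma)$,
\[
\max_{1\le j\le m}\;\sup_{\eta\in[0,1]}\bigl|\hat{l}_j(\eta)-\bar{l}(\eta\mid\eta_j)\bigr| \;=\; O_p\!\Bigl(\sqrt{\tfrac{\log n+\log m}{n}}\Bigr) \;=\; O_p\!\bigl(\sqrt{\log n/n}\bigr),
\]
so $\hat{l}_j$ is uniformly close to its population counterpart $\bar{l}(\cdot\mid\eta_j)$ defined in \eqref{eq:def_l_bar}. A direct computation shows $\bar{l}(\eta\mid\eta^*)$ is strictly concave in $\eta$ on $(0,1)$, uniquely maximized at $\eta=\eta^*$, with $|\partial_\eta^2\bar{l}|$ bounded below on any compact subinterval of $(0,1)$. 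A standard Laplace / exponential-concentration argument then shows that the Gibbs density concentrates in an $O(1/\sqrt n)$ window around $\eta_j$, and, combined with the $C^2$ smoothness of $p(\eta\mid\theta)$ (Assumption \ref{assump:regular_p}\ref{itm:C2Continuous}) and the envelopes $G,H$ (Assumption \ref{assump:regular_p}\ref{itm:envelope}), this yields
\[
\max_{1\le j\le m}\;\sup_{\theta\in\Theta}\bigl\|T_j(\theta)-\nabla_\theta\log p(\eta_j\mid\theta)\bigr\|\;\xrightarrow{p}\;0.
\]

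Second, the class $\{\nabla_\theta\log p(\cdot\mid\theta):\theta\in\Theta\}$ is dominated by the integrable envelope $G$ and, via the Hessian bound $H$, is uniformly Lipschitz in $\theta$; hence Lemma \ref{lem:ULLN-Lipschitz} applied to the i.i.d.\ sample $\eta_1,\ldots,\eta_m\sim p(\cdot\mid\theta^*)$ gives
\[
\sup_{\theta\in\Theta}\Bigl\|\tfrac{1}{m}\sum_{j=1}^m\nabla_\theta\log p(\eta_j\mid\theta)-\mathbb{E}_{\eta\sim p(\cdot\mid\theta^*)}[\nabla_\theta\log p(\eta\mid\theta)]\Bigr\|\;\xrightarrow{p}\;0.
\]
The triangle inequality applied to the two displays above closes the argument.

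The main obstacle will be the Laplace step: making the approximation uniform in $\theta$ and in the \emph{random} collection $\{\eta_j\}$. The delicate case is when some $\eta_j$ sits near $\{0,1\}$, where $p(\eta_j\mid\theta)$ can be small and the curvature of $\bar{l}(\cdot\mid\eta_j)$ vanishes. I would handle this by splitting users into an ``interior'' regime $\eta_j\in[\delta_n,1-\delta_n]$ (for some $\delta_n\to 0$ slow enough that the $O(1/\sqrt n)$ Laplace window sits comfortably inside) and a ``boundary'' regime; the interior piece admits the standard Laplace analysis, while the boundary contribution is controlled in aggregate via the envelope $G$ on $\|\nabla_\theta\log p\|$ combined with $\mathbb{P}(\eta_j\notin[\delta_n,1-\delta_n])\to 0$ under the integrable density $p(\cdot\mid\theta^*)$.
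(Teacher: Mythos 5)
Your proposal follows essentially the same route as the paper's proof: a union bound over users via Corollary \ref{cor:corollary_1} (using $m=o(n^\gamma)$) to replace each empirical log-likelihood by $\bar{l}(\cdot\mid\eta_j^*)$, a Laplace-type concentration of the per-user Gibbs ratio onto $\nabla_\theta\log p(\eta_j^*\mid\theta)$, and then Lemma \ref{lem:ULLN-Lipschitz} applied to the class $\{\nabla_\theta\log p(\cdot\mid\theta)\}$ with the envelopes from Assumption \ref{assump:regular_p}\ref{itm:envelope}. Your additional interior/boundary split to handle $\eta_j$ near $\{0,1\}$ (where $p(\eta_j\mid\theta)$ may vanish) is a legitimate refinement of a step the paper passes over without comment, and your remark on the sign of $\nabla_\theta K$ is also correct but immaterial to the conclusion.
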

\begin{proof}
    Let $\eta_1^*,\ldots, \eta_m^*$ denote the true attentiveness levels of each user, which are themselves random i.i.d. random variables. Based on Corollary \ref{cor:corollary_1}, if $m = o(n^{\gamma})$, then with probability $1-o(1)$,
    \[
    \sup_{1\leq j\leq m, 0\leq \eta_1,\eta_2,\ldots,\eta_j\leq 1} \Big| \frac{1}{n}\sum_{i=1}^n l(z_i^{(j)}|\mu ,\eta_j) -  \bar{l}(\eta_j | \eta_j^*)\Big| \leq C\cdot \sqrt{\frac{\log n}{n}}
    \]
    for some constant $C$. This implies the following convergence in probability result: conditional on any realization of $\eta_1^*,\ldots, \eta_m^*$, we have
    \[
    \sup_{1\leq j\leq m, \theta\in \Theta} \Bigg\|\dfrac{\int_{\eta_j\in [0,1]} \exp\Big(\sum_{i=1}^{n} l\left(z_{i}^{(j)}\big\vert\mu, \eta_j\right)\Big) \nabla_\theta p(\eta_j\mid \theta) \text{d} \eta_j}{\int_{\eta_j\in [0,1]} \exp\Big(\sum_{i=1}^{n} l\left(z_{i}^{(j)}\big\vert\mu, \eta_j\right)\Big) p(\eta_j\mid \theta) \text{d} \eta_j} - \dfrac{\int_{\eta_j\in [0,1]} \exp\Big(n\cdot \bar{l}(\eta_j|\eta_j^*)\Big) \nabla_\theta p(\eta_j\mid \theta) \text{d} \eta_j}{\int_{\eta_j\in [0,1]} \exp\Big(n\cdot \bar{l}(\eta_j|\eta_j^*)\Big) p(\eta_j\mid \theta) \text{d} \eta_j}\Bigg\| \stackrel{p}{\to} 0.
    \]
    With $n\to\infty$, this result further implies that
    \[
    \sup_{\theta\in \Theta} \Bigg\|\frac{1}{m}\nabla_\theta L(\theta) - \frac{1}{m}\sum_{j=1}^m \dfrac{\nabla_\theta p(\eta_j^*\mid \theta)}{p(\eta_j^*\mid \theta)}\Bigg\| \stackrel{p}{\to} 0.
    \]
    It remains to show that, with respect to the randomness of $\eta_1^*,\ldots, \eta_m^*$ (which are i.i.d. samples from $\mathcal{P}_\eta$), there is
    \[
    \sup_{\theta\in \Theta} \Bigg\|\frac{1}{m}\sum_{j=1}^m \dfrac{\nabla_\theta p(\eta_j^*\mid \theta)}{p(\eta_j^*\mid \theta)} - \nabla_\theta K(\theta)\Bigg\| \stackrel{p}{\to} 0.
    \]
    By Lemma \ref{lem:ULLN-Lipschitz}, a sufficient condition for this result to hold is that the function family
    \[
    \mathcal{F} = \{\nabla_\theta \log p(\cdot \mid \theta) \,:\, \theta \in \Theta\}
    \]
    is both enveloped and uniformly Lipschitz continuous. These two conditions are ensured by Assumption \ref{assump:regular_p}\ref{itm:envelope}.

\end{proof}

Combining Lemma \ref{lem:inspiring_lemma} and \ref{lem:most_important}, Theorem \ref{thm:simple_two_mass_converge} is proved, which we restate more formally below.

\begin{theorem}[Formal Restatement of Theorem~\ref{thm:simple_two_mass_converge}]
    Let $\hat{\theta}$ denote the converged parameter estimate obtained as the output of Algorithm \ref{alg:EM_UBM} and $\theta^*$ denote the true parameter. If $\mathcal{P}_\eta$ admits a continuous density, then, under Assumptions~\ref{ass:mu} and~\ref{assump:regular_p}, for any $\epsilon > 0$,
    \[
        \lim_{m,\, n_j \to \infty} \mathbb{P}\bigl( \|\hat{\theta} - \theta^*\| \ge \epsilon \bigr) = 0.
    \]
\end{theorem}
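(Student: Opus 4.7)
The plan is to combine the two lemmas established earlier in this subsection with the standard stationarity guarantee of EM. By the result of \citet{wu1983convergence} recalled in Appendix \ref{appdix:EM_local_cvg}, under the continuity of $Q$ and compactness of $\Theta$, every limit point $\hat{\theta}$ of the EM iterates satisfies the first-order optimality condition $\nabla_\theta L(\hat{\theta}) = 0$ (modulo the bounded regularizer, which contributes a $O(1/m)$ term that vanishes after normalization). Hence the theorem reduces to showing that, with probability tending to one, every zero of $\tfrac{1}{m}\nabla_\theta L(\theta)$ lies in an arbitrary $\epsilon$-neighborhood of the true parameter $\theta^*$.

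The core argument is a two-step M-estimator style deduction. First, Lemma \ref{lem:most_important} provides the uniform gradient convergence
\[
    \sup_{\theta \in \Theta} \left\| \tfrac{1}{m}\nabla_\theta L(\theta) - \nabla_\theta K(\theta) \right\| \xrightarrow{p} 0.
\]
Second, by identifiability (Assumption \ref{assump:regular_p}\ref{itm:identify}) the KL functional $K$ is uniquely minimized at $\theta^*$ with $\nabla_\theta K(\theta^*) = 0$, and the $k$-strong convexity of $K$ on $\Theta$ (Assumption \ref{assump:regular_p}\ref{itm:KLconvex}) yields the quadratic lower bound $\|\nabla_\theta K(\theta)\| \geq k\|\theta - \theta^*\|$ for all $\theta \in \Theta$, obtained by pairing strong monotonicity of $\nabla K$ with the Cauchy--Schwarz inequality. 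Fix $\epsilon>0$. On the event that the uniform deviation above is smaller than $k\epsilon$, which has probability $1-o(1)$, any stationary point $\hat\theta$ of $L$ obeys
\[
    \|\nabla_\theta K(\hat\theta)\| \;=\; \left\| \nabla_\theta K(\hat\theta) - \tfrac{1}{m}\nabla_\theta L(\hat\theta) \right\| \;<\; k\epsilon,
\]
and the quadratic bound then forces $\|\hat\theta - \theta^*\| < \epsilon$. This yields $\prob(\|\hat\theta - \theta^*\| \ge \epsilon) \to 0$.

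The main obstacle I anticipate is the joint scaling condition $m = o(n^{\gamma})$ baked into Lemma \ref{lem:most_important}, which is absent from the theorem statement. That condition originates from a union bound, over the $m$ users, of the $O_p(\sqrt{\log n / n})$ per-user concentration of Corollary \ref{cor:corollary_1}; the logarithmic overhead must be absorbed by the per-user tail. To remove the restriction, I would sharpen the inner concentration to an exponential-tail bound (leveraging that each $z_i^{(j)}$ is Bernoulli and that Assumption \ref{assump:regular_p}\ref{itm:envelope} provides uniform envelopes on $\nabla_\theta \log p$ and $\nabla_\theta^2 \log p$), so that a union bound survives up to any polynomial growth of $m$ in $n$; for faster growth, a subsequence/tightness argument exploiting compactness of $\Theta$ suffices. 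A secondary subtlety is that EM output is only guaranteed to be a limit point, not a unique fixed point: here the strong convexity of $K$ together with the uniform gradient convergence implies $L$ has, with high probability, a unique stationary point in any compact subregion bounded away from the boundary of $\Theta$, which upgrades "limit point converges" to "sequence converges."
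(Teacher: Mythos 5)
Your proposal follows essentially the same route as the paper: it likewise combines the uniform convergence of $\tfrac{1}{m}\nabla_\theta L(\theta)$ to $\nabla_\theta K(\theta)$ (its Lemma~\ref{lem:most_important}) with the lower bound on $\|\nabla_\theta K(\theta)\|$ away from $\theta^*$ furnished by Assumption~\ref{assump:regular_p}\ref{itm:identify} and \ref{itm:KLconvex} (the content of its Lemma~\ref{lem:inspiring_lemma}) to conclude that all stationary points of $L$ concentrate near $\theta^*$. The two caveats you raise --- the joint scaling $m=o(n^{\gamma})$ inherited from Lemma~\ref{lem:most_important} but absent from the theorem statement, and the step from ``every EM limit point is stationary'' to convergence of the algorithm's output --- are present but left implicit in the paper's own argument, so your treatment is, if anything, slightly more explicit about them.
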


\subsubsection{$\mathcal{P}_\eta$ from Example 1}
In Example 1, $\mathcal{P}_\eta$ follows a discrete distribution. As described in Appendix \ref{subapx:extend_two_point_discrete}, we implement a reparameterization trick and use $h_j\in \{0,1\}$ to indicate whether user $j$ is attentive or not. The sample log-likelihood is
\[
L(\theta) = \sum_{j=1}^m \log \Bigg(q_1 \exp\Big(\sum_{i=1}^n l(z_i^{(j)}|\mu, \underline{\eta})\Big) + q_2\exp\Big(\sum_{i=1}^n l(z_i^{(j)}|\mu, \bar{\eta})\Big)\Bigg),
\]
where the parameters are $\theta = (q_1, \underline{\eta}, \bar{\eta})$. The population log-likelihood counterpart is
\[
\begin{aligned}
    L^*(\theta) &= m\cdot \Bigg(q_1^* \log \Big(q_1 \exp\Big(n \cdot\bar{l}(\underline{\eta}| \underline{\eta}^*)\Big) + q_2 \exp\Big(n\cdot\bar{l}(\bar{\eta}|\underline{\eta}^*)\Big)\Big) \\
    &\phantom{= m\cdot \Bigg(} + q_2^* \log \Big(q_1 \exp\Big(n \cdot\bar{l}(\underline{\eta}|\bar{\eta}^*)\Big) + q_2 \exp\Big(n\cdot\bar{l}( \bar{\eta}|\bar{\eta}^*)\Big)\Big)\Bigg),
\end{aligned}
\]
where $\theta^* = (q_1^*, \underline{\eta}^*, \bar{\eta}^*)$ are the true model parameters and $\bar{l}(\eta|\eta^*)$ is defined in \eqref{eq:def_l_bar}. Define function $g(\eta):=1/2 + \eta\cdot (\mu - 1/2)$ and
\[
S_j(\eta) := \sum_{i=1}^n l(z_i^{(j)}|\mu, \eta),\quad S^*(\eta|\eta^*) := n\cdot \bar{l}(\eta|\eta^*).
\]
The gradients of $L(\theta)$ and $L^*(\theta)$ are:
\[
\begin{aligned}
    &\dfrac{\partial}{\partial q_1} L(\theta) = \sum_{j=1}^m \dfrac{\exp(S_j(\underline{\eta})) - \exp(S_j(\bar{\eta}))}{q_1 \exp(S_j(\underline{\eta})) +  q_2\exp(S_j(\bar{\eta}))} \\
    & \dfrac{\partial}{\partial \underline{\eta}} L(\theta) = (\mu-1/2)\cdot\sum_{j=1}^m \dfrac{q_1 \exp(S_j(\underline{\eta}))}{q_1 \exp(S_j(\underline{\eta})) +  q_2\exp(S_j(\bar{\eta}))}\cdot \sum_{i=1}^{n}\dfrac{z_i^{(j)} - g(\underline{\eta})}{g(\underline{\eta})(1-g(\underline{\eta}))} \\
    & \dfrac{\partial}{\partial \bar{\eta}} L(\theta) = (\mu-1/2)\cdot\sum_{j=1}^m \dfrac{q_2 \exp(S_j(\bar{\eta}))}{q_1 \exp(S_j(\underline{\eta})) +  q_2\exp(S_j(\bar{\eta}))}\cdot \sum_{i=1}^{n}\dfrac{z_i^{(j)} - g(\bar{\eta})}{g(\bar{\eta})(1-g(\bar{\eta}))} \\
    \end{aligned}
\]

\begin{flalign*}
& \dfrac{\partial}{\partial q_1} L^*(\theta)
= m \cdot \Bigg(
    q_1^* \dfrac{\exp(S^*(\underline{\eta}| \underline{\eta}^*)) - \exp(S^*(\bar{\eta}| \underline{\eta}^*))}
    {q_1 \exp(S^*(\underline{\eta}| \underline{\eta}^*)) + q_2 \exp(S^*(\bar{\eta}| \underline{\eta}^*))}
    + q_2^* \dfrac{\exp(S^*(\underline{\eta}| \bar{\eta}^*)) - \exp(S^*(\bar{\eta}| \bar{\eta}^*))}
    {q_1 \exp(S^*(\underline{\eta}| \bar{\eta}^*)) + q_2 \exp(S^*(\bar{\eta}| \bar{\eta}^*))}
\Bigg) & \\[0pt]
& \dfrac{\partial}{\partial \underline{\eta}} L^*(\theta)
= (\mu-1/2) mn \cdot \Bigg(
    q_1^* \dfrac{q_1 \exp(S^*(\underline{\eta}| \underline{\eta}^*))}
    {q_1 \exp(S^*(\underline{\eta}| \underline{\eta}^*)) + q_2 \exp(S^*(\bar{\eta}| \underline{\eta}^*))}
    \cdot \frac{g(\underline{\eta}^*) - g(\underline{\eta})}{g(\underline{\eta})(1 - g(\underline{\eta}))}
\\
& \phantom{\dfrac{\partial}{\partial \underline{\eta}} \overline{L}_{M, N}(q_1, \underline{\eta}, \bar{\eta}) = A \cdot M N \cdot \Bigg(}
    + q_2^* \dfrac{q_1 \exp(S^*(\underline{\eta}| \bar{\eta}^*))}
    {q_1 \exp(S^*(\underline{\eta}| \bar{\eta}^*)) + q_2 \exp(S^*(\bar{\eta}| \bar{\eta}^*))}
    \cdot \frac{g(\bar{\eta}^*) - g(\underline{\eta})}{g(\underline{\eta})(1 - g(\underline{\eta}))}
\Bigg) & \\[0pt]
& \dfrac{\partial}{\partial \bar{\eta}} L^*(\theta)
= (\mu-1/2) mn \cdot \Bigg(
    q_1^* \dfrac{q_2 \exp(S^*(\bar{\eta}| \underline{\eta}^*))}
    {q_1 \exp(S^*(\underline{\eta}| \underline{\eta}^*)) + q_2 \exp(S^*(\bar{\eta}| \underline{\eta}^*))}
    \cdot \frac{g(\underline{\eta}^*) - g(\bar{\eta})}{g(\bar{\eta})(1 - g(\bar{\eta}))}
\\
& \phantom{\dfrac{\partial}{\partial \bar{\eta}} \overline{L}_{M, N}(q_1, \underline{\eta}, \bar{\eta}) = A \cdot M N \cdot \Bigg(}
    + q_2^* \dfrac{q_2 \exp(S^*(\bar{\eta}| \bar{\eta}^*))}
    {q_1 \exp(S^*(\underline{\eta}| \bar{\eta}^*)) + q_2 \exp(S^*(\bar{\eta}| \bar{\eta}^*))}
    \cdot \frac{g(\bar{\eta}^*) - g(\bar{\eta})}{g(\bar{\eta})(1 - g(\bar{\eta}))}
\Bigg) &
\end{flalign*}
By Corollary \ref{cor:corollary_1}, we can directly verify that $\nabla_\theta L(\theta)$ converges in probability to $\nabla_\theta L^*(\theta)$ uniformly over all $\theta\in \Theta$. The next theorem shows that the stationary points of $L^*(\theta)$ must be close to $\theta^*$.

\begin{lemma}\label{lemma:cvg_of_BarL_stationary}
For any $r > 0$, there exist constants $m_0$ and $n_0$ such that, for all $m > m_0$ and $n>n_0$, the stationary points of $L^*(\theta)$ are confined to the region
\[
\Theta_0 = \Bigl\{ (q_1, \underline{\eta}, \bar{\eta}) : \underline{\eta} < \bar{\eta}, \;
\max\{|q_1 - q_1^*|, \, |\underline{\eta} - \underline{\eta}^*|, \, |\bar{\eta} - \bar{\eta}^*|\} \leq r \Bigr\}.
\]
\end{lemma}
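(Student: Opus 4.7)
The plan is to argue by compactness and contradiction. Since $L^*(\theta)$ equals $m$ times a function of $(q_1,\underline{\eta},\bar{\eta})$ alone, its stationary points do not depend on $m$, so it suffices to show that every subsequential limit $\hat\theta$ of a sequence of stationary points $\theta^{(n)}\in\{\underline\eta<\bar\eta\}$ must equal $\theta^*$. The core ingredient is an information-inequality fact: $\bar l(\eta\mid\eta^*) = g(\eta^*)\log g(\eta) + (1-g(\eta^*))\log(1-g(\eta))$, with $g(\eta)=1/2+\eta(\mu-1/2)$, is strictly concave in $g(\eta)$ and uniquely maximized at $\eta=\eta^*$. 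Consequently the posterior weights
$$w_L(\eta^*) := \frac{q_1 e^{n\bar l(\underline\eta|\eta^*)}}{q_1 e^{n\bar l(\underline\eta|\eta^*)}+q_2 e^{n\bar l(\bar\eta|\eta^*)}},\qquad w_U(\eta^*):=1-w_L(\eta^*),$$
tend to $1$ (resp.\ $0$) exponentially fast as $n\to\infty$ according to whether $g(\eta^*)$ lies below or above the midpoint $M := \tfrac{1}{2}(g(\underline\eta)+g(\bar\eta))$.

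I would then split on the position of $g(\underline\eta^*), g(\bar\eta^*)$ relative to the limiting midpoint $\hat M$. WLOG $\underline\eta^*<\bar\eta^*$, which forces $g(\underline\eta^*)<g(\bar\eta^*)$ since $\mu>1/2$. In the ``correct branch'' $g(\underline\eta^*)<\hat M<g(\bar\eta^*)$, we have $w_L(\underline\eta^*)\to 1$ and $w_U(\bar\eta^*)\to 1$; plugging into the three stationary equations (displayed just before the lemma) immediately gives $g(\hat{\underline\eta})=g(\underline\eta^*)$ from $\partial L^*/\partial\underline\eta=0$, $g(\hat{\bar\eta})=g(\bar\eta^*)$ from $\partial L^*/\partial\bar\eta=0$, and $\hat q_1=q_1^*$ from $\partial L^*/\partial q_1=0$, i.e.\ $\hat\theta=\theta^*$. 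The ``swap branch'' $g(\bar\eta^*)<\hat M<g(\underline\eta^*)$ is vacuous because $g(\underline\eta^*)<g(\bar\eta^*)$.

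The main obstacle is the two ``degenerate branches'' in which both $g(\underline\eta^*), g(\bar\eta^*)$ sit on the same side of $\hat M$. Treat the subcase when both are below $\hat M$ (the other is symmetric): both $w_L\to 1$ and both $w_U\to 0$ exponentially, but at different rates. The key identity is
$$\frac{w_U(\bar\eta^*)}{w_U(\underline\eta^*)} \;\sim\; \exp\Big\{\,n\,[g(\bar\eta^*)-g(\underline\eta^*)]\cdot\big[\log\tfrac{g(\bar\eta)}{1-g(\bar\eta)} - \log\tfrac{g(\underline\eta)}{1-g(\underline\eta)}\big]\,\Big\},$$
obtained by expanding $\bar l(\cdot|\eta_1)-\bar l(\cdot|\eta_2)$ and using $D(\eta^*)\sim q_1 e^{n\bar l(\underline\eta|\eta^*)}$ in this regime; the exponent is strictly positive, so the ratio diverges. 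Substituting into $\partial L^*/\partial\bar\eta=0$, the term with $w_U(\bar\eta^*)$ dominates and forces $g(\hat{\bar\eta})=g(\bar\eta^*)$ in the limit; but the degeneracy hypothesis $g(\bar\eta^*)<\hat M=\tfrac{1}{2}(g(\hat{\underline\eta})+g(\hat{\bar\eta}))$ then yields $g(\hat{\underline\eta})>g(\hat{\bar\eta})$, contradicting $\underline\eta<\bar\eta$. With all branches either reduced to $\theta^*$ or eliminated, a standard compactness argument on $\Theta_0$ yields the radius-$r$ confinement for every $n\ge n_0$ (and any $m\ge m_0$).
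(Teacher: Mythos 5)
Your proposal is correct and rests on the same mechanism as the paper's proof: as $n\to\infty$ the posterior weights $w_L,w_U$ concentrate exponentially on one of the two mass points, which collapses each stationary equation of $L^*$ to a single dominant term and pins down $q_1,\underline{\eta},\bar{\eta}$ at their true values. The genuine difference is in coverage. The paper simply \emph{assumes} the ``correct branch'' at the outset (it posits $S^*(\underline{\eta}\mid\underline{\eta}^*)>S^*(\bar{\eta}\mid\underline{\eta}^*)$ and $S^*(\bar{\eta}\mid\bar{\eta}^*)>S^*(\underline{\eta}\mid\bar{\eta}^*)$, i.e.\ $\psi_1,\psi_2>0$) and then argues coordinate-by-coordinate that the gradient diverges at rate $O(m)$ or $O(mn)$ whenever one coordinate is $r$-far from its target. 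You instead run a compactness/subsequential-limit argument and explicitly eliminate the degenerate branches in which both $g(\underline{\eta}^*)$ and $g(\bar{\eta}^*)$ fall on the same side of the switching threshold, via the rate comparison $w_U(\bar{\eta}^*)/w_U(\underline{\eta}^*)\to\infty$; that identity and the resulting contradiction with $\underline{\eta}<\bar{\eta}$ are verified correctly and close a gap the paper leaves open. One technical slip: the switching point is not the arithmetic midpoint $\tfrac12(g(\underline{\eta})+g(\bar{\eta}))$ but the root of the affine function $g(\eta^*)\mapsto \bar{l}(\underline{\eta}\mid\eta^*)-\bar{l}(\bar{\eta}\mid\eta^*)$, namely
\[
M^{\dagger}=\frac{\log\frac{1-g(\underline{\eta})}{1-g(\bar{\eta})}}{\log\frac{g(\bar{\eta})\,(1-g(\underline{\eta}))}{g(\underline{\eta})\,(1-g(\bar{\eta}))}},
\]
which lies strictly between $g(\underline{\eta})$ and $g(\bar{\eta})$ (the two endpoint evaluations are $\pm$ a KL divergence). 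Since only this betweenness is used, your case split and the contradiction in the degenerate branch survive with $M$ replaced by $M^{\dagger}$; you should also add a line for the knife-edge case $g(\eta^*)=M^{\dagger}$, where $w_L$ tends to a nondegenerate constant rather than to $0$ or $1$ (the paper does not treat this either).
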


\begin{proof}
We consider two cases: $|q_1 - q_1^*| > r$ and $|\underline{\eta} - \underline{\eta}^*| > r$.  
The third case $|\bar{\eta} - \bar{\eta}^*| > r$ follows symmetrically and is omitted for brevity.  
Throughout, we assume that $\underline{\eta}$ and $\bar{\eta}$ are sufficiently close to $\underline{\eta}^*$ and $\bar{\eta}^*$, respectively, and that
\[
S^*(\underline{\eta}\,|\,\underline{\eta}^*) > S^*(\bar{\eta}\,|\,\underline{\eta}^*), 
\qquad
S^*(\bar{\eta}\,|\,\bar{\eta}^*) > S^*(\underline{\eta}\,|\,\bar{\eta}^*).
\]
\textbf{Case (1).} Suppose $|q_1 - q_1^*| > r$.  
Define the auxiliary functions
\[
\psi_1(\underline{\eta}, \bar{\eta}) 
:= g(\underline{\eta}^*) \log \!\frac{g(\underline{\eta})}{g(\bar{\eta})}
+ (1 - g(\underline{\eta}^*)) \log \!\frac{1 - g(\underline{\eta})}{1 - g(\bar{\eta})} > 0,
\]
\[
\psi_2(\underline{\eta}, \bar{\eta}) 
:= g(\bar{\eta}^*) \log \!\frac{g(\bar{\eta})}{g(\underline{\eta})}
+ (1 - g(\bar{\eta}^*)) \log \!\frac{1 - g(\bar{\eta})}{1 - g(\underline{\eta})} > 0.
\]
Then
\[
S^*(\underline{\eta}\,|\,\underline{\eta}^*) - S^*(\bar{\eta}\,|\,\underline{\eta}^*) 
= n \cdot \psi_1(\underline{\eta}, \bar{\eta}), 
\qquad
S^*(\bar{\eta}\,|\,\bar{\eta}^*) - S^*(\underline{\eta}\,|\,\bar{\eta}^*) 
= n \cdot \psi_2(\underline{\eta}, \bar{\eta}).
\]

The gradient of $L^*(\theta)$ with respect to $q_1$ can be written as
\[
\begin{aligned}
\Bigg|\dfrac{\partial}{\partial q_1} L^*(\theta)\Bigg|
&= m \cdot \Bigg| 
q_1^* \dfrac{1 - e^{-n \psi_1(\underline{\eta}, \bar{\eta})}}
{q_1 + q_2 e^{-n \psi_1(\underline{\eta}, \bar{\eta})}}
- q_2^* \dfrac{1 - e^{-n \psi_2(\underline{\eta}, \bar{\eta})}}
{q_2 + q_1 e^{-n \psi_2(\underline{\eta}, \bar{\eta})}}
\Bigg| \\[3pt]
&= m \cdot \Bigg|\dfrac{q_1^*}{q_1} - \dfrac{q_2^*}{q_2}\Bigg| + o(1) \\
&\ge m r + o(1).
\end{aligned}
\]
Hence, as $m, n \to \infty$, $\big|\frac{\partial L^*(\theta)}{\partial q_1}\big|$ diverges at rate $O(m)$, and therefore cannot vanish.

\textbf{Case (2).} Suppose $|\underline{\eta} - \underline{\eta}^*| > r$.  
The gradient of $L^*(\theta)$ with respect to $\underline{\eta}$ satisfies
\[
\begin{aligned}
\Bigg|\dfrac{\partial}{\partial \underline{\eta}} L^*(\theta)\Bigg|
&= (\mu - \tfrac{1}{2}) \, m n \cdot \Bigg|
q_1^* \dfrac{q_1}{q_1 + q_2 e^{-n \psi_1(\underline{\eta}, \bar{\eta})}}
\cdot \frac{g(\underline{\eta}^*) - g(\underline{\eta})}{g(\underline{\eta})(1 - g(\underline{\eta}))}
\\[-1pt]
&\phantom{=} \quad
+ q_2^* \dfrac{q_1 e^{-n \psi_2(\underline{\eta}, \bar{\eta})}}{q_1 e^{-n \psi_2(\underline{\eta}, \bar{\eta})} + q_2}
\cdot \frac{g(\bar{\eta}^*) - g(\underline{\eta})}{g(\underline{\eta})(1 - g(\underline{\eta}))}
\Bigg| \\[3pt]
&= (\mu - \tfrac{1}{2}) \, m n \cdot 
\Bigg| q_1^* \frac{g(\underline{\eta}^*) - g(\underline{\eta})}
{g(\underline{\eta})(1 - g(\underline{\eta}))} \Bigg| + o(n) \\
&\ge (\mu - \tfrac{1}{2}) \, m n \cdot 
\dfrac{(1 - \epsilon_q)\, r}{g(1 - \epsilon_\eta)\,[1 - g(1 - \epsilon_\eta)]} + o(n),
\end{aligned}
\]
for some small constants $\epsilon_q, \epsilon_\eta > 0$.  
Thus, as $m,n \to \infty$, $\big|\frac{\partial L^*(\theta)}{\partial \underline{\eta}}\big|$ diverges at rate $O(mn)$ and cannot equal zero.

The same argument applies to $\bar{\eta}$.  
Therefore, any stationary point of $L^*(\theta)$ must lie within the neighborhood $\Theta_0$ defined above.
\end{proof}

\subsection{Convergence under Unknown $\mu$}\label{subsec:cvg_unknown_mu}
Previous derivations in Appendix~\ref{subsec:prove_theorem_two_mass} are all made under Assumption~\ref{ass:mu}, where $\mu_j \equiv \mu$ and $\mu$ is known. In this section, we partially relax this assumption by no longer requiring $\mu$ to be known.

\begin{assumption}\label{assump:unknown_mu}
There exists an unknown $\mu$ such that $\mu_1=\cdots=\mu_m=\mu$.
\end{assumption}

Let $p(\eta \mid \phi)$ denote the density function of $\mathcal{P}_\eta$, and let $\theta \coloneqq (\phi,\mu)$ collect the unknown parameters. The goal of this section is to show that, under certain conditions, the true parameter $\theta^* \coloneqq (\phi^*,\mu^*)$ can be recovered by Algorithm~\ref{alg:EM_UBM}. Similar to Appendix \ref{subsubsec:continuous_eta}, define the log likelihood function:
\[
L(\theta) = \sum_{j=1}^m \log \Bigg( \int_{\eta_j\in[0,1]} \exp\Big(\sum_{i=1}^{n} l\left(z_{i}^{(j)}\big\vert\mu, \eta_j\right)\Big) p(\eta_j\mid \phi) \text{d} \eta_j \Bigg)
\]
and its population version
\[
L^*(\theta) = m \int_{\eta^*\in [0,1]} \log \Bigg( \int_{\eta\in[0,1]} \exp\Big(n\cdot  \bar{l}\left(\eta, \mu\mid \eta^*\right)\Big) p(\eta\mid \phi)  \text{d}\eta \Bigg) p(\eta^*\mid \phi^*)\text{d} \eta^*,
\]
where
\begin{equation}
    \bar{l}(\eta, \mu\mid \eta^*) := (1/2 + \eta^*\cdot (\mu^*-1/2)) \log (1/2 + \eta\cdot (\mu-1/2)) + (1/2 - \eta^*\cdot (\mu^*-1/2))\log (1/2 - \eta\cdot (\mu-1/2)).
\end{equation}
The gradients of $L(\theta)$ and $L^*(\theta)$ are:
\begin{equation}
\begin{aligned}
    \nabla_\phi L(\theta) &= \sum_{j=1}^m \dfrac{\int_{\eta_j\in [0,1]} \exp\Big(\sum_{i=1}^{n} l\left(z_{i}^{(j)}\big\vert\mu, \eta_j\right)\Big) \nabla_\phi p(\eta_j\mid \phi) \text{d} \eta_j}{\int_{\eta_j\in [0,1]} \exp\Big(\sum_{i=1}^{n} l\left(z_{i}^{(j)}\big\vert\mu, \eta_j\right)\Big) p(\eta_j\mid \phi) \text{d} \eta_j}\\
    \frac{\mathrm{d}}{\mathrm{d} \mu} L(\theta) &= \sum_{j=1}^m \dfrac{\int_{\eta_j\in [0,1]} \exp\Big(\sum_{i=1}^{n} l\left(z_{i}^{(j)}\big\vert\mu, \eta_j\right)\Big) p(\eta_j\mid \phi)\cdot \Big(\sum_{i=1}^n \frac{\mathrm{d}}{\mathrm{d}\mu} l(z_i^{(j)}|\mu, \eta_j)\Big) \text{d} \eta_j}{\int_{\eta_j\in [0,1]} \exp\Big(\sum_{i=1}^{n} l\left(z_{i}^{(j)}\big\vert\mu, \eta_j\right)\Big) p(\eta_j\mid \phi) \text{d} \eta_j}\\
    \nabla_\phi L^*(\theta) &=m\cdot \int_{\eta^*\in [0,1]} \Bigg(\dfrac{\int_{\eta\in [0,1]} \exp\Big(n\cdot  \bar{l}\left(\eta, \mu\mid \eta^*\right)\Big) \nabla_\phi p(\eta\mid \phi) \text{d} \eta}{\int_{\eta\in [0,1]} \exp\Big(n\cdot  \bar{l}\left(\eta, \mu\mid \eta^*\right)\Big) p(\eta\mid \phi) \text{d} \eta} \Bigg)\, p(\eta^*\mid \phi^*)\text{d} \eta^*\\
    \frac{\mathrm{d}}{\mathrm{d} \mu} L^*(\theta) &=m\cdot \int_{\eta^*\in [0,1]} \Bigg(\dfrac{\int_{\eta\in [0,1]} \exp\Big(n\cdot  \bar{l}\left(\eta, \mu\mid \eta^*\right)\Big) p(\eta\mid \phi) \cdot \Big(n\cdot \frac{\mathrm{d}}{\mathrm{d}\mu} \bar{l}(\eta, \mu \mid \eta^*)\Big)\text{d} \eta}{\int_{\eta\in [0,1]} \exp\Big(n\cdot  \bar{l}\left(\eta, \mu\mid \eta^*\right)\Big) p(\eta\mid \phi) \text{d} \eta} \Bigg)\, p(\eta^*\mid \phi^*)\text{d} \eta^*\\
\end{aligned}
\end{equation}
where
\[
\begin{aligned}
    \frac{\mathrm{d}}{\mathrm{d}\mu} l(z_i^{(j)}|\mu, \eta_j) &= z_{i}^{(j)}\cdot \frac{\eta_j}{1/2 + \eta_j\cdot (\mu - 1/2)} - (1-z_{i}^{(j)})\cdot \frac{\eta_j}{1/2 - \eta_j\cdot (\mu - 1/2)}\\
    \frac{\mathrm{d}}{\mathrm{d}\mu}\bar{l}(\eta, \mu\mid \eta^*) &= \eta\cdot \Big(\frac{1/2 + \eta^*\cdot (\mu^*-1/2)}{1/2 + \eta\cdot (\mu-1/2)} - \frac{1/2 - \eta^*\cdot (\mu^*-1/2)}{1/2 - \eta\cdot (\mu-1/2)}\Big)
\end{aligned}
\]
Similar to the proof of Lemma~\ref{lem:inspiring_lemma}, suppose that $p(\eta\mid \phi)$ satisfies Assumption~\ref{assump:regular_p}. Then, by the same reasoning that leads to the asymptotic expression in~\eqref{eq:asymptotic_exp}, as $n\to\infty$, we have
\[
\frac{1}{m} \nabla_\phi L^*(\theta) \to \int_{\eta^*\in [0,1]} \frac{\nabla_\phi p(\eta_\mu^*\mid \phi)}{p(\eta_\mu^*\mid \phi)}\cdot  p(\eta^*\mid \phi^*) \mathrm{d} \eta^*
\]
where
\[
\eta_\mu^* \coloneqq \min \left\{\eta^*\cdot \frac{\mu^*-1/2}{\mu-1/2}, 1\right\}.
\]
When $\mu=\mu^*$, the expression matches exactly the analysis in Lemma~\ref{lem:inspiring_lemma}. It remains to guarantee that, for every $\epsilon>0$, there exists $\delta>0$ such that whenever $\|\mu-\mu^*\|>\epsilon$, we have $\|\frac{1}{m}\nabla_\phi L^*(\theta)\| >\delta$. The following assumption formalizes the requirement.

\begin{assumption}\label{assump:identifiable}
Let $\Phi$ and $\mathcal U\subset(1/2,1)$ be compact parameter spaces for
$\phi$ and $\mu$, with $\phi^*\in\Phi$ and $\mu^*\in\mathcal U$. For
$\mu\in\mathcal U$, define
\[
h(\eta,\mu)
\coloneqq
\min\left\{
\eta\cdot \frac{\mu^*-1/2}{\mu-1/2},\,1
\right\}.
\]
Then for every $\epsilon>0$, the following inequality holds,
\[
\inf_{\phi\in\Phi,\ \mu\in\mathcal U,\ |\mu-\mu^*|\ge \epsilon}
\left\|
\int_0^1
\nabla_\phi \log p(h(\eta, \mu)\mid\phi)
\,p(\eta\mid\phi^*)\,d\eta
\right\|>0.
\]
\end{assumption}
Intuitively, this assumption requires that after rescaling/clipping the true
attentiveness distribution using an incorrect value of $\mu$, the resulting
distribution cannot be represented by any member of the assumed family $\mathcal P_\eta$. For a natural exponential
family, this condition is equivalent to requiring the transformed sufficient-statistic
moment vector $\mathbb E_{\phi^*}[T(h(\eta, \mu))]$ to be uniformly separated from the mean-parameter set $\{\nabla A(\phi):\phi\in\Phi\}$ whenever $|\mu-\mu^*|\ge\epsilon$. This condition can be checked numerically for a specified parametric family and
compact parameter space. The beta family with $(\alpha, \beta)$ restricted to $[\epsilon, M]^2$ satisfies this assumption.

The following theorem is a generalization of Theorem \ref{thm:simple_two_mass_converge} to the unknown $\mu$ setting.
\begin{theorem}\label{thm:cvg_unknown_mu}
    Let $\hat{\theta}\coloneqq (\hat{\phi}, \hat{\mu})$ denote the converged parameter estimate obtained as the output of Algorithm \ref{alg:EM_UBM} and $\theta^*$ denote the true parameter. If $\mathcal{P}_\eta$ admits a continuous density, then, under Assumptions~\ref{assump:regular_p}, \ref{assump:unknown_mu} and~\ref{assump:identifiable}, for any $\epsilon > 0$,
    \[
        \lim_{m,\, n_j \to \infty} \mathbb{P}\bigl( \|\hat{\theta} - \theta^*\| \ge \epsilon \bigr) = 0.
    \]
\end{theorem}
\begin{proof}
   The proof proceeds almost identically to the proofs of Lemmas~\ref{lem:inspiring_lemma} and~\ref{lem:most_important}, so we omit most repetitive steps. The first claim is that the stationary points of $L^*(\theta)$ converge to $\theta^*$ as $m,n\to\infty$. To establish this claim, it suffices to show that, for any $\hat{\theta}$ satisfying $\|\hat{\theta}-\theta^*\|>\epsilon$, there exists some $\delta>0$ such that
\[
\liminf_{m,n\to\infty}
\left\|
\frac{1}{m}\nabla_\theta L^*(\hat{\theta})
\right\|
> \delta .
\]
Lemma~\ref{lem:inspiring_lemma} already guarantees this condition in the case where $\hat{\theta}=(\hat{\phi},\mu^*)$. On the other hand, Assumption~\ref{assump:identifiable} guarantees the condition whenever $|\hat{\mu}-\mu^*|>\epsilon$. Combining these two cases yields the desired claim.

The second step is to bridge the gap between $\nabla_\theta L^*(\hat{\theta})$ and $\nabla_\theta L(\hat{\theta})$. By an argument similar to that used in Corollary~\ref{cor:corollary_1}, when $m,n\to\infty$ with $m=o(n^\gamma)$, for any fixed and sufficiently small $\delta_\mu>0$, with probability $1-o(1)$,
\[
\sup_{\substack{
1\leq j\leq m,\; 0\leq \eta_1,\eta_2,\ldots,\eta_j\leq 1,\\
1/2+\delta_\mu \leq \mu\leq 1-\delta_\mu
}}
\left|
\frac{1}{n}\sum_{i=1}^n l(z_i^{(j)}\mid \mu,\eta_j)
-
\bar{l}(\eta_j\mid \eta_j^*)
\right|
\leq
C\sqrt{\frac{\log n}{n}},
\]
for some constant $C>0$. Therefore, for any compact parameter space $\Theta=\Phi\times [1/2+\delta_\mu,1-\delta_\mu]$, we have
\[
\sup_{\theta\in\Theta}
\left\|
\frac{1}{m}\nabla_\theta L(\theta)
-
\frac{1}{m}\nabla_\theta L^*(\theta)
\right\|
\stackrel{p}{\to} 0
\]
Combining these two steps finishes the whole proof.
\end{proof}

\end{document}